\documentclass{article}

\PassOptionsToPackage{numbers}{natbib}
\usepackage[preprint]{neurips_2026}

\usepackage[utf8]{inputenc} 
\usepackage[T1]{fontenc}    
\usepackage{hyperref}       
\usepackage{url}            
\usepackage{booktabs}       
\usepackage{amsfonts}       
\usepackage{nicefrac}       
\usepackage{microtype}      
\usepackage{xcolor}         
\usepackage{wrapfig}
\usepackage{bm}
\usepackage{amsmath}
\usepackage{amssymb}
\usepackage{mathtools}
\usepackage{amsthm}
\usepackage{float}

\usepackage[capitalize,noabbrev]{cleveref}

\theoremstyle{plain}
\usepackage{hyperref}
\usepackage{url}
\usepackage{wrapfig}
\usepackage{makecell}
\usepackage{amsthm}
\usepackage{comment}
\usepackage{refcount}
\usepackage{graphicx}
\usepackage{multirow}
\usepackage{amsmath, amssymb}
\usepackage[utf8]{inputenc} 
\usepackage[T1]{fontenc}    
\usepackage{hyperref}       
\usepackage{url}            
\usepackage{booktabs}       
\usepackage{amsfonts}       
\usepackage{nicefrac}       
\usepackage{microtype}      
\usepackage{xcolor}         
\usepackage{bbm}
\usepackage{subcaption}

\newcolumntype{?}{!{\vrule width 1pt}}
\newcommand{\argmin}{\mathop{\mathrm{argmin}}} 
\newcommand{\argmax}{\mathop{\mathrm{argmax}}} 
\newtheorem{theorem}{Theorem}[section]

\newtheorem{corollary}[theorem]{Corollary}

\newtheorem{assumption}[theorem]{Assumption}
\usepackage[textsize=tiny]{todonotes}

\title{MahaVar: OOD Detection via Class-wise Mahalanobis Distance Variance under Neural Collapse}

\author{%
  Donghwan Kim \\
  Department of Industrial Engineering\\
  Yonsei University\\
  Seoul, South Korea \\
  \texttt{cain0709@yonsei.ac.kr} \\
  \And
  Hyunsoo Yoon$^\dagger$ \\
  Department of Industrial Engineering\\
  Yonsei University\\
  Seoul, South Korea \\
  \texttt{hs.yoon@yonsei.ac.kr} \\
}

\begin{document}

\maketitle
\renewcommand{\thefootnote}{\fnsymbol{footnote}}
\footnotetext[2]{Corresponding author.}

\begin{abstract}
Out-of-distribution (OOD) detection is a critical component for ensuring the reliability of deep neural networks in safety-critical applications. In this work, we present a key empirical observation: for in-distribution (ID) samples, class-wise Mahalanobis distances exhibit a pronounced sharp minimum structure, where the distance to the nearest class is small while distances to all other classes remain large, resulting in high variance across classes. In contrast, OOD samples tend to exhibit a less pronounced sharp minimum structure, producing comparatively lower variance across classes. We further provide a theoretical analysis grounding this observation in Neural Collapse geometry: under relaxed Neural Collapse assumptions on within-class compactness and inter-class separation, ID samples are shown to structurally exhibit high class-wise distance variance, offering a theoretical basis for its use as an OOD score. Motivated by this observation and its theoretical backing, we propose MahaVar, a simple and effective post-hoc OOD detector that augments the Mahalanobis distance with a class-wise distance variance term. Following the OpenOOD v1.5 benchmark protocol, MahaVar achieves state-of-the-art performance on CIFAR-100 and ImageNet, with consistent improvements in both AUROC and FPR@95 over existing Mahalanobis-based methods across all benchmarks.
\end{abstract}

\section{Introduction}
\label{Intro}

Deploying deep neural networks in real-world environments inevitably raises a critical challenge: models trained under a closed-world assumption often encounter test samples that fall outside their training distribution, leading to overconfident and potentially catastrophic predictions. Out-of-distribution (OOD) detection addresses this challenge by identifying such samples before they cause erroneous decisions, making it an essential component for building reliable and trustworthy AI systems \cite{yang2024generalized}. The importance of OOD detection has been widely recognized across a range of safety-critical domains, including autonomous driving \cite{shoeb2025out}, medical imaging \cite{hong2024out}, and industrial inspection \cite{han2022out}, where the consequences of model failure can be severe. OOD detection methods generally build upon pre-trained neural networks and can be broadly categorized into two families. The first family consists of training-based methods~\citep{du2022vos, zhu2023diversified}, which require either access to auxiliary OOD datasets or modification of the training objective to explicitly shape the decision boundary between ID and OOD data. However, such approaches are inherently constrained by the quality and coverage of the available outlier data, and retraining a model for every new deployment scenario is often impractical. The second family consists of post-hoc methods, which leverage the network's representations~\citep{sun2022out,ren2021simple,lee2018simple, muller2025mahalanobis++, park2023nearest, liu2025detecting}, output logits~\citep{hendrycks2017a, liu2020energy, wang2022vim, Liu_2023_CVPR}, or activation shaping~\citep{sun2021react,xu2023vra,djurisic2023extremely,xu2024scaling} of a pre-trained classifier without any additional training or access to OOD data. This training-free nature makes post-hoc methods significantly more practical for real-world deployment, where modifying the underlying model is often infeasible.

Among post-hoc methods, Mahalanobis distance-based detectors~\citep{lee2018simple, muller2025mahalanobis++} have demonstrated strong OOD detection performance by computing, for each test sample, the minimum Mahalanobis distance to the class-conditional Gaussian distributions fitted on training data, where each class is characterized by its mean vector and a shared tied covariance matrix. Building upon this framework, we identify a key empirical observation: for ID samples, the sorted class-wise Mahalanobis distances exhibit a pronounced sharp minimum structure, in which the distance to the nearest class is substantially smaller than the distances to all remaining classes. In contrast, OOD samples tend to exhibit a less sharp minimum structure across classes, resulting in lower variance. This structural asymmetry between ID and OOD samples motivates the use of class-wise distance variance as a complementary and discriminative OOD signal.

We explain this phenomenon through the lens of Neural Collapse~\citep{papyan2020prevalence}, a geometric phenomenon observed during the terminal phase of deep network training, wherein penultimate features of each class collapse toward their class mean, and the class means arrange themselves into a simplex Equiangular Tight Frame (ETF) with equal inter-class distances and angles. Under relaxed Neural Collapse assumptions, we theoretically analyze the class-wise variance of Mahalanobis distances for ID samples and provide a principled explanation for why the sharp minimum structure emerges. We further discuss conditions under which OOD samples exhibit lower class-wise distance variance, offering a geometric justification for the proposed score.

Motivated by this observation and its theoretical grounding, we propose MahaVar, a simple yet effective post-hoc OOD score that builds upon Mahalanobis++~\citep{muller2025mahalanobis++} by augmenting its nearest class Mahalanobis distance with a class-wise distance variance term, addressing the remaining room for improvement in this state-of-the-art framework. The key insight is that by jointly considering both where a sample lies relative to its nearest class and how dispersed its distances are across all classes, MahaVar captures the richer geometric structure of the feature space in a way that neither term alone can achieve. To validate the effectiveness of MahaVar, we conduct extensive experiments following the benchmark protocol proposed in OpenOOD v1.5~\citep{zhang2024openood}, across three ID settings including CIFAR-10, CIFAR-100~\citep{krizhevsky2009learning}, and ImageNet~\citep{deng2009imagenet}, under both Near-OOD and Far-OOD evaluation settings. MahaVar consistently outperforms all Mahalanobis-based baselines across all three benchmarks, and achieves state-of-the-art performance among all evaluated methods on CIFAR-100 and ImageNet. Notably, on the large-scale ImageNet benchmark averaged over three backbone architectures (ResNet-50~\citep{he2016deep}, Swin-B~\citep{liu2021swin}, and ViT-B~\citep{dosovitskiy2021an}), MahaVar achieves the best average AUROC and FPR@95 across all evaluated baselines, demonstrating that the proposed variance term provides consistent gains regardless of the underlying backbone architecture.

In summary, this work makes the following threefold contributions.

\begin{itemize}
  \item We present a \textbf{key empirical observation} that class-wise Mahalanobis distances exhibit a pronounced sharp minimum structure for in-distribution samples, while OOD samples exhibit a less sharp minimum structure across classes. To our knowledge, this structural asymmetry has not been previously identified or exploited for OOD detection.
  \item We provide a \textbf{theoretical analysis grounded in Neural Collapse geometry} that characterizes the conditions under which in-distribution samples exhibit strictly higher class-wise distance variance than OOD samples, offering a principled justification for the use of variance as a discriminative OOD signal.
  \item We propose \textbf{MahaVar}, a simple post-hoc OOD detector that augments the standard Mahalanobis score with a class-wise distance variance term. Following the OpenOOD v1.5 benchmark protocol, MahaVar achieves state-of-the-art performance on CIFAR-100 and ImageNet, and consistently outperforms all Mahalanobis-based baselines across all three benchmarks in both AUROC and FPR@95.
\end{itemize}

\section{Related Work}
\label{related_work}

\subsection{Post-hoc OOD Detection Methods}
Post-hoc methods detect OOD samples using a pre-trained classifier without any additional training or access to OOD data, and can be broadly categorized into representation-based, logit-based, and activation shaping methods.

\noindent\textbf{Representation-based methods} leverage the geometry of the penultimate layer features to construct OOD scores. \citet{lee2018simple} proposed the Mahalanobis distance to class-conditional Gaussian distributions fitted on penultimate layer features as a confidence score, establishing the foundation for distance-based OOD detection. \citet{ren2021simple} extended this with a relative Mahalanobis distance that normalizes class-conditional distances against a global reference distribution to improve near-OOD detection. \citet{muller2025mahalanobis++} further improved robustness by applying L2 normalization to penultimate layer features prior to distance computation, aligning the feature distributions more closely with the Gaussian assumptions underlying the Mahalanobis distance and reducing the influence of feature norm variance across classes. To avoid parametric distributional assumptions like those in Mahalanobis-based methods, \citet{sun2022out} proposed a non-parametric KNN-based score that directly measures distances to training samples in the penultimate layer feature space. Building on this, \citet{park2023nearest} further improved detection performance by weighting the energy score using cosine similarity between the test sample and its $k$-nearest neighbors in the penultimate layer features. \citet{liu2025detecting} leveraged Neural Collapse geometry to show that centered ID features tend to cluster near their predicted class weight vectors, proposing an angle-based proximity score complemented by feature norm filtering to detect OOD samples. \citet{liu2024fast} proposed fDBD, which constructs an OOD score by measuring the distance from the test sample to the decision boundary of the classifier.

\noindent\textbf{Logit-based methods} derive OOD scores directly from classifier outputs. \citet{hendrycks2017a} introduced the maximum softmax probability as a simple yet effective baseline. Building upon this, \citet{liu2020energy} proposed the energy score, which aggregates all class logits and provides better theoretical grounding than softmax-based scores. \citet{Liu_2023_CVPR} further improved sensitivity by applying a generalized entropy over the top-$M$ sorted class probabilities, capturing small entropy variations among the most confident predictions. \citet{wang2022vim} constructs a virtual OOD class logit from the norm of the feature residual against a principal subspace, and combines it with the original class logits via softmax to produce a unified OOD score.

\noindent\textbf{Activation shaping methods} modify penultimate layer activations at inference time to amplify the gap between ID and OOD samples. \citet{sun2021react} proposed clipping abnormally large activations in the penultimate layer, observing that OOD samples tend to produce extreme activation patterns. Extending this idea, \citet{djurisic2023extremely} introduced ASH, which prunes activations below a threshold and rescales the remaining ones based on per-sample statistics. \citet{xu2023vra} further generalized activation shaping by applying both lower and upper truncation to penultimate layer activations, suppressing both abnormally small and large values. Subsequently, \citet{xu2024scaling} demonstrated that scaling alone, without any pruning, achieves competitive performance while avoiding the accuracy degradation associated with activation removal.

\subsection{Neural Collapse}
\label{neural_collapse}
Neural Collapse is a geometric phenomenon first observed by \citet{papyan2020prevalence} during the terminal phase of deep network training, wherein the penultimate layer features and the linear classification head simultaneously converge to a highly structured configuration. Formally, let $\mathbf{h}^i_c$ denote the penultimate layer feature of the $i$-th training sample with class label $c$, $\bm{\mu}_c = \text{Avg}_i\{\mathbf{h}^i_c\}$ the class mean, $\bm{\mu}_G = \text{Avg}_{i,c}\{\mathbf{h}^i_c\}$ the global mean, and $\mathbf{w}_c$ the weight vector of class $c$ in the linear classification head. Neural Collapse comprises four inter-related properties:
\begin{itemize}
    \item \textbf{NC1 (Within-class variability collapse):} The within-class covariance $\Sigma_W = \text{Avg}_{i,c}\{(\mathbf{h}^i_c - \bm{\mu}_c)(\mathbf{h}^i_c - \bm{\mu}_c)^\top\} \to 0$, meaning that features of samples within the same class collapse toward their class mean.
    \item \textbf{NC2 (Convergence to a simplex ETF):} The centered class means $\{\bm{\mu}_c - \bm{\mu}_G\}$ converge to a simplex Equiangular Tight Frame (ETF), satisfying
    \begin{equation}
        \frac{(\bm{\mu}_c - \bm{\mu}_G)^\top(\bm{\mu}_{c'} - \bm{\mu}_G)}{\|\bm{\mu}_c - \bm{\mu}_G\|_2 \|\bm{\mu}_{c'} - \bm{\mu}_G\|_2} \to \frac{C}{C-1}\delta_{c,c'} - \frac{1}{C-1},
    \end{equation}
    where $\delta_{c,c'}$ is the Kronecker delta. This implies that all centered class means $\{\bm{\mu}_c - \bm{\mu}_G\}$ become equinorm and maximally separated in angle, forming the most spread-out configuration achievable for $C$ vectors in high-dimensional space.
    \item \textbf{NC3 (Convergence to self-duality):} The classifier weight vectors align with the centered class means up to a scaling factor:
    \begin{equation}
        \frac{\mathbf{w}_c}{\|\mathbf{w}_c\|_2} - \frac{\bm{\mu}_c - \bm{\mu}_G}{\|\bm{\mu}_c - \bm{\mu}_G\|_2} \to 0,
    \end{equation}
    establishing a dual relationship between the feature geometry and the classification head.
    \item \textbf{NC4 (Simplification to nearest class center):} The classifier simplifies to a nearest class center decision rule:
    \begin{equation}
        \argmax_{c \in \{1,\ldots,C\}}\ \mathbf{w}_c^\top \mathbf{h} + b_c \to \argmin_{c \in \{1,\ldots,C\}}\ \|\mathbf{h} - \bm{\mu}_c\|_2,
    \end{equation}
    meaning that classification based on the learned linear head becomes equivalent to assigning the sample to its nearest class mean.
\end{itemize}

While complete Neural Collapse requires prolonged training beyond zero training error, its convergence trends and corresponding geometric patterns have been consistently observed across diverse architectures and classification tasks before full convergence~\citep{liu2025detecting}. This practical prevalence has motivated its application to OOD detection~\citep{liu2025detecting, ammar2024neco}, where the structured geometry of ID features serves as a reference for detecting anomalous samples.

\section{Sharp Minimum Structure of Class-wise Mahalanobis Distances}
\label{observation_sharp}
\subsection{Mahalanobis Distance-based OOD Detection}

The Mahalanobis distance-based detector~\citep{lee2018simple} leverages class-conditional statistics estimated from the penultimate layer features of the training set $\{(\mathbf{x}_i, y_i)\}_{i=1}^{n}$. For each class $c$, the class mean $\hat{\bm{\mu}}_c$ is computed over the penultimate features $\phi(\cdot)$ of all training samples belonging to that class. A single covariance matrix $\hat{\Sigma}$ is shared across all classes and estimated by pooling the deviations of the penultimate features from their respective class means:

\begin{equation}
\label{eq:mahala_mean_cov}
\hat{\bm{\mu}}_c = \frac{1}{N_c} \sum_{i: y_i = c} \phi(\mathbf{x}_i), \qquad
\hat{\Sigma} = \frac{1}{N} \sum_{c=1}^{C} \sum_{i: y_i = c} \left(\phi(\mathbf{x}_i) - \hat{\bm{\mu}}_c\right)\left(\phi(\mathbf{x}_i) - \hat{\bm{\mu}}_c\right)^\top
\end{equation}

where $N_c$ and $N$ denote the number of samples in class $c$ and in total, respectively. For a test sample $\mathbf{x}$, the squared Mahalanobis distance between its penultimate feature $\phi(\mathbf{x})$ and each class mean is computed as:

\begin{equation}
d_{\text{Maha},c}(\mathbf{x}) = \left(\phi(\mathbf{x}) - \hat{\bm{\mu}}_c\right)^\top \hat{\Sigma}^{-1} \left(\phi(\mathbf{x}) - \hat{\bm{\mu}}_c\right)
\end{equation}

This formulation rests on the assumption that the penultimate features of each class follow a class-conditional Gaussian distribution $\mathcal{N}(\hat{\bm{\mu}}_c, \hat{\Sigma})$ with a shared tied covariance matrix $\hat{\Sigma}$ across all classes. Intuitively, a test sample is more likely to be in-distribution if its penultimate feature is close to at least one class mean in the Mahalanobis sense. This motivates the following OOD score:

\begin{equation}
S(\mathbf{x}) = -\min_{c \in \{1, \ldots, C\}} d_{\text{Maha},c}(\mathbf{x})
\end{equation}

where a higher score indicates a greater likelihood of the sample being in-distribution. A test sample is classified as OOD when $S(\mathbf{x})$ falls below a threshold $T$, which is typically calibrated by fixing the true positive rate (TPR) at 95\% on the in-distribution validation set.

\citet{muller2025mahalanobis++} proposed Mahalanobis++, which further improved upon this formulation by applying L2 normalization to the penultimate features prior to computing the Mahalanobis distance, which encourages the normalized features to better satisfy the class-conditional Gaussian assumption. In this work, we build upon this L2-normalized Mahalanobis framework and demonstrate that there remains further room for improvement. Specifically, we observe that the class-wise Mahalanobis distances exhibit a sharp minimum structure that carries discriminative information beyond what the minimum distance alone captures, motivating our proposed method.

\subsection{Observation: Sharp Minimum in Class-wise Mahalanobis Distances}
\label{sec:obseravtion_sharp}
\begin{figure}[t]
    \centering
    \includegraphics[width=0.9\linewidth]{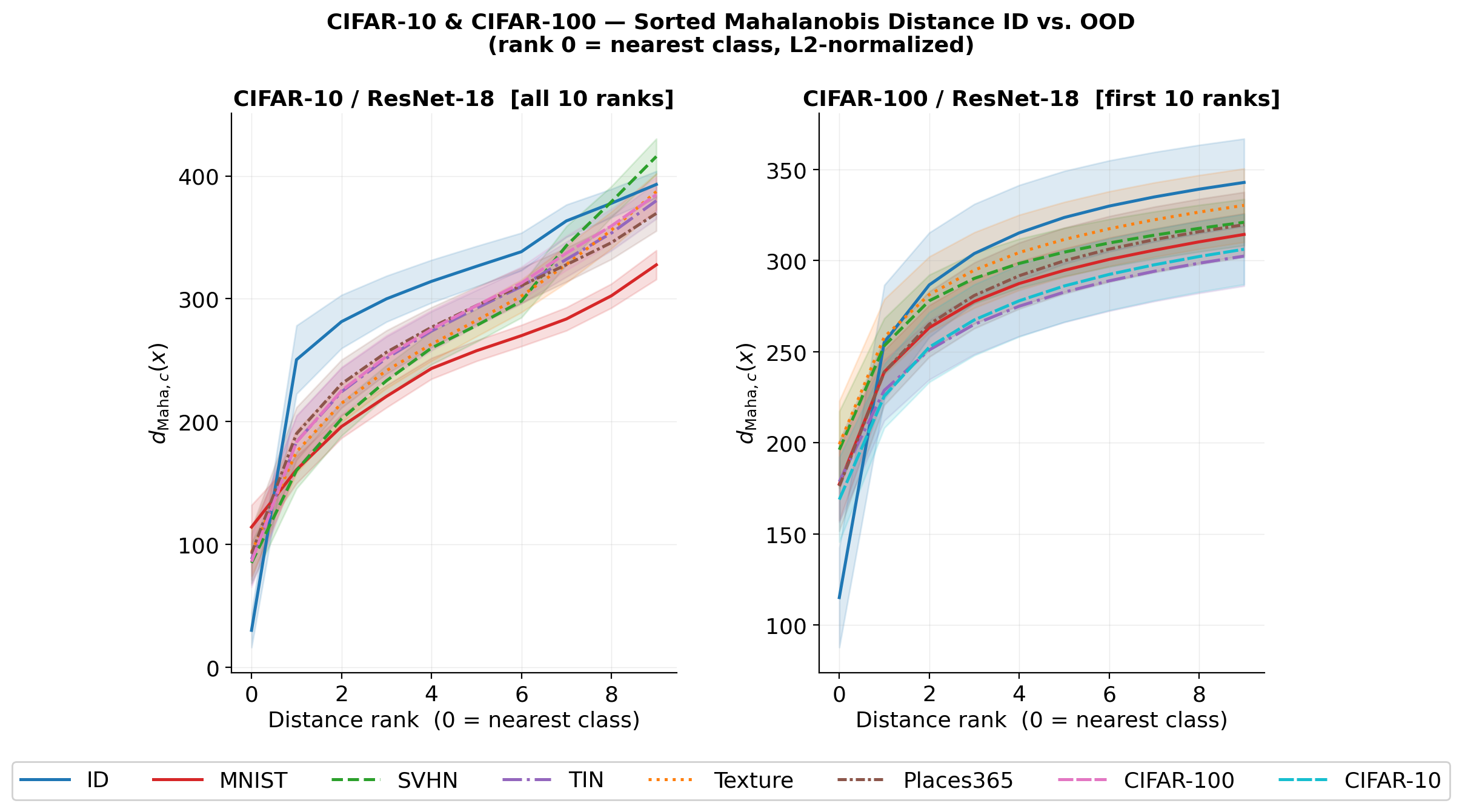}
    \caption{Sorted class-wise $d_{\text{Maha},c}(\mathbf{x})$ of Mahalanobis++~\citep{muller2025mahalanobis++} on CIFAR-10 and CIFAR-100 as ID datasets, evaluated on OOD datasets following the OpenOOD v1.5 benchmark protocol~\citep{zhang2024openood} with ResNet-18 as the backbone. The x-axis represents the Mahalanobis distance rank, where rank 0 corresponds to the nearest class mean. Shaded regions indicate $\pm 0.5\sigma$ across samples at each rank.}
    \label{fig:sorted_maha}
\end{figure}

\begin{figure}[t]
    \centering
    \includegraphics[width=0.9\linewidth]{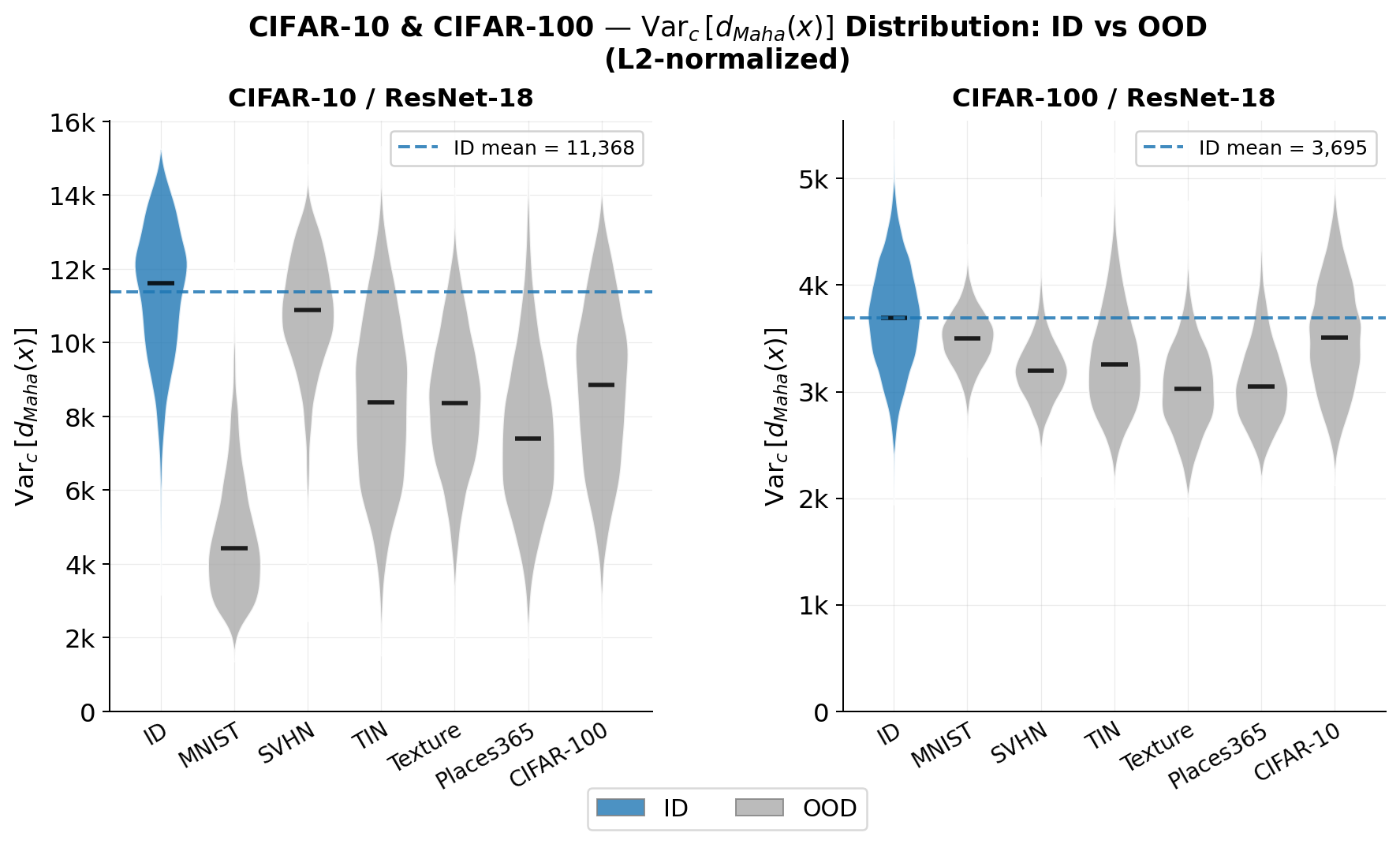}
    \caption{Distribution of $\text{Var}_c[d_{\text{Maha},c}(\mathbf{x})]$ for ID and OOD samples on CIFAR-10 and CIFAR-100, evaluated on OOD datasets following the OpenOOD v1.5 benchmark protocol with ResNet-18 as the backbone. The dashed blue line indicates the mean variance of ID samples. Black bars indicate the median of each distribution.}
    \label{fig:var_dist_cifar}
\end{figure}

To motivate our method, we conduct an empirical analysis using Mahalanobis++ under the OpenOOD v1.5 benchmark protocol. Specifically, we consider CIFAR-10 and CIFAR-100 as ID datasets, each with six OOD datasets. For each ID setting, we use a ResNet-18 pretrained on the respective ID dataset and extract the L2-normalized penultimate features. The class-wise means $\hat{\bm{\mu}}_c$ and tied covariance matrix $\hat{\Sigma}$ are then estimated following Equation~\ref{eq:mahala_mean_cov}. For each test sample, we compute the Mahalanobis distance to all class means and sort them in ascending order. As shown in Figure~\ref{fig:sorted_maha}, which plots the sorted class-wise Mahalanobis distances for ID and OOD samples, the results reveal a consistent pattern: ID samples exhibit a substantially lower Mahalanobis distance at rank 0 compared to OOD samples, followed by a sharp increase that quickly plateaus at higher ranks. This indicates that ID samples tend to lie close to a specific class mean while remaining far from all others, whereas OOD samples maintain comparatively large distances to all class means without a clear dominant minimum. Furthermore, the sharp minimum structure observed in Figure~\ref{fig:sorted_maha} directly implies elevated class-wise distance variance for ID samples: one distance is substantially smaller while the rest plateau at similarly large values, yielding high variance, whereas the less sharp structure of OOD samples yields comparatively lower variance, as verified in Figure~\ref{fig:var_dist_cifar}. As shown in Figure~\ref{fig:var_dist_cifar}, OOD samples consistently exhibit lower class-wise distance variance than ID samples across all OOD datasets on both CIFAR-10 and CIFAR-100, with the ID mean variance substantially exceeding the median of every OOD distribution. This separation suggests that $\text{Var}_c[d_{\text{Maha},c}(\mathbf{x})]$ naturally captures the class-wise distance structure of ID samples, making it a reliable complementary OOD signal alongside the minimum distance term. Additional results for the original Mahalanobis detector~\citep{lee2018simple} on CIFAR-10/100 and ImageNet, as well as Mahalanobis++ on ImageNet, are provided in Appendix~\ref{appendix:additional_sorted}.

While existing Mahalanobis-based methods either focus solely on the nearest class distance~\citep{lee2018simple, muller2025mahalanobis++} or consider a global distributional perspective~\citep{ren2021simple}, the observed class-wise distance structure suggests unexploited discriminative information, motivating the theoretical analysis and method proposed in the following section.


\section{Neural Collapse Perspective on ID/OOD Variance Separation}
\label{sec:neural_collapse_explanation}

In this section, we provide a theoretical analysis of the class-wise distance variance structure under the framework of Neural Collapse~\citep{papyan2020prevalence}. As introduced in Section~\ref{neural_collapse}, Neural Collapse describes a set of geometric properties that emerge in deep neural networks during the terminal phase of training. Notably, Neural Collapse tendencies have been observed even before full convergence, suggesting that its geometric structure is a general property of well-trained classifiers rather than a phenomenon exclusive to the terminal phase. 

However, perfect Neural Collapse is rarely achieved in practice, as it requires strict convergence conditions that are difficult to satisfy with finite data and standard training procedures. We therefore adopt a relaxed formulation, retaining only the geometric properties pertaining to the penultimate feature space while excluding assumptions on the classifier weights. Specifically, we discard NC3 (convergence to self-duality) and NC4 (simplification to nearest class-center), as these involve the classifier weight vectors, which are not accessible in a post-hoc detection setting and are not required for our analysis. Instead, we rely solely on weakened versions of NC1 and NC2, which characterize the geometry of the penultimate features and class means, respectively.
We formalize these as the following:

\begin{assumption}[Weak NC1 and NC2]
\label{assumption:weak_nc1_nc2}
Let $\phi: \mathcal{X} \rightarrow \mathbb{R}^d$ denote the penultimate feature extractor of a pretrained network. We make the following two assumptions:

\textbf{(Weak NC1)} For each class $c \in \{1, \ldots, C\}$, all $\textbf{x}_i\in\mathcal{X}_{in}$ with $y_i=c$ lie within an $\ell_2$-ball of radius $\epsilon_c$ centered at the class mean $\bm{\mu}_c$:
\begin{equation}
\|\phi(\bm{x}_i) - \bm{\mu}_c\|_2 \leq \epsilon_c \quad \forall i : y_i = c
\end{equation}
This relaxes the strict NC1 condition, which requires within-class variability to vanish (i.e., $\epsilon_c \to 0$), by permitting a bounded deviation of radius $\epsilon_c > 0$.

\textbf{(NC2)} The centered class means $\{\bm{\mu}_c - \bm{\mu}_G\}_{c=1}^C$ form a simplex ETF under the $\ell_2$-distance, i.e., they are equinorm with $\|\bm{\mu}_c - \bm{\mu}_G\|_2 = R$ for all $c$, and equiangular with  inter-class distance $K$:
\begin{equation}
\|\bm{\mu}_c - \bm{\mu}_{c'}\|_2 = K \quad \forall c \neq c'
\end{equation}
Furthermore, we assume $\epsilon = \max_c \epsilon_c < K/2$.
\end{assumption}

Assumption~\ref{assumption:weak_nc1_nc2} states that the class means form a simplex ETF, while allowing each in-distribution sample to deviate from its class mean by at most $\epsilon_c$ in the $\ell_2$ sense. The constraint $\epsilon < K/2$ ensures that each sample remains closer to its own class mean than to any other class mean, preserving the nearest-class assignment implied by the ETF structure. Note that a simplex ETF with $C$ classes requires a penultimate layer dimension of at least $C-1$. For ImageNet ($C=1000$), this condition is not satisfied by Transformer-based backbones such as ViT-B ($d=768$), which prevents the class means from forming a complete simplex ETF and consequently weakens the variance separation between ID and OOD samples observed in Appendix~\ref{appendix:additional_sorted}. Under this assumption, we establish bounds on the class-wise $\ell_2$ distance variance between ID samples and their class means in Theorem~\ref{theorem:id_variance_bound}.

\begin{theorem}[Bounds on ID Class-wise $\ell_2$ Distance Variance]
\label{theorem:id_variance_bound}
Let $\phi: \mathcal{X} \rightarrow \mathbb{R}^d$ be the penultimate feature extractor of a pretrained network, where $\mathcal{X}_{in} \subset \mathcal{X}$ denotes the in-distribution input space. Suppose Assumption~\ref{assumption:weak_nc1_nc2} holds for $\phi$ on all $\mathbf{x} \in \mathcal{X}_{in}$. Then for any test ID sample $\mathbf{x} \in \mathcal{X}_{in}$, the class-wise squared $\ell_2$ distance variance satisfies:
\begin{equation*}
\frac{(C-1)K^4}{C^2}\left(1-\gamma\right)^2 \leq \mathrm{Var}_c\left[\|\phi(\mathbf{x}) - \bm{\mu}_c\|_2^2\right] \leq \frac{(C-1)K^4}{C^2}\left(1+\gamma\right)^2 + \frac{2\epsilon^2 K^2(C-2)}{C}
\label{eq:id_variance_bound}
\end{equation*}
where $\gamma \triangleq \epsilon\sqrt{\frac{2C}{K^2(C-1)}}$. In particular, as $\epsilon \to 0$, $\gamma \to 0$ and both bounds converge to $\frac{(C-1)K^4}{C^2}$, which depends solely on the inter-class distance $K$ and the number of classes $C$.
\end{theorem}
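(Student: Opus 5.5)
The plan is to fix a test ID sample $\mathbf{x}$ with label $c^\ast$ and write $\mathbf{h}=\phi(\mathbf{x})=\bm{\mu}_{c^\ast}+\bm{\delta}$, where $\|\bm{\delta}\|_2\le\epsilon$ by Weak NC1. Put $\mathbf{v}_c=\bm{\mu}_c-\bm{\mu}_G$. Expanding the square gives
\begin{equation*}
\|\mathbf{h}-\bm{\mu}_c\|_2^2=\|\bm{\delta}\|_2^2+\|\bm{\mu}_{c^\ast}-\bm{\mu}_c\|_2^2-2\bm{\delta}^\top(\mathbf{v}_c-\mathbf{v}_{c^\ast}).
\end{equation*}
The terms $\|\bm{\delta}\|_2^2$ and $2\bm{\delta}^\top\mathbf{v}_{c^\ast}$ do not depend on $c$, so they drop out of $\mathrm{Var}_c$. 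By NC2, $\|\bm{\mu}_{c^\ast}-\bm{\mu}_c\|_2^2=K^2\,\mathbbm{1}[c\neq c^\ast]$. The variance we need is therefore $\mathrm{Var}_c[a_c+b_c]$, with $a_c=K^2\mathbbm{1}[c\neq c^\ast]$ (a deterministic ``ETF part'') and $b_c=-2\bm{\delta}^\top\mathbf{v}_c$ (a perturbation part).

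The ETF part is elementary: $a_c$ equals $K^2$ on $C-1$ classes and $0$ on one, so $\mathrm{Var}_c[a_c]=K^4\frac{C-1}{C}\cdot\frac{1}{C}=\frac{(C-1)K^4}{C^2}$. This is exactly the limiting value in the statement.

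For the perturbation part I use the frame structure of the simplex ETF. The $\mathbf{v}_c$ are centered, so $\sum_c\mathbf{v}_c=0$ and hence $\mathbb{E}_c[b_c]=0$. This gives $\mathrm{Var}_c[b_c]=\frac{4}{C}\bm{\delta}^\top\big(\sum_c\mathbf{v}_c\mathbf{v}_c^\top\big)\bm{\delta}$. For a simplex ETF with norms $R$ one has $\sum_c\mathbf{v}_c\mathbf{v}_c^\top=\frac{CR^2}{C-1}P$, where $P$ is the orthogonal projector onto $\mathrm{span}\{\mathbf{v}_c\}$. This follows because the Gram matrix is $\frac{CR^2}{C-1}(I-\frac1C\mathbf{1}\mathbf{1}^\top)$, whose nonzero eigenvalues all equal $\frac{CR^2}{C-1}$. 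The equiangular relation $K^2=2R^2+2R^2/(C-1)$ gives $R^2=\frac{(C-1)K^2}{2C}$, so $\sum_c\mathbf{v}_c\mathbf{v}_c^\top=\frac{K^2}{2}P$. Consequently
\begin{equation*}
\mathrm{Var}_c[b_c]\le\frac{4}{C}\cdot\frac{K^2}{2}\|\bm{\delta}\|_2^2\le\frac{2K^2\epsilon^2}{C}.
\end{equation*}
I expect this frame identity and the bookkeeping of $R$ in terms of $K$ to be the main technical step. Everything else is a triangle inequality.

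To combine the two parts, I use the fact that the standard deviation over $c$ is a (semi)norm, so $\big|\mathrm{sd}(a+b)-\mathrm{sd}(a)\big|\le\mathrm{sd}(b)$. Dividing,
\begin{equation*}
\frac{\mathrm{sd}(b)}{\mathrm{sd}(a)}\le\frac{\epsilon K\sqrt{2/C}}{K^2\sqrt{C-1}/C}=\epsilon\sqrt{\frac{2C}{K^2(C-1)}}=\gamma.
\end{equation*}
Hence $\mathrm{sd}(a)(1-\gamma)\le\mathrm{sd}(a+b)\le\mathrm{sd}(a)(1+\gamma)$.

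Squaring the lower bound is legitimate because $\gamma\le1$. Indeed, $\epsilon<K/2$ gives $\gamma<\sqrt{C/(2(C-1))}\le1$ for $C\ge2$. This yields the stated lower bound. The upper bound $\frac{(C-1)K^4}{C^2}(1+\gamma)^2$ is already at most the stated right-hand side, since the extra term $\frac{2\epsilon^2K^2(C-2)}{C}$ is nonnegative. Alternatively, one could reach the stated form by expanding $\mathrm{Var}(a)+\mathrm{Var}(b)+2\mathrm{Cov}(a,b)$ and bounding the covariance by Cauchy--Schwarz. The limit claim then follows immediately from $\gamma\to0$ as $\epsilon\to0$.
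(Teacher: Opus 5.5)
Your argument is correct, and it takes a different route from the paper's.

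The paper keeps the true-class term separate and derives the exact identity $\mathrm{Var}_c[\|\phi(\mathbf{x})-\bm{\mu}_c\|_2^2]=\frac{(C-1)(K^2+\bar{\xi})^2}{C^2}+\frac{(C-1)S_\xi^2}{C}$. Here $\bar{\xi}$ and $S_\xi^2$ are the mean and spread of the cross terms $\xi_c=2\bm{\delta}^\top(\bm{\mu}_{c^*}-\bm{\mu}_c)$ over the $C-1$ wrong classes. It then bounds $|\bar{\xi}|\le\epsilon K\sqrt{2C/(C-1)}$ and $S_\xi^2\le\frac{2\epsilon^2K^2(C-2)}{C-1}$ separately, each by Cauchy--Schwarz.

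You instead split the distance vector into the fixed ETF indicator $a$ plus a mean-zero perturbation $b$. You compute $\mathrm{Var}_c[b_c]$ exactly via the tight-frame identity $\sum_c\mathbf{v}_c\mathbf{v}_c^\top=\frac{K^2}{2}P$, with your $\mathbf{v}_c=\bm{\mu}_c-\bm{\mu}_G$. This is the same identity the paper proves only in its Corollary argument, as $M^\top M=\frac{R^2C}{C-1}P$. You then finish with the triangle inequality for the standard-deviation seminorm.

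Your route is shorter and gives a sharper upper bound, $\frac{(C-1)K^4}{C^2}(1+\gamma)^2$, with no $\frac{2\epsilon^2K^2(C-2)}{C}$ term. In fact the exact variance is $\frac{(C-1)K^4}{C^2}+\frac{4K^2}{C}\bm{\delta}^\top\mathbf{v}_{c^*}+\frac{2K^2}{C}\|P\bm{\delta}\|_2^2$. So over $\|\bm{\delta}\|_2\le\epsilon$ both of your bounds are attained, at $\bm{\delta}=\pm\epsilon\mathbf{v}_{c^*}/R$. The paper's extra term is an artifact of treating $\bar{\xi}$ and $S_\xi^2$ independently: at the $\bm{\delta}$ that maximizes $|\bar{\xi}|$, one actually has $S_\xi^2=0$.

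What the paper's decomposition buys in exchange is an explicit reading of the sharp-minimum picture: a perturbed gap $K^2+\bar{\xi}$ between the nearest class and the rest, plus a spread among the non-nearest classes. That is the narrative the paper builds on.
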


Theorem~\ref{theorem:id_variance_bound} establishes lower and upper bounds on the
class-wise distance variance of ID samples under the ETF geometry. Building on the
empirical success of Mahalanobis++ in the L2-normalized feature space and for
theoretical tractability, we conduct the subsequent analysis under
Assumption~\ref{assumption:weak_nc1_nc2} posed directly on the L2-normalized
features $\hat{\phi}$, assuming $\sum_c \bm{\mu}_c = \mathbf{0}$ so that the ETF
structure applies directly to $\bm{\mu}_c$.
Theorem~\ref{theorem:variance_separation} further characterizes the conditions under
which the class-wise distance variance of L2-normalized ID features strictly exceeds
that of OOD samples.

\begin{theorem}[Sufficient Condition for Strictly Higher Variance in ID Samples]
\label{theorem:variance_separation}
Let $\hat{\phi}(\mathbf{x}) = (\phi(\mathbf{x}) - \bm{\mu}_G) / \|\phi(\mathbf{x}) - \bm{\mu}_G\|_2$ be the centered L2-normalized penultimate feature extractor of a pretrained network. Suppose Assumption~\ref{assumption:weak_nc1_nc2} holds for $\hat{\phi}$ on all $\mathbf{x} \in \mathcal{X}_{in}$, where $R = \|\bm{\mu}_c\|_2 \leq 1$ and $\sum_c \bm{\mu}_c = \mathbf{0}$. For any OOD sample $\mathbf{x}^{out} \in \mathcal{X} \setminus \mathcal{X}_{in}$ satisfying $\max_c |\hat{\phi}(\mathbf{x}^{out})^\top \bm{\mu}_c| < \frac{R(R-\varepsilon)}{\sqrt{C-1}}$, we have $\mathrm{Var}_c\left[\|\hat{\phi}(\mathbf{x}) - \bm{\mu}_c\|_2^2\right] > \mathrm{Var}_c\left[\|\hat{\phi}(\mathbf{x}^{out}) - \bm{\mu}_c\|_2^2\right]$ for any ID sample $\mathbf{x} \in \mathcal{X}_{in}$.
\end{theorem}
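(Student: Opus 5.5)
We plan to reduce both variances to the spread of the inner products $\hat{\phi}(\mathbf{x})^\top \bm{\mu}_c$ across classes, and then compare an ID lower bound with an OOD upper bound that turn out to meet at the same value $\frac{4R^2(R-\varepsilon)^2}{C-1}$.

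\textbf{Step 1: reduction to inner products.} Write $\mathbf{z}=\hat{\phi}(\mathbf{x})$, so that $\|\mathbf{z}\|_2=1$. Expanding the square gives $\|\mathbf{z}-\bm{\mu}_c\|_2^2 = 1+R^2-2\,\mathbf{z}^\top\bm{\mu}_c$. The constant $1+R^2$ does not depend on $c$, so
\begin{equation*}
\mathrm{Var}_c\left[\|\mathbf{z}-\bm{\mu}_c\|_2^2\right]=4\,\mathrm{Var}_c\left[\mathbf{z}^\top\bm{\mu}_c\right].
\end{equation*}
Let $s_c=\mathbf{z}^\top\bm{\mu}_c$. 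Since $\sum_c\bm{\mu}_c=\mathbf{0}$, we have $\sum_c s_c=0$, so the class-average of $s_c$ vanishes. Using the population variance over the $C$ classes,
\begin{equation*}
\mathrm{Var}_c\left[\|\mathbf{z}-\bm{\mu}_c\|_2^2\right]=\frac{4}{C}\sum_{c=1}^C s_c^2 .
\end{equation*}
This identity holds for every unit vector $\mathbf{z}$, whether it comes from an ID or an OOD input.

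\textbf{Step 2: OOD upper bound.} For $\mathbf{z}=\hat{\phi}(\mathbf{x}^{out})$, the hypothesis gives $|s_c|<\frac{R(R-\varepsilon)}{\sqrt{C-1}}$ for every $c$. Summing the squares yields the strict bound
\begin{equation*}
\mathrm{Var}_c^{out}<\frac{4R^2(R-\varepsilon)^2}{C-1}.
\end{equation*}

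\textbf{Step 3: ID lower bound, in two parts.}

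\emph{Lower bound on the own-class inner product.} Let $y$ be the class of the ID sample $\mathbf{x}$. By Weak NC1, $\|\mathbf{z}-\bm{\mu}_y\|_2\le\varepsilon$, so Cauchy--Schwarz gives
\begin{equation*}
s_y=R^2+(\mathbf{z}-\bm{\mu}_y)^\top\bm{\mu}_y\ge R^2-\varepsilon R=R(R-\varepsilon).
\end{equation*}
For this quantity to be positive we need $\varepsilon<R$. This follows from the ETF geometry: equinorm means with $\sum_c\bm{\mu}_c=\mathbf{0}$ and equal pairwise distances force $\bm{\mu}_c^\top\bm{\mu}_{c'}=-R^2/(C-1)$. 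Hence $K^2=2R^2C/(C-1)$, and so $K/2=R\sqrt{C/(2(C-1))}\le R$ for all $C\ge 2$. The assumption $\varepsilon<K/2$ then gives $\varepsilon<R$.

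\emph{Lower bound on the full sum of squares.} The remaining $C-1$ values $s_c$ are constrained to sum to $-s_y$. By Cauchy--Schwarz (equivalently, the fact that equal values minimize a sum of squares under a fixed sum), $\sum_{c\neq y}s_c^2\ge s_y^2/(C-1)$. Therefore
\begin{equation*}
\frac{4}{C}\sum_c s_c^2\ \ge\ \frac{4}{C}\cdot\frac{C}{C-1}\,s_y^2\ =\ \frac{4 s_y^2}{C-1}\ \ge\ \frac{4R^2(R-\varepsilon)^2}{C-1}.
\end{equation*}

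\textbf{Conclusion.} Chaining the ID lower bound from Step 3 with the strict OOD upper bound from Step 2 gives $\mathrm{Var}_c^{in}>\mathrm{Var}_c^{out}$ for every ID sample $\mathbf{x}$, which is the claim.

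The delicate point is the ID lower bound. The naive route of keeping only the $c=y$ term gives $4s_y^2/C$, which loses a factor $(C-1)$ against the OOD bound and would not close the argument. The zero-sum constraint $\sum_c s_c=0$ is what recovers exactly the factor $C/(C-1)$ needed to match the threshold $\frac{R(R-\varepsilon)}{\sqrt{C-1}}$ in the hypothesis. Two further checks matter: the squaring step requires $R-\varepsilon>0$, which comes from the ETF relation $K^2=2R^2C/(C-1)$; and the variance must be the population ($1/C$) variance, consistently with Theorem~\ref{theorem:id_variance_bound}.
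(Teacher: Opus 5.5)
Your proof is correct. Step~2 is exactly the paper's OOD bound. The difference is in the ID lower bound.

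The paper does not re-derive that bound. It substitutes $K^2 = 2R^2C/(C-1)$, so that $\gamma = \varepsilon/R$, into the lower bound of Theorem~\ref{theorem:id_variance_bound}. That bound came from the exact decomposition $\mathrm{Var}_c\bigl[\|\hat{\phi}(\mathbf{x})-\bm{\mu}_c\|_2^2\bigr] = \frac{(C-1)(K^2+\bar{\xi})^2}{C^2} + \frac{(C-1)S_\xi^2}{C}$, by discarding $S_\xi^2$ and bounding $\bar{\xi}$ with Cauchy--Schwarz.

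In the centered, normalized setting $K^2+\bar{\xi} = \frac{2C}{C-1}\,s_y$, so the paper's first term is precisely your $\frac{4s_y^2}{C-1}$. The remaining pieces also line up with yours:
\begin{itemize}
\item Your zero-sum Cauchy--Schwarz step is the same as dropping $S_\xi^2 \ge 0$, the spread of the $s_c$ over $c\neq y$.
\item The paper's bound on $\bar{\xi}$ is your bound $s_y \ge R(R-\varepsilon)$.
\item The positivity requirement $K^2 - \varepsilon K\sqrt{2C/(C-1)} > 0$ used there is equivalent to your $\varepsilon < R$.
\end{itemize}

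So the inequality chain is the same, but your derivation is self-contained and more elementary. Working with the $s_c$ from the start shows that only equinormality, $\sum_c \bm{\mu}_c = \mathbf{0}$, Weak NC1 and $\varepsilon<R$ are used. Equiangularity enters only through $\varepsilon<R$, and that condition is forced by the OOD hypothesis anyway: a sample with $0 \le \max_c|s_c| < R(R-\varepsilon)/\sqrt{C-1}$ can exist only if $R > \varepsilon$.

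The paper's route is shorter once Theorem~\ref{theorem:id_variance_bound} is available. It also expresses the threshold through the general $K$-parametrized ID bound, which comes with an upper bound as well.
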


Theorem~\ref{theorem:variance_separation} shows that when an OOD sample maintains sufficient angular separation from all class means in terms of absolute cosine similarity, the class-wise distance variance of ID samples is guaranteed to strictly exceed that of OOD samples, confirming that the variance term carries meaningful discriminative information for OOD detection. We further strengthen this result in Corollary~\ref{corollary:variance_separation_always} by introducing an additional assumption that the within-class deviation of ID samples lies within the subspace spanned by the class means. Under this assumption, the class-wise distance variance of ID samples is shown to universally dominate that of any OOD sample, regardless of the OOD sample's proximity to the class means.

\begin{corollary}[Guaranteed Variance Separation under ETF Subspace Alignment]
\label{corollary:variance_separation_always}
Under the assumptions of Theorem~\ref{theorem:variance_separation}, suppose further that for any ID sample $\mathbf{x} \in \mathcal{X}_{in}$ from class $c^*$, the within-class deviation satisfies $\hat{\phi}(\mathbf{x}) - \bm{\mu}_{c^*} \in \mathrm{span}(\bm{\mu}_1, \ldots, \bm{\mu}_C)$. Then for any OOD sample $\mathbf{x}^{out} \in \mathcal{X} \setminus \mathcal{X}_{in}$, $\mathrm{Var}_c\left[\|\hat{\phi}(\mathbf{x}) - \bm{\mu}_c\|_2^2\right] \geq \mathrm{Var}_c\left[\|\hat{\phi}(\mathbf{x}^{out}) - \bm{\mu}_c\|_2^2\right]$ holds, with equality if and only if $\hat{\phi}(\mathbf{x}^{out}) \in \mathrm{span}(\bm{\mu}_1, \ldots, \bm{\mu}_C)$.
\end{corollary}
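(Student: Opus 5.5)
The plan is to reduce the class-wise variance to a quadratic form in $\mathbf{z} \triangleq \hat{\phi}(\cdot)$ and to show that this quadratic form is a scaled orthogonal projector onto $V \triangleq \mathrm{span}(\bm{\mu}_1,\ldots,\bm{\mu}_C)$. Once that is done, the statement is just $\|P_V\mathbf{z}\|_2 \le \|\mathbf{z}\|_2 = 1$. The first step is to use that every feature, ID or OOD, is unit norm. This gives
\begin{equation*}
\|\mathbf{z}-\bm{\mu}_c\|_2^2 = 1 + R^2 - 2\,\mathbf{z}^\top\bm{\mu}_c .
\end{equation*}
Hence $\mathrm{Var}_c[\|\mathbf{z}-\bm{\mu}_c\|_2^2] = 4\,\mathrm{Var}_c[\mathbf{z}^\top\bm{\mu}_c]$. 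Since $\sum_c\bm{\mu}_c=\mathbf{0}$, the mean of $\mathbf{z}^\top\bm{\mu}_c$ over $c$ is zero. The variance therefore equals $\frac{4}{C}\sum_c(\mathbf{z}^\top\bm{\mu}_c)^2 = \frac{4}{C}\mathbf{z}^\top M\mathbf{z}$, where $M=\sum_c\bm{\mu}_c\bm{\mu}_c^\top$.

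The key step, and the only one with real content, is to identify $M$. From NC2, all $\|\bm{\mu}_c\|_2=R$ and all pairwise distances equal $K$, so the off-diagonal inner products $\bm{\mu}_c^\top\bm{\mu}_{c'}=a$ are constant. Taking the inner product of $\sum_{c'}\bm{\mu}_{c'}=\mathbf{0}$ with $\bm{\mu}_c$ gives $R^2+(C-1)a=0$, so $a=-R^2/(C-1)$. Then for each $j$,
\begin{equation*}
M\bm{\mu}_j=\sum_c\bm{\mu}_c(\bm{\mu}_c^\top\bm{\mu}_j)=\Big(R^2+\tfrac{R^2}{C-1}\Big)\bm{\mu}_j-\tfrac{R^2}{C-1}\sum_c\bm{\mu}_c=\tfrac{CR^2}{C-1}\bm{\mu}_j .
\end{equation*}
Moreover $M$ vanishes on $V^\perp$. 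Thus $M=\frac{CR^2}{C-1}P_V$ with $P_V$ the orthogonal projector onto $V$. It follows that
\begin{equation*}
\mathrm{Var}_c\big[\|\mathbf{z}-\bm{\mu}_c\|_2^2\big]=\frac{4R^2}{C-1}\,\|P_V\mathbf{z}\|_2^2\le\frac{4R^2}{C-1},
\end{equation*}
with equality iff $\mathbf{z}\in V$ (using $\|\mathbf{z}\|_2=1$).

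Finally I specialize to the two cases. For an ID sample of class $c^*$, write $\hat{\phi}(\mathbf{x})=\bm{\mu}_{c^*}+(\hat{\phi}(\mathbf{x})-\bm{\mu}_{c^*})$. Both summands lie in $V$ by the additional hypothesis, so $\hat{\phi}(\mathbf{x})\in V$ and its variance attains the maximum $4R^2/(C-1)$. For any OOD sample the same formula gives a variance of at most $4R^2/(C-1)$, which proves the inequality. Equality holds iff $\|P_V\hat{\phi}(\mathbf{x}^{out})\|_2=1=\|\hat{\phi}(\mathbf{x}^{out})\|_2$, i.e. iff $\hat{\phi}(\mathbf{x}^{out})\in V$.

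I expect no serious obstacle beyond the computation of $M$. The point to be careful about is that the constancy of the Gram entries must be derived from equinorm plus equal distances together with the centering $\sum_c\bm{\mu}_c=\mathbf{0}$. The argument also relies on the unit-norm property of $\hat{\phi}$ for both ID and OOD samples, which comes from the L2 normalization rather than from Weak NC1; the radius $\epsilon$ plays no role here.
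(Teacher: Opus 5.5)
Your proposal is correct and follows the paper's proof essentially step for step: you use unit norm and $\sum_c\bm{\mu}_c=\mathbf{0}$ to reduce the variance to $\frac{4}{C}\mathbf{z}^\top\bigl(\sum_c\bm{\mu}_c\bm{\mu}_c^\top\bigr)\mathbf{z}$, you identify $\sum_c\bm{\mu}_c\bm{\mu}_c^\top=\frac{CR^2}{C-1}P_V$ with $V=\mathrm{span}(\bm{\mu}_1,\ldots,\bm{\mu}_C)$, and you conclude from $\|P_V\mathbf{z}\|_2^2\le 1$, with equality iff $\mathbf{z}\in V$. The only difference is in how the projector identity is verified. The paper computes the Gram matrix $MM^\top=\frac{R^2C}{C-1}H$ and applies the pseudo-inverse formula $P=M^\top(MM^\top)^+M$, whereas you check directly that each $\bm{\mu}_j$ is an eigenvector with a common eigenvalue and that the operator annihilates $V^\perp$, which is slightly more elementary.
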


Corollary~\ref{corollary:variance_separation_always} establishes that under the ETF subspace alignment assumption, ID samples universally exhibit higher class-wise distance variance than OOD samples. A notable implication of this result is that even when a near-OOD sample shares the same nearest class distance as an ID sample, the variance term retains discriminative power as long as the near-OOD sample does not lie within the ETF subspace spanned by the class means, a signal that the nearest class distance alone cannot capture. Motivated by the empirical findings in Section~\ref{observation_sharp} and the theoretical analysis presented in this section, we propose \textbf{MahaVar}, a post-hoc OOD score that augments the standard Mahalanobis distance with a class-wise distance variance term:
\begin{equation}
S(\mathbf{x}) = -\min_{c \in \{1, \ldots, C\}} d_{\mathrm{Maha},c}(\hat{\phi}(\mathbf{x})) + \alpha \cdot \mathrm{Var}_c\left[d_{\mathrm{Maha},c}(\hat{\phi}(\mathbf{x}))\right]
\end{equation}
where $\hat{\phi}(\mathbf{x}) = \phi(\mathbf{x}) / \|\phi(\mathbf{x})\|_2$ denotes the L2-normalized penultimate feature and $\alpha \geq 0$ is a scalar hyperparameter that controls the contribution of the variance term. Note that while the theoretical analysis is conducted under the $\ell_2$ distance for analytical tractability, an analogous bound holds for the Mahalanobis distance under the assumption that the within-class deviation in Mahalanobis sense satisfies $\epsilon_M < K_M/2$, where $\epsilon_M$ and $K_M$ denote the Mahalanobis counterparts of $\epsilon$ and $K$, respectively. All proofs are provided in Appendix~\ref{appendix:proof_theorems}. Also, the practical implementation of MahaVar omits the centering step in $\hat{\phi}$; we discuss the rationale and its empirical implications in Appendix~\ref{appendix:effect_centering}.

\section{Experiment}
\label{sec:experiment}

\begin{table}[t]
\centering
\caption{OOD detection performance on ImageNet benchmark. Results are averaged over three backbone architectures: ResNet-50, Swin-B, and ViT-B, and reported in terms of AUROC and FPR@95 for each OOD dataset. The best and second best results are highlighted in \textbf{bold} and \underline{underline}, respectively.}
\label{tab:main_results_imagenet}
\resizebox{\textwidth}{!}{
\begin{tabular}{lcccccccccccc}
\toprule
\multirow{3}{*}{Method} & \multicolumn{4}{c}{Near-OOD} & \multicolumn{6}{c}{Far-OOD} & \multicolumn{2}{c}{\multirow{2}{*}{Avg}} \\
\cmidrule(lr){2-5} \cmidrule(lr){6-11}
& \multicolumn{2}{c}{SSB-Hard} & \multicolumn{2}{c}{NINCO} & \multicolumn{2}{c}{iNaturalist} & \multicolumn{2}{c}{Texture} & \multicolumn{2}{c}{OpenImage-O} & & \\
\cmidrule(lr){2-3} \cmidrule(lr){4-5} \cmidrule(lr){6-7} \cmidrule(lr){8-9} \cmidrule(lr){10-11} \cmidrule(lr){12-13}
& AUROC & FPR@95 & AUROC & FPR@95 & AUROC & FPR@95 & AUROC & FPR@95 & AUROC & FPR@95 & AUROC & FPR@95 \\
\midrule
MSP         & 70.83 & 83.15 & 78.83 & 73.64 & 87.68 & 51.71 & 80.73 & 63.93 & 83.87 & 61.97 & 80.39 & 66.88 \\
MaxLogit    & 67.47 & 83.60 & 74.89 & 74.46 & 85.10 & 52.86 & 80.66 & 57.78 & 80.60 & 60.71 & 77.74 & 65.88 \\
KL-matching & 69.38 & 82.16 & 80.59 & 71.49 & 90.00 & 46.72 & 83.50 & 63.61 & 87.05 & 58.53 & 82.11 & 64.50 \\
Energy      & 63.59 & 85.99 & 70.01 & 78.61 & 79.22 & 63.69 & 78.45 & 59.00 & 75.35 & 66.30 & 73.32 & 70.72 \\
GEN         & \underline{72.02} & 83.48 & 82.04 & 71.07 & 92.46 & 39.91 & 86.00 & 54.83 & 88.83 & 53.68 & 84.27 & 60.59 \\
ReAct       & 68.29 & 84.56 & 79.14 & 74.73 & 90.31 & 48.05 & 86.78 & 52.45 & 87.31 & 56.21 & 82.37 & 63.20 \\
ASH-S       & 48.18 & 92.34 & 44.57 & 87.33 & 40.89 & 70.66 & 50.15 & 70.14 & 42.95 & 76.56 & 45.35 & 79.40 \\
SCALE       & 63.33 & 86.53 & 69.71 & 77.14 & 78.24 & 56.94 & 81.21 & 51.09 & 77.73 & 60.58 & 74.04 & 66.46 \\
KNN         & 66.29 & 88.71 & 81.03 & 75.80 & 88.82 & 56.87 & 91.19 & 39.31 & 88.23 & 57.08 & 83.11 & 63.55 \\
NCI         & 67.75 & 86.91 & 80.79 & 74.83 & 91.50 & 49.59 & 87.72 & 51.68 & 89.91 & 54.39 & 83.53 & 63.48 \\
NCI+filter  & 69.56 & 84.96 & 81.89 & 71.74 & 92.85 & 40.23 & 88.95 & 45.19 & 90.89 & 47.95 & 84.83 & 58.01 \\
fDBD        & 68.64 & 86.32 & 81.51 & 73.82 & 92.02 & 47.02 & 88.29 & 50.35 & 90.40 & 53.06 & 84.17 & 62.12 \\
ViM         & 69.63 & 83.92 & 82.56 & 69.67 & 93.28 & 39.41 & 91.06 & \textbf{37.09} & 91.75 & 45.38 & 85.66 & 55.10 \\
NNGuide     & 71.67 & 84.11 & 83.21 & 70.79 & 93.48 & 39.39 & 90.70 & 40.98 & 91.44 & 48.96 & 86.10 & 56.85 \\
\midrule
Mahalanobis     & 64.46 & 85.03 & 77.99 & 73.77 & 84.90 & 47.86 & 88.09 & 52.51 & 84.70 & 56.27 & 80.03 & 63.09 \\
Mahalanobis++   & 71.50 & \underline{78.45} & \underline{85.56} & \underline{61.95} & \underline{95.69} & \underline{22.22} & \textbf{91.90} & \underline{37.62} & \underline{92.83} & \underline{39.24} & \underline{87.50} & \underline{47.90} \\
\textbf{MahaVar (Ours)} & \textbf{73.02} & \textbf{75.87} & \textbf{86.92} & \textbf{59.03} & \textbf{96.82} & \textbf{16.12} & \underline{91.87} & 38.09 & \textbf{93.33} & \textbf{36.83} & \textbf{88.39} & \textbf{45.19} \\
\bottomrule
\end{tabular}
}
\end{table}

\textbf{Experimental Setup.} We follow the benchmark protocol proposed in OpenOOD v1.5. For CIFAR-10 and CIFAR-100 as ID datasets, we use ResNet-18 as the backbone, with MNIST~\cite{deng2012mnist}, SVHN~\cite{netzer2011reading}, Textures~\cite{cimpoi2014describing}, and Places365~\cite{zhou2017places} as far-OOD datasets and TIN~\cite{le2015tiny} and CIFAR-100 and CIFAR-10 as near-OOD datasets, respectively. For ImageNet as the ID dataset, we evaluate across three backbone architectures: ResNet-50, Swin-B, and ViT-B, with iNaturalist~\cite{van2018inaturalist}, Textures, and OpenImage-O~\cite{wang2022vim} as far-OOD datasets and SSB-Hard~\cite{vaze2021open} and NINCO~\cite{bitterwolf2023or} as near-OOD datasets. For the CIFAR-10/100 ID settings, we use the pretrained models provided by OpenOOD v1.5. For the ImageNet ID setting, we use the pretrained models provided by torchvision~\citep{torchvision2016}. We compare MahaVar against a comprehensive set of post-hoc OOD detection baselines: Mahalanobis~\citep{lee2018simple}, Mahalanobis++~\citep{muller2025mahalanobis++}, KNN~\citep{sun2022out}, NCI~\citep{liu2025detecting}, NCI+filter~\citep{liu2025detecting}, ViM~\citep{wang2022vim}, fDBD~\citep{liu2024fast}, MSP~\citep{hendrycks2017a}, MaxLogit~\citep{hendrycks2022scaling}, KL-matching~\citep{hendrycks2022scaling}, Energy~\citep{liu2020energy}, GEN~\citep{Liu_2023_CVPR}, ReAct~\citep{sun2021react}, ASH-S~\citep{djurisic2023extremely}, SCALE~\citep{xu2024scaling}, and NNGuide~\citep{park2023nearest}. Performance is evaluated using AUROC and FPR@95. All hyperparameters are tuned on the validation sets provided by OpenOOD v1.5, with train, validation, and test splits following its data configuration. The main results for the ImageNet ID setting averaged over three backbone architectures and the CIFAR-10/100 ID settings are presented in Table~\ref{tab:main_results_imagenet} and Table~\ref{tab:main_results_cifar}, respectively. Further implementation details are provided in Appendix~\ref{appendix:implementation_details}. 

\textbf{Main Results.} According to Table~\ref{tab:main_results_imagenet}, MahaVar achieves the best AUROC and FPR@95 on all Near-OOD datasets and on two out of three Far-OOD datasets, with Texture being the only exception where MahaVar marginally underperforms Mahalanobis++, confirming that the variance term provides meaningful complementary signal beyond the nearest class distance alone. Notably, while SCALE and ASH-S achieve the highest performance on ResNet-50, their performance collapses dramatically on Transformer-based backbones, with average AUROC dropping to 63.58\% (SCALE) and 23.63\% (ASH-S) on Swin-B, and 68.31\% (SCALE) and 22.89\% (ASH-S) on ViT-B, respectively. In contrast, MahaVar achieves competitive performance on ResNet-50 and attains the best overall performance on Swin-B and ViT-B, consistently outperforming Mahalanobis++ across all backbone architectures and demonstrating stable gains regardless of the underlying feature extractor. As shown in Table~\ref{tab:main_results_cifar}, MahaVar consistently outperforms all Mahalanobis-based baselines on both CIFAR settings on average. On CIFAR-100, MahaVar achieves the best overall AUROC and FPR@95 at 83.06\% and 68.50\%, respectively, outperforming all evaluated baselines. On CIFAR-10, while KNN achieves a higher AUROC, MahaVar outperforms all Mahalanobis-based methods including Mahalanobis++ in both metrics, confirming that the variance term provides reliable gains within the representation-based family. Taken together with the ImageNet results in Table~\ref{tab:main_results_imagenet}, MahaVar achieves state-of-the-art performance on the more challenging ID settings of CIFAR-100 and ImageNet, demonstrating that the variance term provides a reliable complementary signal across diverse benchmark scales. Detailed per-backbone results on ImageNet and per-dataset results on CIFAR-10/100 are provided in Appendix~\ref{appendix:additional_experiments}.

\begin{table}[t]
\centering
\caption{OOD detection performance on CIFAR-10 and CIFAR-100 benchmarks. Results are reported in terms of AUROC and FPR@95, averaged over all OOD datasets for each ID setting. The best and second best results are highlighted in \textbf{bold} and \underline{underline}, respectively.}
\label{tab:main_results_cifar}
\resizebox{0.6\textwidth}{!}{
\begin{tabular}{lcccc}
\toprule
\multirow{2}{*}{Method} & \multicolumn{2}{c}{CIFAR-10} & \multicolumn{2}{c}{CIFAR-100} \\
\cmidrule(lr){2-3} \cmidrule(lr){4-5}
& AUROC $\uparrow$ & FPR@95 $\downarrow$ & AUROC $\uparrow$ & FPR@95 $\downarrow$ \\
\midrule
MSP                 & 89.51 & 51.72 & 77.95 & 80.53 \\
MaxLogit            & 89.55 & \underline{40.74} & 79.46 & 79.41 \\
KL-matching         & 86.46 & 52.12 & 78.27 & 78.04 \\
Energy              & 89.64 & \textbf{40.38} & 79.48 & 78.91 \\
GEN                 & 90.01 & 43.18 & 79.42 & 79.32 \\
ReAct               & 89.32 & 43.44 & 79.56 & 78.93 \\
ASH-S               & 85.23 & 45.04 & 80.87 & 75.46 \\
SCALE               & 86.21 & 44.13 & 80.64 & 76.17 \\
KNN                 & \textbf{91.34} & 46.45 & 77.63 & 79.04 \\
NCI                 & 90.00 & 50.49 & 80.61 & 78.44 \\
NCI+filter          & 90.00 & 50.49 & 80.61 & 78.44 \\
fDBD                & 88.19 & 54.03 & 79.99 & 78.43 \\
ViM                 & 89.58 & 58.37 & 79.05 & 78.88 \\
NNGuide             & 89.72 & 41.10 & 79.88 & 78.76 \\
\midrule
Mahalanobis         & 87.65 & 67.04 & 66.67 & 90.68 \\
Mahalanobis++       & 90.36 & 50.21 & \underline{82.01} & \underline{70.72} \\
\textbf{MahaVar (Ours)} & \underline{90.91} & 46.09 & \textbf{83.06} & \textbf{68.50} \\
\bottomrule
\end{tabular}
}
\end{table}

\textbf{Ablation Study.} We conduct ablation studies to validate the design choices of MahaVar. Adding the variance term to the standard Mahalanobis distance without L2 normalization already yields substantial improvements over the baseline, while MahaVar combining both components achieves the best performance, confirming that the variance term carries meaningful discriminative information and that L2 normalization plays a complementary role in stabilizing the feature geometry for distance computation (Appendix~\ref{appendix:effect_of_var}). Extending the score with a skewness term yields marginal but consistent gains across most datasets, suggesting that higher-order moments of the class-wise distance distribution contain additional discriminative information; however, since the gains are modest, we adopt the variance-only formulation as our main method (Appendix~\ref{appendix:effect_of_higher_order}). The sensitivity of MahaVar to the hyperparameter $\alpha$ is analyzed across a wide range of values; the optimal $\alpha$ lies in the range of 0.05 to 0.1 across all backbone architectures, and performance degrades as the variance term begins to dominate the nearest-class distance term beyond this range. This range can be reliably identified using the validation set provided by OpenOOD v1.5, making $\alpha$ selection straightforward in practice (Appendix~\ref{appendix:alpha_sensitivity}). Across all evaluated distance metrics including L1, L2, and Mahalanobis distances, incorporating the variance term consistently improves performance regardless of the base metric, confirming the general utility of the variance signal; among all combinations, the Mahalanobis distance achieves the best results owing to its ability to account for the covariance structure of the feature space (Appendix~\ref{appendix:distance_metric}). Finally, performance generally improves as the number of nearest classes $K$ used for variance computation increases, with $K = 1000$ being optimal for ResNet-50 and Swin-B, while ViT-B peaks at $K = 500$ due to its insufficient penultimate layer dimension ($d = 768 < C - 1 = 999$) preventing complete ETF formation. Nevertheless, we adopt $K = 1000$ across all ImageNet backbones in the main experiments for consistency (Appendix~\ref{appendix:top_k_effect}). Additionally, for CIFAR-10 and CIFAR-100, all available classes ($K = 10$ and $K = 100$, respectively) are used.

\section{Conclusion}
\label{conclusion}
We presented MahaVar, a simple and effective post-hoc OOD detector that augments the Mahalanobis distance with a class-wise distance variance term. The key insight is that class-wise distance variance serves as a geometric indicator of class alignment: under Neural Collapse geometry, ID samples concentrate near a specific class mean while remaining far from all others, and this sharp minimum structure directly induces higher class-wise distance variance than OOD samples exhibit. This theoretical grounding, combined with consistent empirical gains across diverse benchmarks and backbone architectures, suggests that the full class-wise distance structure contains richer discriminative information than the nearest-class distance alone can capture. We believe that extending the theoretical framework beyond Neural Collapse, incorporating higher-order moments such as skewness, and developing a rigorous treatment of the non-centered feature setting are promising directions for future work.



\bibliographystyle{plainnat}
\bibliography{references}

\newpage
\appendix

\section{Additional Sorted Class-wise Mahalanobis Distance Structures}
\label{appendix:additional_sorted}

This section presents additional sorted class-wise Mahalanobis distance structures and variance distributions to complement the observations in Section~\ref{sec:obseravtion_sharp}. Specifically, we include results on non-L2-normalized penultimate features for CIFAR-10/100 (Figure~\ref{fig:sorted_maha_cifar_raw}), and both L2-normalized and non-L2-normalized penultimate features for ImageNet (Figure~\ref{fig:sorted_maha_imagenet_l2} and Figure~\ref{fig:sorted_maha_imagenet_raw}, respectively). The sharp minimum structure of ID samples is consistently observed across all settings, confirming that the observation in Section~\ref{sec:obseravtion_sharp} is not a result of L2 normalization. Additionally, we provide the corresponding $\text{Var}_c[d_{\text{Maha},c}(x)]$ distributions for CIFAR-10/100 without L2 normalization (Figure~\ref{fig:var_dist_cifar_raw}), and for ImageNet with and without L2 normalization (Figure~\ref{fig:var_dist_imagenet_l2} and Figure~\ref{fig:var_dist_imagenet_raw}, respectively). Consistent with the main observations, ID samples exhibit higher class-wise distance variance than OOD samples across all settings, with the exception of Transformer-based backbones on ImageNet where the separation is less pronounced due to the insufficient penultimate layer 
dimension ($d = 768 < C - 1 = 999$ for ViT-B) that prevents complete ETF formation.
\begin{figure}[h]
    \centering
    \includegraphics[width=0.9\linewidth]{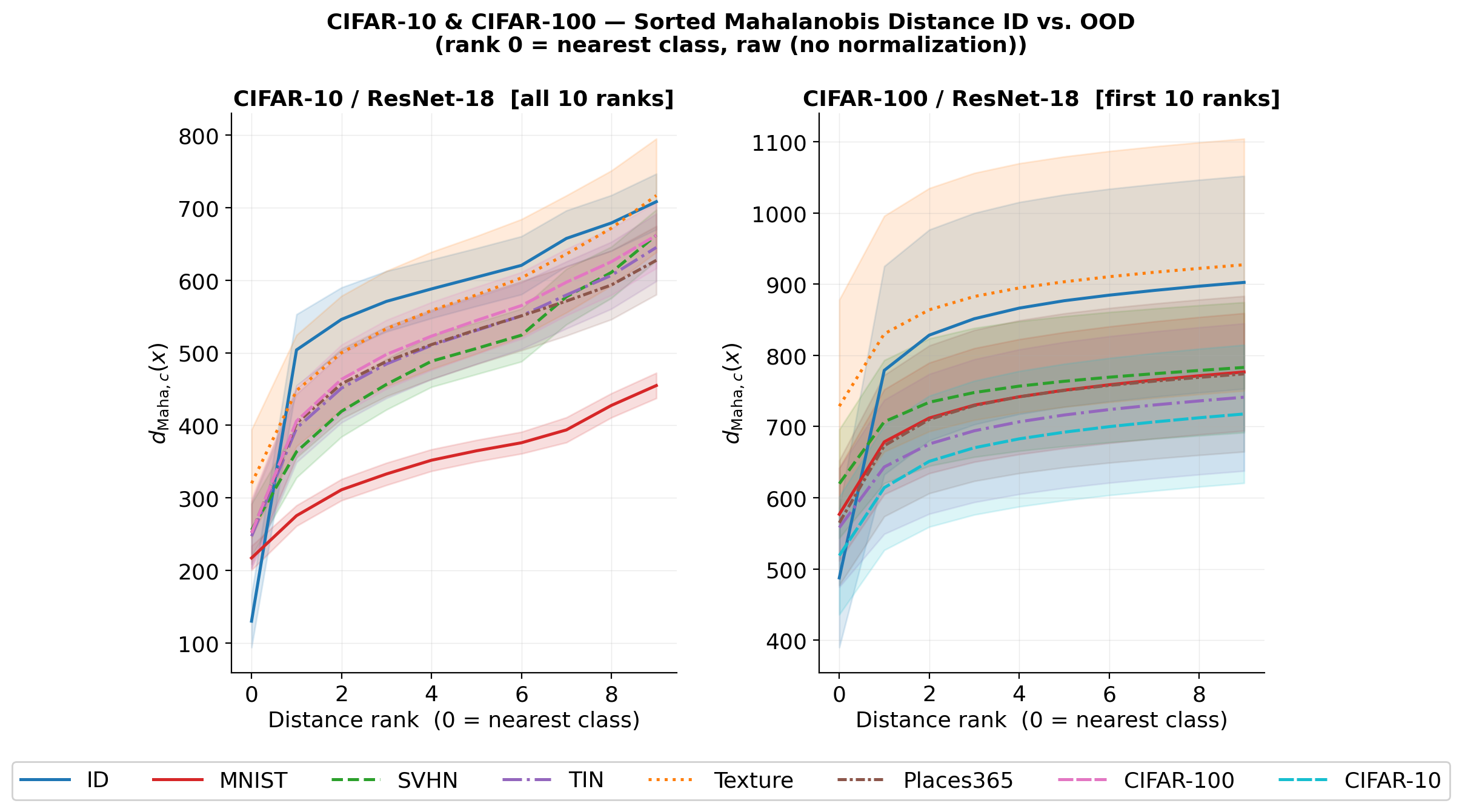}
    \caption{Sorted class-wise $d_{\text{Maha},c}(\mathbf{x})$ (without L2 normalization) on CIFAR-10 and CIFAR-100 as ID datasets, evaluated on OOD datasets following the OpenOOD v1.5 benchmark protocol with ResNet-18 as the backbone. The x-axis represents the Mahalanobis distance rank, where rank 0 corresponds to the nearest class mean. Shaded regions indicate $\pm 0.5\sigma$ across samples at each rank.}
    \label{fig:sorted_maha_cifar_raw}
\end{figure}

\begin{figure}[h]
    \centering
    \includegraphics[width=0.9\linewidth]{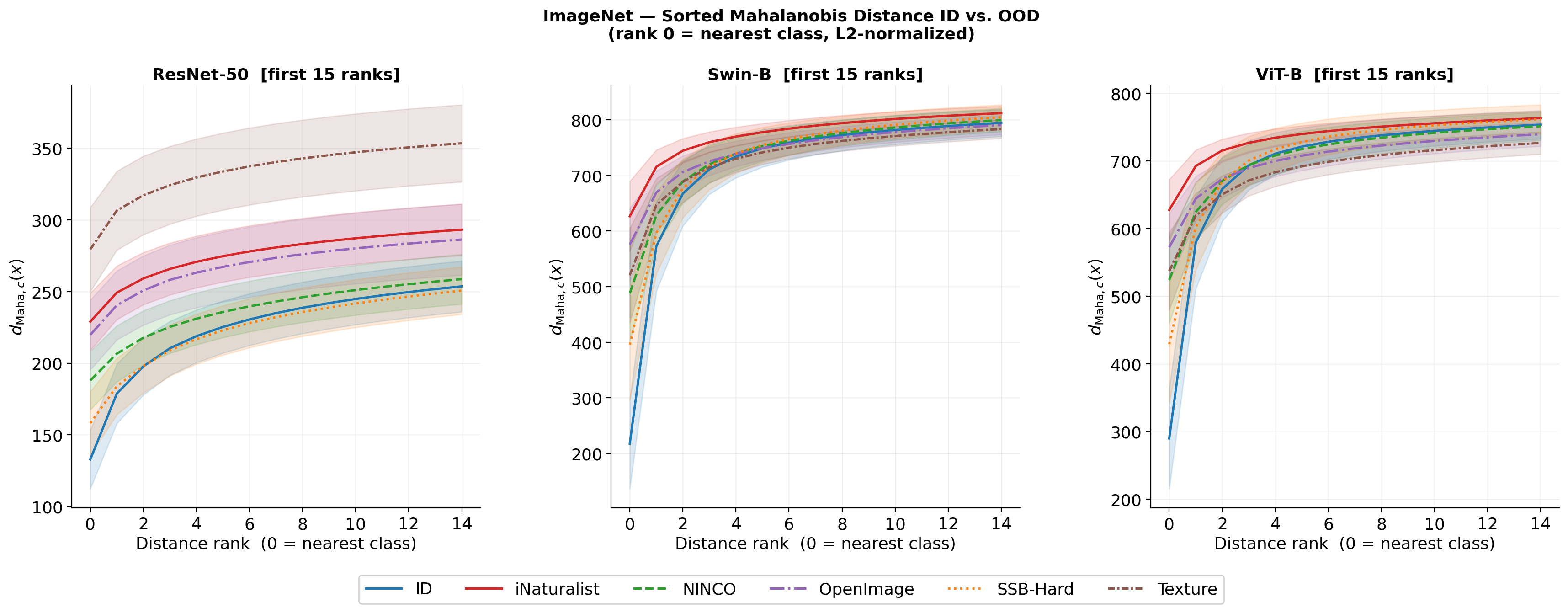}
    \caption{Sorted class-wise $d_{\text{Maha},c}(\mathbf{x})$ of Mahalanobis++ on ImageNet as the ID dataset, evaluated on five OOD datasets following the OpenOOD v1.5 benchmark protocol across three backbone architectures (ResNet-50, Swin-B, and ViT-B). The x-axis represents the Mahalanobis distance rank, where rank 0 corresponds to the nearest class mean. Shaded regions indicate $\pm 0.5\sigma$ across samples at each rank.}
    \label{fig:sorted_maha_imagenet_l2}
\end{figure}

\begin{figure}[h]
    \centering
    \includegraphics[width=0.9\linewidth]{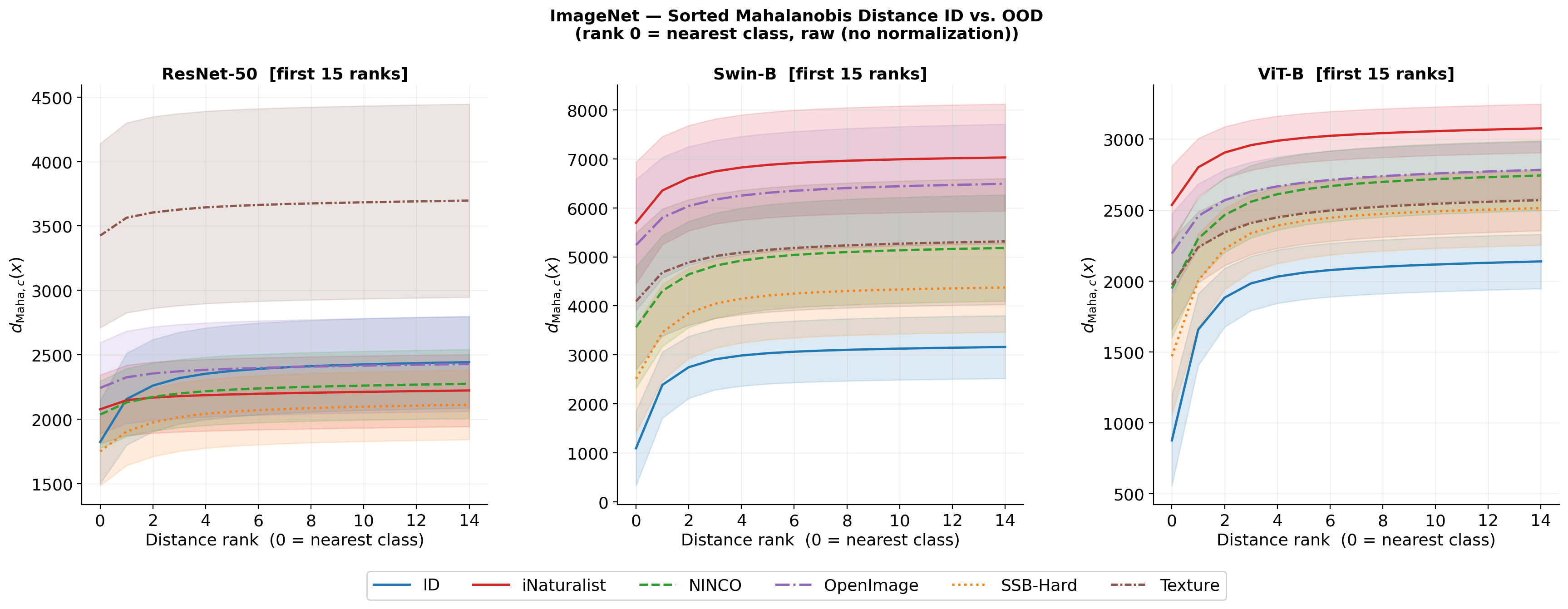}
    \caption{Sorted class-wise $d_{\text{Maha},c}(\mathbf{x})$ (without L2 normalization) on ImageNet as the ID dataset, evaluated on five OOD datasets following the OpenOOD v1.5 benchmark protocol across three backbone architectures (ResNet-50, Swin-B, and ViT-B). The x-axis represents the Mahalanobis distance rank, where rank 0 corresponds to the nearest class mean. Shaded regions indicate $\pm 0.5\sigma$ across samples at each rank.}
    \label{fig:sorted_maha_imagenet_raw}
\end{figure}

\begin{figure}[h]
    \centering
    \includegraphics[width=0.9\linewidth]{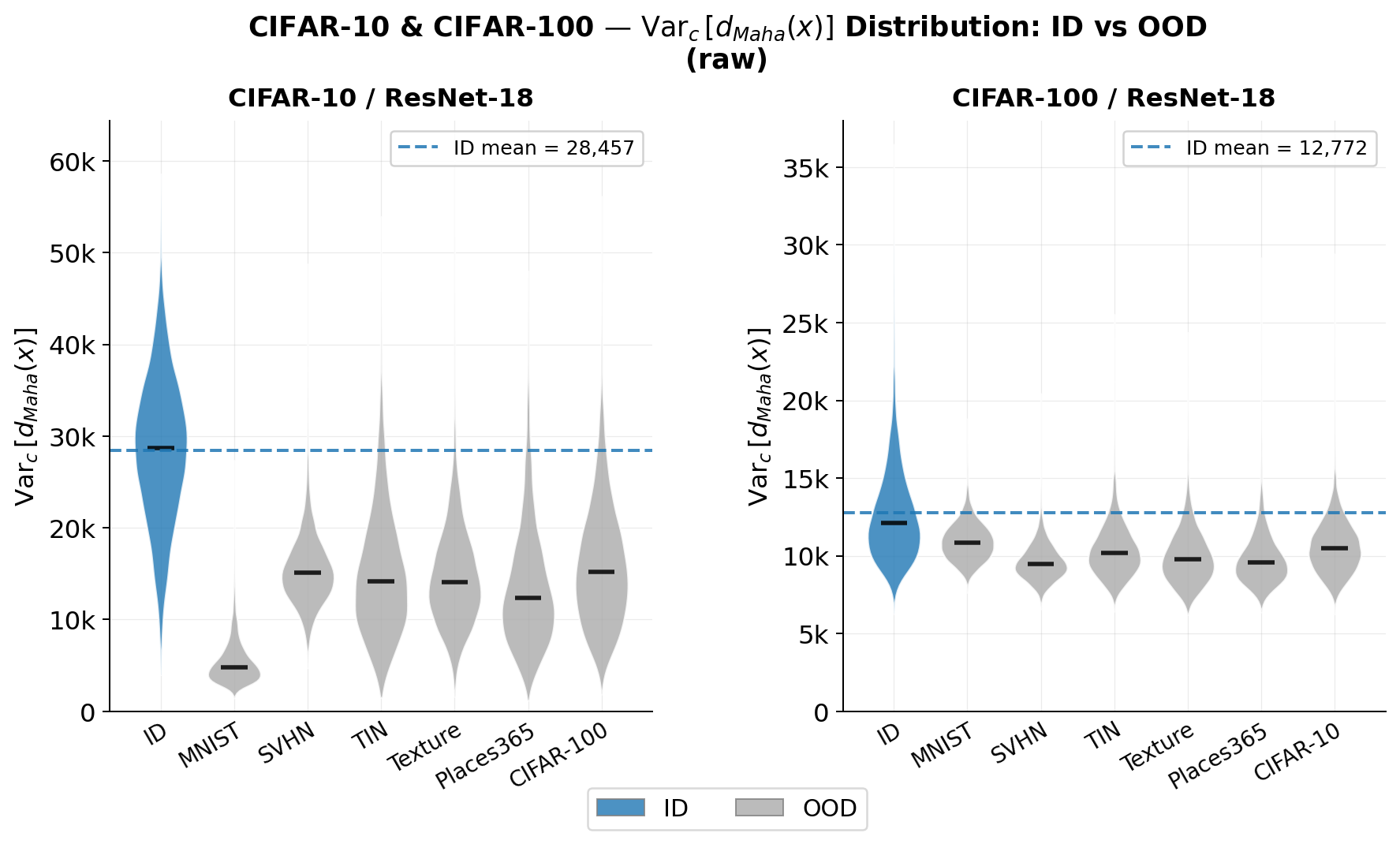}
    \caption{Distribution of $\text{Var}_c[d_{\text{Maha},c}(\mathbf{x})]$ for ID and OOD samples on CIFAR-10 and CIFAR-100 without L2 normalization, evaluated on OOD datasets following the OpenOOD v1.5 benchmark protocol with ResNet-18 as the backbone. The dashed blue line indicates the mean variance of ID samples. Black bars indicate the median of each distribution.}
    \label{fig:var_dist_cifar_raw}
\end{figure}

\begin{figure}[h]
    \centering
    \includegraphics[width=0.9\linewidth]{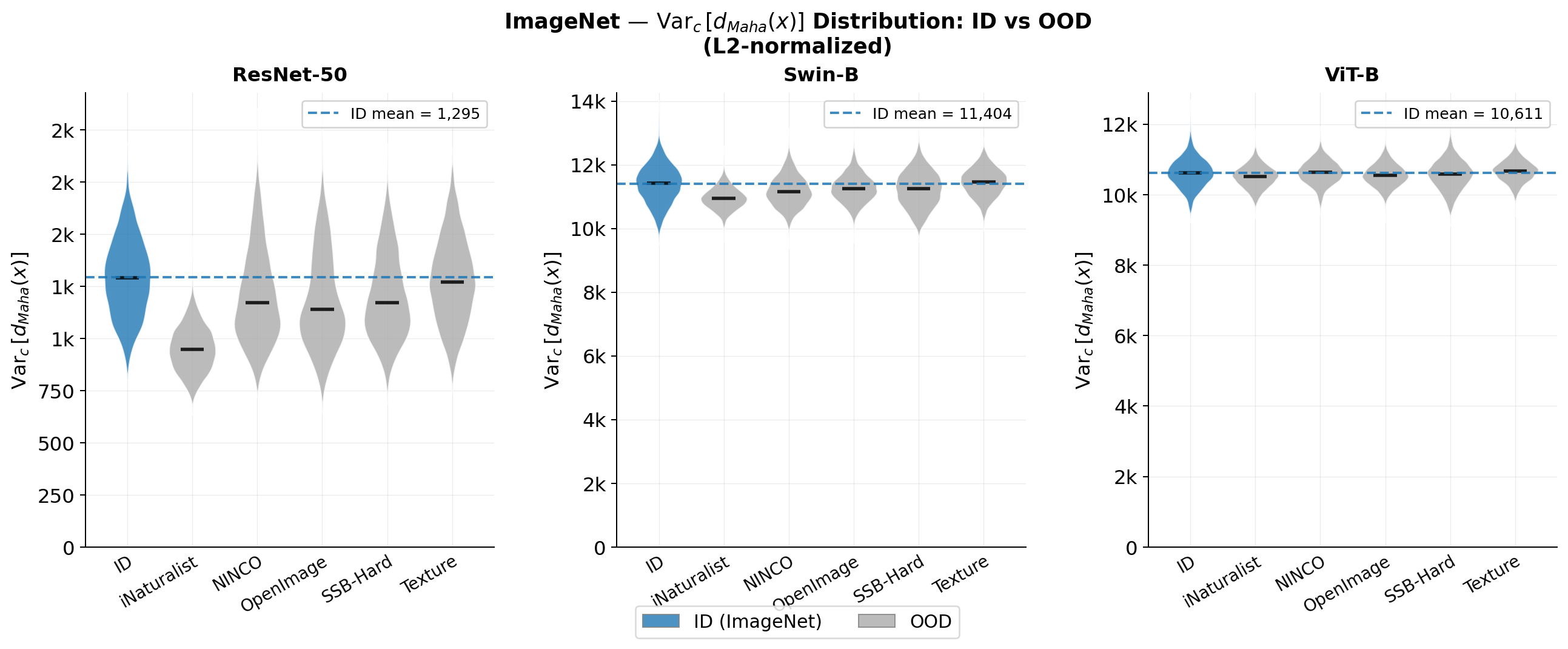}
    \caption{Distribution of $\text{Var}_c[d_{\text{Maha},c}(\mathbf{x})]$ for ID and OOD samples on ImageNet with L2 normalization following Mahalanobis++, evaluated on five OOD datasets following the OpenOOD v1.5 benchmark protocol across three backbone architectures (ResNet-50, Swin-B, and ViT-B). The dashed blue line indicates the mean variance of ID samples. Black bars indicate the median of each distribution.}
    \label{fig:var_dist_imagenet_l2}
\end{figure}

\begin{figure}[h]
    \centering
    \includegraphics[width=0.9\linewidth]{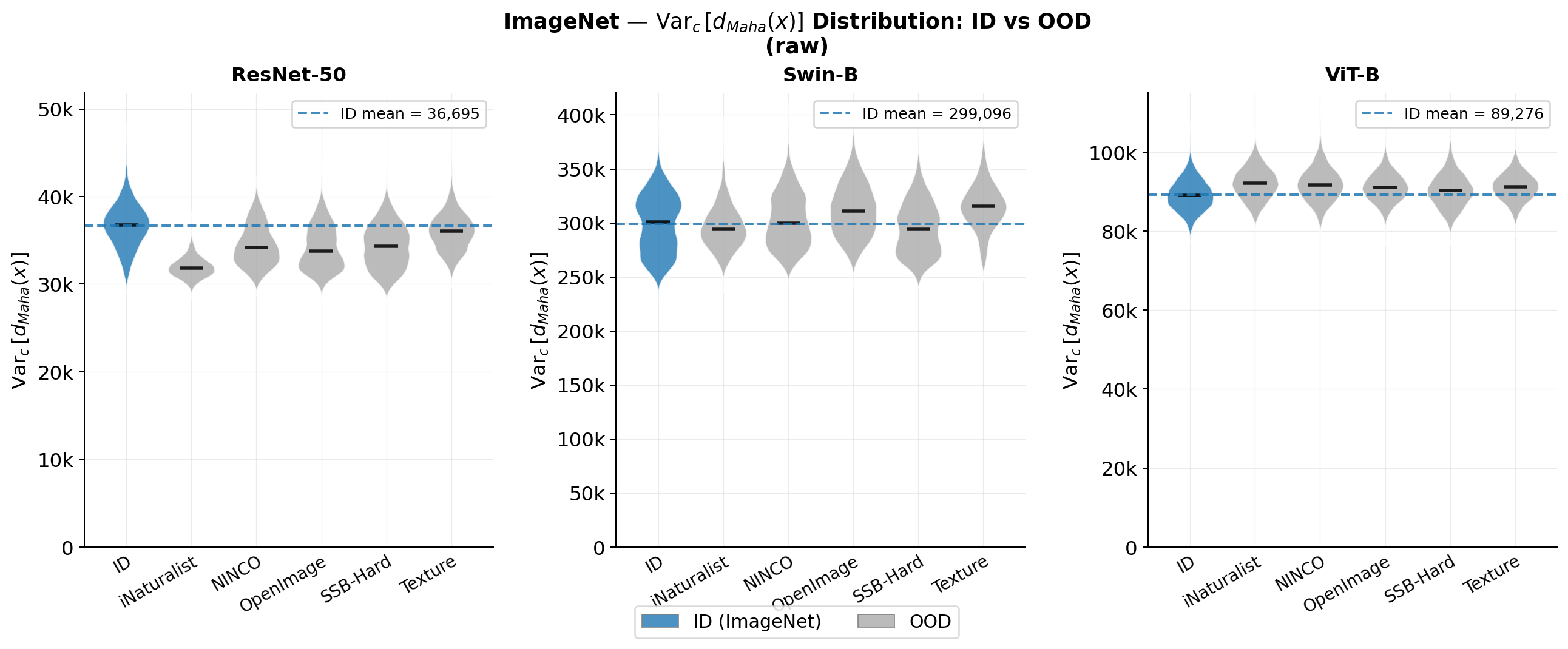}
    \caption{Distribution of $\text{Var}_c[d_{\text{Maha},c}(\mathbf{x})]$ for ID and OOD samples on ImageNet without L2 normalization, evaluated on five OOD datasets following the OpenOOD v1.5 benchmark protocol across three backbone architectures (ResNet-50, Swin-B, and ViT-B). The dashed blue line indicates the mean variance of ID samples. Black bars indicate the median of each distribution.}
    \label{fig:var_dist_imagenet_raw}
\end{figure}

\clearpage
\section{Proof Theorems}
\label{appendix:proof_theorems}
\setcounter{section}{4}
\setcounter{theorem}{1}
\renewcommand{\thesection}{\arabic{section}}

\begin{theorem}[Bounds on ID Class-wise $\ell_2$ Distance Variance]
\label{theorem:id_variance_bound_appendix}
Let $\phi: \mathcal{X} \rightarrow \mathbb{R}^d$ be the penultimate feature extractor of a pretrained network, where $\mathcal{X}_{in} \subset \mathcal{X}$ denotes the in-distribution input space. Suppose Assumption~\ref{assumption:weak_nc1_nc2} holds for $\phi$ on all $\mathbf{x} \in \mathcal{X}_{in}$. Then for any test ID sample $\mathbf{x} \in \mathcal{X}_{in}$, the class-wise squared $\ell_2$ distance variance satisfies:
\begin{equation*}
\frac{(C-1)K^4}{C^2}\left(1-\gamma\right)^2 \leq \mathrm{Var}_c\left[\|\phi(\mathbf{x}) - \bm{\mu}_c\|_2^2\right] \leq \frac{(C-1)K^4}{C^2}\left(1+\gamma\right)^2 + \frac{2\epsilon^2 K^2(C-2)}{C}
\label{eq:id_variance_bound}
\end{equation*}
where $\gamma \triangleq \epsilon\sqrt{\frac{2C}{K^2(C-1)}}$. In particular, as $\epsilon \to 0$, $\gamma \to 0$ and both bounds converge to $\frac{(C-1)K^4}{C^2}$, which depends solely on the inter-class distance $K$ and the number of classes $C$.
\end{theorem}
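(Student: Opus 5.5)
The plan is to rewrite the class-wise squared distances so that the variance becomes an exact quadratic form in the centered feature. Then use the tight-frame property of the simplex ETF to evaluate that form, and bound it by perturbing around the class mean.

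\textbf{Step 1 (reduction to inner products).} Fix an ID sample $\mathbf{x}$ with label $c^*$. Set $\mathbf{v}_c \triangleq \bm{\mu}_c - \bm{\mu}_G$ and $\mathbf{u} \triangleq \phi(\mathbf{x}) - \bm{\mu}_G = \mathbf{v}_{c^*} + \bm{\delta}$, where $\|\bm{\delta}\|_2 \le \epsilon_{c^*} \le \epsilon$ by Weak NC1. Expanding gives
\begin{equation*}
\|\phi(\mathbf{x})-\bm{\mu}_c\|_2^2 = \|\mathbf{u}\|_2^2 + R^2 - 2\,\mathbf{u}^\top \mathbf{v}_c .
\end{equation*}
The first two terms do not depend on $c$, so $\mathrm{Var}_c[\|\phi(\mathbf{x})-\bm{\mu}_c\|_2^2] = 4\,\mathrm{Var}_c[\mathbf{u}^\top\mathbf{v}_c]$.

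\textbf{Step 2 (ETF identities).} I will use $\sum_c \mathbf{v}_c = \mathbf{0}$, which holds by the definition of $\bm{\mu}_G$ as the average of the (balanced) class means. Combined with equinorm $R$ and pairwise distance $K$, this gives $\mathbf{v}_c^\top\mathbf{v}_{c'} = -R^2/(C-1)$ for $c\ne c'$, and hence $K^2 = 2R^2 C/(C-1)$. The frame is tight:
\begin{equation*}
\sum_c \mathbf{v}_c\mathbf{v}_c^\top = \tfrac{C R^2}{C-1}\,P,
\end{equation*}
where $P$ is the orthogonal projector onto $\mathrm{span}(\mathbf{v}_1,\ldots,\mathbf{v}_C)$. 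To verify this, check its action on each $\mathbf{v}_{c'}$ using the Gram matrix, and note that it annihilates the orthogonal complement.

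Because $\sum_c \mathbf{u}^\top\mathbf{v}_c = 0$, the mean of $\mathbf{u}^\top\mathbf{v}_c$ over $c$ vanishes. Therefore
\begin{equation*}
\mathrm{Var}_c[\cdot] = \frac{4}{C}\sum_c(\mathbf{u}^\top\mathbf{v}_c)^2 = \frac{4R^2}{C-1}\|P\mathbf{u}\|_2^2 = \frac{2K^2}{C}\|P\mathbf{u}\|_2^2 .
\end{equation*}

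\textbf{Step 3 (perturbation bounds).} Since $P\mathbf{u} = \mathbf{v}_{c^*} + P\bm{\delta}$ and $\|P\bm{\delta}\|_2\le\epsilon$, the triangle inequality gives $R-\epsilon \le \|P\mathbf{u}\|_2 \le R+\epsilon$. For the lower bound to be meaningful we need $R - \epsilon \ge 0$. This follows from $\epsilon < K/2 = R\sqrt{C/(2(C-1))} \le R$ for $C\ge2$. Substituting $R^2 = K^2(C-1)/(2C)$ then yields
\begin{equation*}
\frac{(C-1)K^4}{C^2}\Bigl(1\mp \frac{\epsilon}{R}\Bigr)^2 .
\end{equation*}
Here $\epsilon/R = \epsilon\sqrt{2C/(K^2(C-1))} = \gamma$, which is exactly the two main terms of the statement. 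The extra nonnegative term $\tfrac{2\epsilon^2K^2(C-2)}{C}$ only loosens the upper bound, so the stated upper bound follows a fortiori. The limit as $\epsilon\to0$ is immediate.

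\textbf{Main obstacle.} The expected difficulty is Step 2: establishing the tight-frame identity cleanly, and justifying that $\sum_c\mathbf{v}_c=\mathbf{0}$ (equal class weighting in $\bm{\mu}_G$). Once that is in place, the variance is an exact function of $\|P\mathbf{u}\|_2$ and the rest is the triangle inequality. If the balanced-class centering is not available, I would instead bound the mean term $\frac1C\sum_c \mathbf{u}^\top\mathbf{v}_c$ directly, and that is where slack like the extra $\epsilon^2$ term would be absorbed.
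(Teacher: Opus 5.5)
Your proof is correct, and it takes a different route from the paper's.

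The paper centers at the sample's own class mean. With $\Delta=\phi(\mathbf{x})-\bm{\mu}_{c^*}$, it writes each off-class distance as $\|\Delta\|_2^2+K^2+\xi_c$, where $\xi_c=2\Delta^\top(\bm{\mu}_{c^*}-\bm{\mu}_c)$. It then derives the exact identity $\mathrm{Var}_c=\frac{(C-1)(K^2+\bar{\xi})^2}{C^2}+\frac{(C-1)S_\xi^2}{C}$ and bounds the mean shift $\bar{\xi}$ and the spread $S_\xi^2$ separately by Cauchy--Schwarz.

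You center at $\bm{\mu}_G$ and use the tight-frame identity. This is the same $M^\top M=\frac{R^2C}{C-1}P$ computation the paper only brings in later, for the subspace-alignment corollary. It collapses everything to $\mathrm{Var}_c=\frac{2K^2}{C}\|P(\phi(\mathbf{x})-\bm{\mu}_G)\|_2^2$, and expanding $\|\bm{\mu}_{c^*}-\bm{\mu}_G+P\bm{\delta}\|_2^2$ shows this coincides with the paper's identity.

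What your route buys is sharpness. Both of your bounds are attained, at $\bm{\delta}=\mp\epsilon(\bm{\mu}_{c^*}-\bm{\mu}_G)/R$ respectively. So the paper's extra term $\frac{2\epsilon^2K^2(C-2)}{C}$ is pure slack. It arises because the paper applies Cauchy--Schwarz separately to $\bar{\xi}$ and to each $\xi_c-\bar{\xi}$, and no single $\Delta$ saturates these simultaneously. For example, $|\bar{\xi}|$ is maximal when $\Delta$ is parallel to $\bm{\mu}_{c^*}-\bm{\mu}_G$, which is exactly where $S_\xi^2=0$. The paper's route, in exchange, works only with pairwise inner products and never needs a projector.

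Finally, the obstacle you flag is not a real one, and it is not the source of that slack. The simplex-ETF cosine $-\frac{1}{C-1}$ invoked in the paper's proof already forces $\|\sum_c(\bm{\mu}_c-\bm{\mu}_G)\|_2^2=CR^2-C(C-1)\frac{R^2}{C-1}=0$. The paper also uses $\bm{\mu}_G=\frac{1}{C}\sum_c\bm{\mu}_c$ explicitly, so your argument rests on the same hypotheses.
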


\begin{proof}
Let $\mathbf{x} \in \mathcal{X}_{in}$ be a test ID sample from class $c^*$, and denote $\Delta \triangleq \phi(\mathbf{x}) - \bm{\mu}_{c^*}$ with $\|\Delta\|_2 \leq \epsilon$, and $\delta \triangleq \|\Delta\|_2^2 \leq \epsilon^2$, where the first inequality follows from Assumption~\ref{assumption:weak_nc1_nc2} and the second from $\epsilon = \max_c \epsilon_c$. For any $c \neq c^*$, we write $\phi(\mathbf{x}) - \bm{\mu}_c = \Delta + (\bm{\mu}_{c^*} - \bm{\mu}_c)$, so that the squared $\ell_2$ distance expands as:

\begin{equation}
\|\phi(\mathbf{x}) - \bm{\mu}_c\|_2^2 = \underbrace{\|\Delta\|_2^2}_{=\delta} + \underbrace{\|\bm{\mu}_{c^*} - \bm{\mu}_c\|_2^2}_{=K^2 \text{ (by NC2)}} + \underbrace{2\Delta^\top(\bm{\mu}_{c^*}-\bm{\mu}_c)}_{\triangleq \xi_c}
\end{equation}

Letting $\bar{\xi} \triangleq \frac{1}{C-1}\sum_{c \neq c^*} \xi_c$, the mean class-wise squared distance becomes:
\begin{equation}
\bar{d} = \frac{1}{C}\left[\delta + \sum_{c \neq c^*}(\delta + K^2 + \xi_c)\right] = \delta + \frac{(C-1)(K^2 + \bar{\xi})}{C}
\end{equation}
where the first term $\delta$ is the contribution from $c = c^*$, and the second term averages the remaining $C-1$ distances. The deviation from $\bar{d}$ for $c = c^*$ is then:
\begin{equation}
\|\phi(\mathbf{x}) - \bm{\mu}_{c^*}\|_2^2 - \bar{d} = \delta - \delta - \frac{(C-1)(K^2+\bar{\xi})}{C} = -\frac{(C-1)(K^2+\bar{\xi})}{C}
\end{equation}
and for $c \neq c^*$:
\begin{align}
\|\phi(\mathbf{x}) - \bm{\mu}_c\|_2^2 - \bar{d} &= (\delta + K^2 + \xi_c) - \delta - \frac{(C-1)(K^2+\bar{\xi})}{C} \notag \\
&= K^2 + \xi_c - \frac{(C-1)(K^2+\bar{\xi})}{C} \notag \\
&= \frac{C(K^2+\xi_c) - (C-1)(K^2+\bar{\xi})}{C} \notag \\
&= \frac{K^2+\bar{\xi}}{C} + (\xi_c - \bar{\xi})
\end{align}

Letting $S_\xi^2 \triangleq \frac{1}{C-1}\sum_{c \neq c^*}(\xi_c - \bar{\xi})^2 \geq 0$, summing squared deviations over all $C$ classes and using $\sum_{c\neq c^*}(\xi_c - \bar{\xi}) = 0$ by definition of $\bar{\xi}$:

\begin{equation}
\begin{split}
C \cdot \mathrm{Var}_c\left[\|\phi(\mathbf{x}) - \bm{\mu}_c\|_2^2\right] &= \frac{(C-1)^2(K^2+\bar{\xi})^2}{C^2} + (C-1)\left(\frac{K^2+\bar{\xi}}{C}\right)^2 + (C-1)S_\xi^2 \\
&= \frac{(C-1)(K^2+\bar{\xi})^2}{C} + (C-1)S_\xi^2
\end{split}
\end{equation}

Dividing both sides by $C$, the variance reduces to:

\begin{equation}
\mathrm{Var}_c\left[\|\phi(\mathbf{x}) - \bm{\mu}_c\|_2^2\right] = \frac{(C-1)(K^2+\bar{\xi})^2}{C^2} + \frac{(C-1)S_\xi^2}{C}
\label{eq:var_exact_proof}
\end{equation}

We now derive tighter bounds on $\bar{\xi}$ and $S_\xi^2$ by fully exploiting the simplex ETF structure. Since $\bm{\mu}_G = \frac{1}{C}\sum_c \bm{\mu}_c$, we have $\sum_{c \neq c^*}\bm{\mu}_c = C\bm{\mu}_G - \bm{\mu}_{c^*}$, so that:

\begin{equation}
\sum_{c \neq c^*}(\bm{\mu}_{c^*} - \bm{\mu}_c) = (C-1)\bm{\mu}_{c^*} - (C\bm{\mu}_G - \bm{\mu}_{c^*}) = C(\bm{\mu}_{c^*} - \bm{\mu}_G)
\end{equation}

Substituting this into the definition of $\bar{\xi}$ and using linearity of the inner product:

\begin{equation}
\begin{split}
\bar{\xi} &= \frac{1}{C-1}\sum_{c \neq c^*}\xi_c = \frac{2}{C-1}\Delta^\top\sum_{c \neq c^*}(\bm{\mu}_{c^*}-\bm{\mu}_c) \\
&= \frac{2}{C-1}\Delta^\top \cdot C(\bm{\mu}_{c^*} - \bm{\mu}_G) = \frac{2C}{C-1}\Delta^\top(\bm{\mu}_{c^*} - \bm{\mu}_G)
\end{split}
\end{equation}

Denoting $R \triangleq \|\bm{\mu}_c - \bm{\mu}_G\|_2$ for all $c$ (equinorm by NC2), we first express $R$ in terms of $K$. By the simplex ETF condition in Assumption~\ref{assumption:weak_nc1_nc2}, the centered class means satisfy:

\begin{equation}
\frac{(\bm{\mu}_c - \bm{\mu}_G)^\top(\bm{\mu}_{c'} - \bm{\mu}_G)}{\|\bm{\mu}_c - \bm{\mu}_G\|_2\|\bm{\mu}_{c'} - \bm{\mu}_G\|_2} = -\frac{1}{C-1} \quad \forall c \neq c'
\end{equation}

Since $\|\bm{\mu}_c - \bm{\mu}_G\|_2 = R$ for all $c$, this gives:

\begin{equation}
(\bm{\mu}_c - \bm{\mu}_G)^\top(\bm{\mu}_{c'} - \bm{\mu}_G) = -\frac{R^2}{C-1} \quad \forall c \neq c'
\end{equation}

Substituting into the ETF pairwise distance:

\begin{equation}
\begin{split}
K^2 = \|\bm{\mu}_c - \bm{\mu}_{c'}\|_2^2 &= \|(\bm{\mu}_c - \bm{\mu}_G) - (\bm{\mu}_{c'} - \bm{\mu}_G)\|_2^2 \\
&= 2R^2 - 2(\bm{\mu}_c - \bm{\mu}_G)^\top(\bm{\mu}_{c'} - \bm{\mu}_G) \\
&= 2R^2 - 2\left(-\frac{R^2}{C-1}\right) = 2R^2\cdot\frac{C}{C-1}
\end{split}
\end{equation}

which gives $R^2 = \frac{(C-1)K^2}{2C}$. Applying Cauchy-Schwarz to the exact expression of $\bar{\xi}$:

\begin{equation}
|\bar{\xi}| \leq \frac{2C}{C-1}\|\Delta\|_2 \cdot R \leq \frac{2C\epsilon}{C-1} \cdot K\sqrt{\frac{C-1}{2C}} = \epsilon K\sqrt{\frac{2C}{C-1}}
\end{equation}

For $S_\xi^2$, we derive a bound by expanding $\xi_c - \bar{\xi}$. Recall that $\xi_c = 2\Delta^\top(\bm{\mu}_{c^*} - \bm{\mu}_c)$ and $\bar{\xi} = \frac{2C}{C-1}\Delta^\top(\bm{\mu}_{c^*} - \bm{\mu}_G)$. Hence, we can derive following:
\begin{align}
\xi_c - \bar{\xi} &= 2\Delta^\top(\bm{\mu}_{c^*} - \bm{\mu}_c) - \frac{2C}{C-1}\Delta^\top(\bm{\mu}_{c^*} - \bm{\mu}_G) \notag \\
&= 2\Delta^\top\left[(\bm{\mu}_{c^*} - \bm{\mu}_c) - \frac{C}{C-1}(\bm{\mu}_{c^*} - \bm{\mu}_G)\right] \notag \\
&= 2\Delta^\top\left[-\frac{\bm{\mu}_{c^*} - \bm{\mu}_G}{C-1} - (\bm{\mu}_c - \bm{\mu}_G)\right] \notag \\
&= -2\Delta^\top \mathbf{v}_c
\end{align}
where $\mathbf{v}_c \triangleq \frac{\bm{\mu}_{c^*} - \bm{\mu}_G}{C-1} + (\bm{\mu}_c - \bm{\mu}_G)$. 

Using the ETF inner product $(\bm{\mu}_{c^*}-\bm{\mu}_G)^\top(\bm{\mu}_c-\bm{\mu}_G) = -\frac{R^2}{C-1}$ for $c \neq c^*$:

\begin{equation}
\begin{split}
\|\mathbf{v}_c\|_2^2 &= \frac{R^2}{(C-1)^2} - \frac{2R^2}{(C-1)^2} + R^2 \\
&= R^2\left(1 - \frac{1}{(C-1)^2}\right) \\
&= \frac{R^2 C(C-2)}{(C-1)^2} = \frac{K^2(C-2)}{2(C-1)}
\end{split}
\end{equation}

Applying Cauchy-Schwarz to each term $(\xi_c - \bar{\xi})^2 = (-2\Delta^\top \mathbf{v}_c)^2 \leq 4\|\Delta\|_2^2\|\mathbf{v}_c\|_2^2 \leq 4\epsilon^2\|\mathbf{v}_c\|_2^2$ and summing over $c \neq c^*$:
\begin{equation}
S_\xi^2 = \frac{1}{C-1}\sum_{c\neq c^*}(\xi_c - \bar{\xi})^2 \leq \frac{4\epsilon^2}{C-1}\sum_{c\neq c^*}\|\mathbf{v}_c\|_2^2 = 4\epsilon^2 \cdot \frac{K^2(C-2)}{2(C-1)} = \frac{2\epsilon^2 K^2(C-2)}{C-1}.
\end{equation}

To derive the lower bound, we substitute $S_\xi^2 \geq 0$ and $\bar{\xi} \geq -\epsilon K\sqrt{\frac{2C}{C-1}}$ into Eq.~\eqref{eq:var_exact_proof}:
\begin{align}
\mathrm{Var}_c\left[\|\phi(\mathbf{x}) - \bm{\mu}_c\|_2^2\right] &= \frac{(C-1)(K^2 + \bar{\xi})^2}{C^2} + \frac{(C-1)S_\xi^2}{C} \notag \\
&\geq \frac{(C-1)(K^2 + \bar{\xi})^2}{C^2} \notag \\
&\geq \frac{(C-1)\left(K^2 - \epsilon K\sqrt{\frac{2C}{C-1}}\right)^2}{C^2} \notag \\
&= \frac{(C-1)K^4\left(1 - \frac{\epsilon}{K}\sqrt{\frac{2C}{C-1}}\right)^2}{C^2} \notag \\
&= \frac{(C-1)K^4}{C^2}\left(1 - \epsilon\sqrt{\frac{2C}{K^2(C-1)}}\right)^2
\end{align}
where the third inequality holds since $K^2 + \bar{\xi} > 0$ for all admissible $\bar{\xi}$, as $\epsilon < K/2$ implies $K^2 - \epsilon K\sqrt{\frac{2C}{C-1}} \geq K\left(K - \epsilon \cdot 2\right) > 0$, so $(K^2 + \bar{\xi})^2$ is minimized at $\bar{\xi} = -\epsilon K\sqrt{\frac{2C}{C-1}}$. For the upper bound, substituting $\bar{\xi} \leq \epsilon K\sqrt{\frac{2C}{C-1}}$ and $S_\xi^2 \leq \frac{2\epsilon^2 K^2(C-2)}{C-1}$ into Eq.~\eqref{eq:var_exact_proof}:

\begin{equation}
\mathrm{Var}_c\left[\|\phi(\mathbf{x}) - \bm{\mu}_c\|_2^2\right] \leq \frac{(C-1)K^4}{C^2}\left(1 + \epsilon\sqrt{\frac{2C}{K^2(C-1)}}\right)^2 + \frac{2\epsilon^2 K^2(C-2)}{C}
\end{equation}

Finally, as $\epsilon \to 0$, all terms involving $\epsilon$ vanish, and Eq.~\eqref{eq:var_exact_proof} converges to $\frac{(C-1)K^4}{C^2}$, completing the proof.
\end{proof}

\begin{theorem}[Sufficient Condition for Strictly Higher Variance in ID Samples]
\label{theorem:variance_separation_appendix}
Let $\hat{\phi}(\mathbf{x}) = (\phi(\mathbf{x}) - \bm{\mu}_G) / \|\phi(\mathbf{x}) - \bm{\mu}_G\|_2$ be the centered L2-normalized penultimate feature extractor of a pretrained network. Suppose Assumption~\ref{assumption:weak_nc1_nc2} holds for $\hat{\phi}$ on all $\mathbf{x} \in \mathcal{X}_{in}$, where $R = \|\bm{\mu}_c\|_2 \leq 1$ and $\sum_c \bm{\mu}_c = \mathbf{0}$. For any OOD sample $\mathbf{x}^{out} \in \mathcal{X} \setminus \mathcal{X}_{in}$ satisfying $\max_c |\hat{\phi}(\mathbf{x}^{out})^\top \bm{\mu}_c| < \frac{R(R-\varepsilon)}{\sqrt{C-1}}$, we have $\mathrm{Var}_c\left[\|\hat{\phi}(\mathbf{x}) - \bm{\mu}_c\|_2^2\right] > \mathrm{Var}_c\left[\|\hat{\phi}(\mathbf{x}^{out}) - \bm{\mu}_c\|_2^2\right]$ for any ID sample $\mathbf{x} \in \mathcal{X}_{in}$.
\end{theorem}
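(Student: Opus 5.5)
The plan is to reduce both variances to a quadratic form in the projections $\hat{\phi}(\cdot)^\top\bm{\mu}_c$, evaluate that form exactly using the ETF frame identity, and compare the two resulting expressions. The key simplification is that every feature $\mathbf{z}=\hat{\phi}(\cdot)$ lies on the unit sphere, ID or OOD. Hence
\[
\|\mathbf{z}-\bm{\mu}_c\|_2^2 = 1 + R^2 - 2\,\mathbf{z}^\top\bm{\mu}_c ,
\]
and the constant $1+R^2$ does not change the variance over $c$. It follows that $\mathrm{Var}_c[\|\mathbf{z}-\bm{\mu}_c\|_2^2] = 4\,\mathrm{Var}_c[\mathbf{z}^\top\bm{\mu}_c]$. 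Because $\sum_c\bm{\mu}_c=\mathbf{0}$, the mean $\frac1C\sum_c \mathbf{z}^\top\bm{\mu}_c$ vanishes, so
\[
\mathrm{Var}_c\big[\|\mathbf{z}-\bm{\mu}_c\|_2^2\big] = \frac{4}{C}\sum_{c=1}^C(\mathbf{z}^\top\bm{\mu}_c)^2 .
\]
This is the common quantity I will bound from above for OOD samples and from below for ID samples.

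\textbf{OOD side.} Under the hypothesis $\max_c|\hat{\phi}(\mathbf{x}^{out})^\top\bm{\mu}_c| < R(R-\varepsilon)/\sqrt{C-1}$, each of the $C$ terms in the sum is strictly below $R^2(R-\varepsilon)^2/(C-1)$. Therefore the OOD variance is strictly less than $4R^2(R-\varepsilon)^2/(C-1)$.

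\textbf{ID side.} I will use the tight-frame identity for a centered simplex ETF:
\[
\sum_c \bm{\mu}_c\bm{\mu}_c^\top = \frac{C R^2}{C-1}\,P ,
\]
where $P$ is the orthogonal projector onto $\mathrm{span}(\bm{\mu}_1,\ldots,\bm{\mu}_C)$. To verify it, I will apply the left-hand side to each $\bm{\mu}_{c'}$, using $\bm{\mu}_c^\top\bm{\mu}_{c'} = -R^2/(C-1)$ for $c\neq c'$, as already derived in the proof of Theorem~\ref{theorem:id_variance_bound_appendix}. The identity gives $\sum_c(\mathbf{z}^\top\bm{\mu}_c)^2 = \frac{CR^2}{C-1}\|P\mathbf{z}\|_2^2$, and hence
\[
\mathrm{Var}_c\big[\|\hat{\phi}(\mathbf{x})-\bm{\mu}_c\|_2^2\big] = \frac{4R^2}{C-1}\,\|P\hat{\phi}(\mathbf{x})\|_2^2 .
\]
Next, for an ID sample of class $c^*$, write $\hat{\phi}(\mathbf{x})=\bm{\mu}_{c^*}+\Delta$ with $\|\Delta\|_2\le\varepsilon$ (Weak NC1). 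Since $P\bm{\mu}_{c^*}=\bm{\mu}_{c^*}$ and $\|P\Delta\|_2\le\|\Delta\|_2$, the reverse triangle inequality gives $\|P\hat{\phi}(\mathbf{x})\|_2 \ge R-\varepsilon$. Consequently the ID variance is at least $4R^2(R-\varepsilon)^2/(C-1)$, which strictly exceeds the OOD bound and completes the argument.

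\textbf{Main obstacle.} The step needing care is ensuring $R-\varepsilon>0$, so that squaring the inequality $\|P\hat{\phi}(\mathbf{x})\|_2\ge R-\varepsilon$ is legitimate and the threshold in the hypothesis is positive. I plan to obtain it from Assumption~\ref{assumption:weak_nc1_nc2}: the relation $K^2 = 2R^2C/(C-1)$ gives $K/2 = R\sqrt{C/(2(C-1))}\le R$ for $C\ge2$, so $\varepsilon<K/2\le R$. The remaining pieces, the tight-frame identity and the constant-shift argument, are routine. I also note that the condition $R\le1$ appears to enter only through consistency of the normalization and is not needed in the inequality chain itself.
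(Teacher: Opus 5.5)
Your proof is correct, and its OOD half is the same as the paper's: you reduce to $\frac{4}{C}\sum_c s_c^2$ using $\|\hat{\phi}\|_2=1$ and $\sum_c\bm{\mu}_c=\mathbf{0}$, then bound each term by the hypothesis. The ID half takes a different route.

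\begin{itemize}
\item \textbf{The paper's ID bound.} It never uses the unit-norm constraint on ID features. It specializes the lower bound of Theorem~\ref{theorem:id_variance_bound}, which comes from the decomposition $\delta+K^2+\xi_c$, a Cauchy--Schwarz bound on $\bar{\xi}$, and dropping $S_\xi^2\ge 0$. Substituting $K^2=2R^2C/(C-1)$ gives $\gamma=\varepsilon/R$ and the threshold $4R^2(R-\varepsilon)^2/(C-1)$.
\item \textbf{Your ID bound.} You use the tight-frame identity $\sum_c\bm{\mu}_c\bm{\mu}_c^\top=\frac{CR^2}{C-1}P$, which the paper only brings in for Corollary~\ref{corollary:variance_separation_always}. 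This gives the exact value $\frac{4R^2}{C-1}\|P\mathbf{z}\|_2^2$ for every unit-norm feature $\mathbf{z}$. You then get $\|P\hat{\phi}(\mathbf{x})\|_2\ge R-\varepsilon$ from the reverse triangle inequality. The resulting threshold is the same.
\end{itemize}

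The paper's route is shorter once Theorem~\ref{theorem:id_variance_bound} is available, and its ID bound holds even without normalization. Your route is self-contained and avoids $K$ and $\gamma$ entirely. It also puts ID and OOD samples in one formula, so the theorem becomes a comparison of in-span energies $\|P\mathbf{z}\|_2^2$. In that form, the OOD hypothesis is visibly used only to force $\|P\hat{\phi}(\mathbf{x}^{out})\|_2<R-\varepsilon$, and the Corollary follows from the same computation.

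Your explicit check that $\varepsilon<K/2\le R$ is a useful addition. In the paper, this positivity is handled only implicitly, inside the proof of Theorem~\ref{theorem:id_variance_bound}. You are also right that $R\le 1$ plays no role in either inequality chain.
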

\begin{proof}
We establish the result by deriving a lower bound on $\mathrm{Var}_c\left[\|\hat{\phi}(\mathbf{x}) - \bm{\mu}_c\|_2^2\right]$ for ID samples and an upper bound on $\mathrm{Var}_c\left[\|\hat{\phi}(\mathbf{x}^{out}) - \bm{\mu}_c\|_2^2\right]$ for OOD samples. Since $\|\bm{\mu}_c\|_2 = R$ for all $c$, the ETF inter-class distance satisfies $K^2 = \frac{2R^2C}{C-1}$. Substituting into Theorem~\ref{theorem:id_variance_bound} gives $\frac{(C-1)K^4}{C^2} = \frac{4R^4}{C-1}$ and $\gamma = \frac{\varepsilon}{R}$, so that:
\begin{align}
\mathrm{Var}_c\left[\|\hat{\phi}(\mathbf{x}) - \bm{\mu}_c\|_2^2\right] &\geq \frac{(C-1)K^4}{C^2}(1-\gamma)^2 \notag \\
&= \frac{4R^4}{C-1}\left(1 - \frac{\varepsilon}{R}\right)^2 \notag \\
&= \frac{4R^4}{C-1} \cdot \frac{(R-\varepsilon)^2}{R^2} \notag \\
&= \frac{4R^2(R-\varepsilon)^2}{C-1}
\end{align}
For the OOD upper bound, since $\|\hat{\phi}(\mathbf{x}^{out})\|_2 = 1$ and $\|\bm{\mu}_c\|_2 = R$ for all $c$, we have $\|\hat{\phi}(\mathbf{x}^{out}) - \bm{\mu}_c\|_2^2 = 1 + R^2 - 2s_c$ where $s_c \triangleq \hat{\phi}(\mathbf{x}^{out})^\top\bm{\mu}_c$. Since $R^2$ is constant across classes, it vanishes under the variance operator, giving:
\begin{equation}
\mathrm{Var}_c\left[\|\hat{\phi}(\mathbf{x}^{out}) - \bm{\mu}_c\|_2^2\right] = 4\,\mathrm{Var}_c\left[s_c\right]
\end{equation}
From $\sum_c \bm{\mu}_c = \mathbf{0}$ it follows that $\sum_c s_c = \hat{\phi}(\mathbf{x}^{out})^\top \sum_c \bm{\mu}_c = 0$ and $\mathrm{Var}_c[s_c] = \frac{1}{C}\sum_c s_c^2$. To bound $\mathrm{Var}_c[s_c]$, we introduce $t \triangleq \max_c |s_c|$ as the worst-case absolute cosine similarity between the OOD sample and any class mean. Since $|s_c| \leq t$ for all $c$, we have $s_c^2 \leq t^2$ for all $c$, and therefore $\mathrm{Var}_c[s_c] \leq \frac{1}{C}\sum_c t^2 = t^2$, giving:
\begin{equation}
\mathrm{Var}_c\left[\|\hat{\phi}(\mathbf{x}^{out}) - \bm{\mu}_c\|_2^2\right] \leq 4t^2
\end{equation}
The ID lower bound exceeds the OOD upper bound when:
\begin{equation}
\frac{4R^2(R-\varepsilon)^2}{C-1} > 4t^2 \iff t < \frac{R(R-\varepsilon)}{\sqrt{C-1}}
\end{equation}
which holds under the stated condition $\max_c |s_c| < \frac{R(R-\varepsilon)}{\sqrt{C-1}}$, completing the proof.
\end{proof}

\begin{corollary}[Guaranteed Variance Separation under ETF Subspace Alignment]
\label{corollary:variance_separation_always_appendix}
Under the assumptions of Theorem~\ref{theorem:variance_separation}, suppose further that for any ID sample $\mathbf{x} \in \mathcal{X}_{in}$ from class $c^*$, the within-class deviation satisfies $\hat{\phi}(\mathbf{x}) - \bm{\mu}_{c^*} \in \mathrm{span}(\bm{\mu}_1, \ldots, \bm{\mu}_C)$. Then for any OOD sample $\mathbf{x}^{out} \in \mathcal{X} \setminus \mathcal{X}_{in}$, $\mathrm{Var}_c\left[\|\hat{\phi}(\mathbf{x}) - \bm{\mu}_c\|_2^2\right] \geq \mathrm{Var}_c\left[\|\hat{\phi}(\mathbf{x}^{out}) - \bm{\mu}_c\|_2^2\right]$ holds, with equality if and only if $\hat{\phi}(\mathbf{x}^{out}) \in \mathrm{span}(\bm{\mu}_1, \ldots, \bm{\mu}_C)$.
\end{corollary}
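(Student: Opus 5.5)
The plan is to compute the class-wise variance exactly, as a function of the feature, for \emph{any} unit vector, rather than bounding ID and OOD separately as in Theorem~\ref{theorem:variance_separation}. Let $\mathbf{z}\in\mathbb{R}^d$ with $\|\mathbf{z}\|_2=1$ and put $s_c \triangleq \mathbf{z}^\top\bm{\mu}_c$. The proof of Theorem~\ref{theorem:variance_separation} already gives
\[
\|\mathbf{z}-\bm{\mu}_c\|_2^2 = 1+R^2-2s_c .
\]
Using $\sum_c\bm{\mu}_c=\mathbf{0}$, so that $\sum_c s_c=0$, it also gives
\[
\mathrm{Var}_c\bigl[\|\mathbf{z}-\bm{\mu}_c\|_2^2\bigr]=\frac{4}{C}\sum_c s_c^2 .
\]
Both $\hat{\phi}(\mathbf{x})$ and $\hat{\phi}(\mathbf{x}^{out})$ are unit vectors, so it suffices to understand the quadratic form $\sum_c s_c^2=\mathbf{z}^\top\bigl(\sum_c\bm{\mu}_c\bm{\mu}_c^\top\bigr)\mathbf{z}$.

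\textbf{Key step: the simplex ETF is a tight frame on its span.} Let $M=[\bm{\mu}_1,\ldots,\bm{\mu}_C]\in\mathbb{R}^{d\times C}$ and let $P$ be the orthogonal projection onto $\mathrm{span}(\bm{\mu}_1,\ldots,\bm{\mu}_C)$. By NC2 with $\bm{\mu}_G=\mathbf{0}$, the means satisfy $\bm{\mu}_c^\top\bm{\mu}_{c'}=-R^2/(C-1)$ for $c\neq c'$, exactly as derived in the proof of Theorem~\ref{theorem:id_variance_bound}. Hence the Gram matrix is
\[
M^\top M=\frac{CR^2}{C-1}\Bigl(I_C-\tfrac{1}{C}\mathbf{1}\mathbf{1}^\top\Bigr),
\]
whose nonzero eigenvalues all equal $\frac{CR^2}{C-1}$, with multiplicity $C-1$. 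The matrix $MM^\top$ is symmetric positive semidefinite, has range equal to the span of the means, and shares the nonzero spectrum of $M^\top M$. It therefore equals $\frac{CR^2}{C-1}P$. This gives
\[
\mathrm{Var}_c\bigl[\|\mathbf{z}-\bm{\mu}_c\|_2^2\bigr]=\frac{4R^2}{C-1}\,\|P\mathbf{z}\|_2^2\;\le\;\frac{4R^2}{C-1},
\]
with equality if and only if $\|P\mathbf{z}\|_2=\|\mathbf{z}\|_2$, i.e.\ if and only if $\mathbf{z}$ lies in the span. I expect this tight-frame identity to be the main (though modest) obstacle. The point to check carefully is that the range of $MM^\top$ is exactly the span and that the eigenvalue transfer from $M^\top M$ is valid. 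If that is awkward, the fallback is a direct verification: show $MM^\top\bm{\mu}_{c'}=\frac{CR^2}{C-1}\bm{\mu}_{c'}$ for each $c'$ using the Gram entries and $\sum_c\bm{\mu}_c=\mathbf{0}$, and note that $MM^\top$ kills the orthogonal complement of the span.

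\textbf{Conclusion.} For an ID sample from class $c^*$ we have $\hat{\phi}(\mathbf{x})=\bm{\mu}_{c^*}+(\hat{\phi}(\mathbf{x})-\bm{\mu}_{c^*})$. Both summands lie in $\mathrm{span}(\bm{\mu}_1,\ldots,\bm{\mu}_C)$, the second by the alignment hypothesis. So $P\hat{\phi}(\mathbf{x})=\hat{\phi}(\mathbf{x})$, and the ID variance attains the maximal value $\frac{4R^2}{C-1}$, independently of $\varepsilon$. Applying the identity to $\mathbf{z}=\hat{\phi}(\mathbf{x}^{out})$ gives the OOD variance $\frac{4R^2}{C-1}\|P\hat{\phi}(\mathbf{x}^{out})\|_2^2$, which is at most the ID value. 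Equality holds exactly when $\|P\hat{\phi}(\mathbf{x}^{out})\|_2=1$, i.e.\ when $\hat{\phi}(\mathbf{x}^{out})\in\mathrm{span}(\bm{\mu}_1,\ldots,\bm{\mu}_C)$, which is the claimed characterization.
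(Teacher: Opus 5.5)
Your proposal is correct and is essentially the paper's proof. Both arguments reduce the variance to $\frac{4}{C}\sum_c s_c^2$, establish the tight-frame identity $\sum_c \bm{\mu}_c\bm{\mu}_c^\top = \frac{CR^2}{C-1}P$ to obtain $\mathrm{Var}_c\left[\|\mathbf{z}-\bm{\mu}_c\|_2^2\right] = \frac{4R^2}{C-1}\|P\mathbf{z}\|_2^2$, and then compare $\|P\mathbf{z}\|_2^2 = 1$ for ID samples against $\|P\mathbf{z}\|_2^2 \leq 1$ for OOD samples, with equality exactly on the span. The only difference is cosmetic: you derive the identity from the shared nonzero spectrum of the Gram matrix (or by direct verification on each $\bm{\mu}_{c'}$), whereas the paper writes $P$ through the pseudo-inverse of the Gram matrix, which is a multiple of the centering matrix, and then uses $\sum_c \bm{\mu}_c = \mathbf{0}$.
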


\begin{proof}
We first establish the exact expression $\mathrm{Var}_c\left[\|\hat{\phi}(\mathbf{x}) - \bm{\mu}_c\|_2^2\right] = \frac{4R^2\rho}{C-1}$ where $\rho \triangleq \|P\hat{\phi}(\mathbf{x})\|_2^2$ and $P$ denotes the orthogonal projection onto $\mathrm{span}(\bm{\mu}_1, \ldots, \bm{\mu}_C)$. Let $M = [\bm{\mu}_1, \ldots, \bm{\mu}_C]^\top \in \mathbb{R}^{C \times d}$ denote the matrix of class means stacked as rows, and let $\mathbf{1} \in \mathbb{R}^C$ denote the all-ones vector. Since $\|\hat{\phi}(\mathbf{x})\|_2 = 1$ and $\|\bm{\mu}_c\|_2 = R$ for all $c$, we have $d_c^2 = \|\hat{\phi}(\mathbf{x}) - \bm{\mu}_c\|_2^2 = 1 + R^2 - 2s_c$ where $s_c \triangleq \hat{\phi}(\mathbf{x})^\top\bm{\mu}_c$. Since $R^2$ is constant across classes, it vanishes under the variance operator, giving $\mathrm{Var}_c[d_c^2] = 4\mathrm{Var}_c[s_c]$. From $\sum_c \bm{\mu}_c = \mathbf{0}$:
\begin{equation}
\sum_c s_c = \hat{\phi}(\mathbf{x})^\top\sum_c\bm{\mu}_c = 0 \implies \mathrm{Var}_c[s_c] = \frac{1}{C}\sum_c s_c^2 = \frac{1}{C}\hat{\phi}(\mathbf{x})^\top M^\top M\hat{\phi}(\mathbf{x})
\end{equation}
We now derive $M^\top M = \frac{R^2C}{C-1}P$. Computing $MM^\top \in \mathbb{R}^{C \times C}$ using the ETF conditions $(MM^\top)_{c,c} = R^2$ and $(MM^\top)_{c,c'} = -\frac{R^2}{C-1}$ for $c \neq c'$:
\begin{align}
MM^\top &= R^2 I_C + \left(-\frac{R^2}{C-1}\right)(\mathbf{1}\mathbf{1}^\top - I_C) \notag \\
&= R^2 I_C - \frac{R^2}{C-1}\mathbf{1}\mathbf{1}^\top + \frac{R^2}{C-1}I_C \notag \\
&= R^2\left(1 + \frac{1}{C-1}\right)I_C - \frac{R^2}{C-1}\mathbf{1}\mathbf{1}^\top \notag \\
&= \frac{R^2 C}{C-1}I_C - \frac{R^2}{C-1}\mathbf{1}\mathbf{1}^\top \notag \\
&= \frac{R^2 C}{C-1}\left(I_C - \frac{1}{C}\mathbf{1}\mathbf{1}^\top\right) \notag \\
&= \frac{R^2 C}{C-1} H
\end{align}
where $H = I_C - \frac{1}{C}\mathbf{1}\mathbf{1}^\top$ is the centering matrix. We verify that $H$ is idempotent using $(\mathbf{1}\mathbf{1}^\top)^2 = C\mathbf{1}\mathbf{1}^\top$:
\begin{equation}
H^2 = I_C - \frac{2}{C}\mathbf{1}\mathbf{1}^\top + \frac{1}{C^2}(\mathbf{1}\mathbf{1}^\top)^2 = I_C - \frac{2}{C}\mathbf{1}\mathbf{1}^\top + \frac{1}{C^2}\cdot C\mathbf{1}\mathbf{1}^\top = I_C - \frac{1}{C}\mathbf{1}\mathbf{1}^\top = H
\end{equation}
Since $H$ is idempotent and symmetric, $(MM^\top)^+ = \frac{C-1}{R^2C}H$, so that:
\begin{equation}
P = M^\top(MM^\top)^+M = \frac{C-1}{R^2C}M^\top HM = \frac{C-1}{R^2C}\left(M^\top M - \frac{1}{C}M^\top\mathbf{1}\mathbf{1}^\top M\right)
\end{equation}
Since $M^\top\mathbf{1} = \sum_c\bm{\mu}_c = \mathbf{0}$, we have $M^\top\mathbf{1}\mathbf{1}^\top M = \mathbf{0}$, and therefore:
\begin{equation}
P = \frac{C-1}{R^2C}M^\top M \implies M^\top M = \frac{R^2C}{C-1}P
\end{equation}
Plugging this into the expression for $\mathrm{Var}_c[s_c]$:
\begin{equation}
\mathrm{Var}_c[s_c] = \frac{1}{C}\hat{\phi}(\mathbf{x})^\top\cdot\frac{R^2C}{C-1}P\cdot\hat{\phi}(\mathbf{x}) = \frac{R^2\rho}{C-1} \implies \mathrm{Var}_c\left[\|\hat{\phi}(\mathbf{x}) - \bm{\mu}_c\|_2^2\right] = \frac{4R^2\rho}{C-1}
\end{equation}
For any ID sample $\mathbf{x} \in \mathcal{X}_{in}$, the within-class deviation assumption gives $\hat{\phi}(\mathbf{x}) - \bm{\mu}_{c^*} \in \mathrm{span}(\bm{\mu}_1, \ldots, \bm{\mu}_C)$. Since $\bm{\mu}_{c^*}$ trivially lies in $\mathrm{span}(\bm{\mu}_1, \ldots, \bm{\mu}_C)$, it follows that $\hat{\phi}(\mathbf{x}) \in \mathrm{span}(\bm{\mu}_1, \ldots, \bm{\mu}_C)$, hence $P\hat{\phi}(\mathbf{x}) = \hat{\phi}(\mathbf{x})$ and:
\begin{equation}
\rho_{ID} = \|P\hat{\phi}(\mathbf{x})\|_2^2 = \|\hat{\phi}(\mathbf{x})\|_2^2 = 1 \implies \mathrm{Var}_c\left[\|\hat{\phi}(\mathbf{x}) - \bm{\mu}_c\|_2^2\right] = \frac{4R^2}{C-1}
\end{equation}
For any OOD sample $\mathbf{x}^{out} \in \mathcal{X} \setminus \mathcal{X}_{in}$, since $P$ is an orthogonal projection and $\hat{\phi}(\mathbf{x}^{out})$ lies on the unit sphere, $\rho_{OOD} = \hat{\phi}(\mathbf{x}^{out})^\top P\hat{\phi}(\mathbf{x}^{out}) \leq \|\hat{\phi}(\mathbf{x}^{out})\|_2^2 = 1$, and therefore:
\begin{equation}
\mathrm{Var}_c\left[\|\hat{\phi}(\mathbf{x}^{out}) - \bm{\mu}_c\|_2^2\right] = \frac{4R^2\rho_{OOD}}{C-1} \leq \frac{4R^2}{C-1}
\end{equation}
Hence $\mathrm{Var}_c\left[\|\hat{\phi}(\mathbf{x}) - \bm{\mu}_c\|_2^2\right] \geq \mathrm{Var}_c\left[\|\hat{\phi}(\mathbf{x}^{out}) - \bm{\mu}_c\|_2^2\right]$ holds, with equality if and only if $\rho_{OOD} = 1$, i.e., $\hat{\phi}(\mathbf{x}^{out}) \in \mathrm{span}(\bm{\mu}_1, \ldots, \bm{\mu}_C)$, completing the proof.
\end{proof}

\appendix
\setcounter{section}{2} 
\renewcommand{\thesection}{\Alph{section}}

\section{Implementation Details}
\label{appendix:implementation_details}

This section provides implementation details and hyperparameter configurations for all methods evaluated in the main experiments. All experiments are implemented in PyTorch 2.4.1~\cite{paszke2019pytorch}.

\textbf{Mahalanobis and Mahalanobis++.} Class means $\hat{\bm{\mu}}_c$ and a shared tied covariance matrix $\hat{\Sigma}$ are estimated from the penultimate features of the training set. To prevent ill-conditioning, a regularization term $10^{-3} \cdot \mathbf{I}$ is added to the estimated covariance matrix. The OOD score is computed using the inverse of the covariance matrix:
\begin{equation}
    S(\mathbf{x}) = -\min_c\, d_{\mathrm{Maha},c}(\mathbf{x}), \quad d_{\mathrm{Maha},c}(\mathbf{x}) = (\hat{\phi}(\mathbf{x}) - \hat{\bm{\mu}}_c)^\top \hat{\Sigma}^{-1} (\hat{\phi}(\mathbf{x}) - \hat{\bm{\mu}}_c),
\end{equation}
where $\phi(\mathbf{x}) \in \mathbb{R}^d$ denotes the penultimate layer feature of input $\mathbf{x}$, and the terms $\hat{\Sigma}^{-1}\hat{\bm{\mu}}_c$ and $\hat{\bm{\mu}}_c^\top\hat{\Sigma}^{-1}\hat{\bm{\mu}}_c$ are precomputed for inference efficiency. For Mahalanobis++, features are L2-normalized prior to all computations: $\hat{\phi}(\mathbf{x}) = \phi(\mathbf{x}) / \|\phi(\mathbf{x})\|_2$.

\textbf{MahaVar.} MahaVar follows the same preprocessing and class statistics as Mahalanobis++, with the OOD score augmented by a class-wise distance variance term:
\begin{equation}
    S(\mathbf{x}) = -\min_c\, d_{\mathrm{Maha},c}(\hat{\phi}(\mathbf{x})) + \alpha \cdot \mathrm{Var}_c\bigl[d_{\mathrm{Maha},c}(\hat{\phi}(\mathbf{x}))\bigr].
\end{equation}
The hyperparameter $\alpha \geq 0$ controls the relative contribution of the variance term. $\alpha$ is selected via grid search over 26  values in $[0, 10]$ using the validation AUROC provided by OpenOOD v1.5: OpenImage-O for ImageNet settings and TIN for CIFAR settings. The selected values are $\alpha = 0.05$ (ResNet-50), $0.1$ (Swin-B), $0.07$ (ViT-B), $0.01$ (CIFAR-10), and $0.03$ (CIFAR-100). Since the class means $\hat{\bm{\mu}}_c$ and inverse covariance matrix $\hat{\Sigma}^{-1}$ are precomputed and stored, the class-wise distance variance at inference time is computed directly from the per-sample Mahalanobis distances with negligible additional overhead. In practice, we observe that MahaVar achieves inference latency identical to that of Mahalanobis++.

\textbf{MSP.} MSP~\cite{hendrycks2017a} uses the maximum class probability after softmax as the OOD score:
\begin{equation}
    S(\mathbf{x}) = \max_c\, \mathrm{softmax}(\mathbf{W}\phi(\mathbf{x}) + \mathbf{b})_c,
\end{equation}
where $\mathbf{W} \in \mathbb{R}^{C \times d}$ and $\mathbf{b} \in \mathbb{R}^C$ denote the classifier head weight matrix and bias vector, respectively.

\textbf{MaxLogit.} MaxLogit~\cite{hendrycks2022scaling} replaces the softmax with the maximum raw logit:
\begin{equation}
    S(\mathbf{x}) = \max_c\, (\mathbf{W}\phi(\mathbf{x}) + \mathbf{b})_c.
\end{equation}

\textbf{KL-Matching.} KL-Matching~\cite{hendrycks2022scaling} computes class-conditional reference distributions $\mathbf{d}_c = \frac{1}{N_c}\sum_{i:y_i=c}\mathrm{softmax}(\mathbf{W}\phi(\mathbf{x}_i) + \mathbf{b})$ from the training set, and scores a test sample by:
\begin{equation}
    S(\mathbf{x}) = -\min_c\, \mathrm{KL}(\mathbf{p}(\mathbf{x}) \,\|\, \mathbf{d}_c),
\end{equation}
where $\mathbf{p}(\mathbf{x}) = \mathrm{softmax}(\mathbf{W}\phi(\mathbf{x}) + \mathbf{b})$ is the predicted class probability vector.

\textbf{Energy.} The energy score~\cite{liu2020energy} aggregates all class logits via log-sum-exp:
\begin{equation}
    S(\mathbf{x}) = T \cdot \log \sum_c \exp\!\left(\frac{(\mathbf{W}\phi(\mathbf{x}) + \mathbf{b})_c}{T}\right),
\end{equation}
with temperature $T = 1$ fixed throughout.

\textbf{GEN.} GEN~\cite{Liu_2023_CVPR} computes a generalized entropy over the top-$M$ class probabilities:
\begin{equation}
    S(\mathbf{x}) = -\sum_{m \in \mathrm{top}\text{-}M} p_m^\gamma (1-p_m)^\gamma,
\end{equation}
where $p_m$ denotes the $m$-th largest entry of $\mathbf{p}(\mathbf{x})$, and $\gamma$ controls the sharpness of the entropy measure. $M = 100$ is fixed for all settings except CIFAR-10, where $M = 10$. $\gamma$ is selected from $\{0.05, 0.10, 0.20, 0.30, 0.50, 0.70, 1.00\}$ via validation AUROC; the selected values are $\gamma = 0.05$ (ResNet-50), $0.5$ (Swin-B), $0.3$ (ViT-B), $0.3$ (CIFAR-10), and $0.5$ (CIFAR-100).

\textbf{ReAct.} ReAct~\cite{sun2021react} clips penultimate activations at a threshold $\tau$ set to the $p$-th percentile of the training features, prior to energy scoring:
\begin{equation}
    S(\mathbf{x}) = T \cdot \log \sum_c \exp\!\left(\frac{(\mathbf{W}\,\mathrm{clip}(\phi(\mathbf{x}), \tau) + \mathbf{b})_c}{T}\right),
\end{equation}
where $\mathrm{clip}(\phi(\mathbf{x}), \tau)$ truncates each activation at $\tau$ from above. The percentile $p$ is selected from $\{0.90, 0.95, 0.99\}$ via validation AUROC; the selected values are $p = 0.95$ / $\tau = 1.351$ (ResNet-50), $p = 0.90$ / $\tau = 0.376$ (Swin-B), $p = 0.90$ / $\tau = 0.692$ (ViT-B), $p = 0.99$ / $\tau = 1.056$ (CIFAR-10), and $p = 0.99$ / $\tau = 2.278$ (CIFAR-100).

\textbf{ASH-S and SCALE.} ASH-S~\cite{djurisic2023extremely} prunes activations below a threshold $\tau$ and applies exponential rescaling:
\begin{equation}
    \phi_{\mathrm{ASH}}(\mathbf{x}) = \phi(\mathbf{x}) \cdot \mathbf{1}[\phi(\mathbf{x}) > \tau] \cdot \exp\!\left(\frac{\sum_i \phi_i(\mathbf{x})}{\sum_i \phi_i(\mathbf{x})\mathbf{1}[\phi_i(\mathbf{x}) > \tau]}\right),
\end{equation}
followed by energy scoring. SCALE~\cite{xu2024scaling} applies the same exponential rescaling without pruning:
\begin{equation}
    \phi_{\mathrm{SCALE}}(\mathbf{x}) = \phi(\mathbf{x}) \cdot \exp\!\left(\frac{\sum_i \phi_i(\mathbf{x})}{\sum_i \phi_i(\mathbf{x})\mathbf{1}[\phi_i(\mathbf{x}) > \tau]}\right),
\end{equation}
followed by energy scoring. For both methods, the pruning threshold $\tau$ is determined by the $p$-th percentile of each test sample's own activations, with $p$ selected from $\{0.65, 0.70, 0.75, 0.80, 0.85, 0.90, 0.95\}$ via validation AUROC. For ASH-S, the selected values are $p = 0.85$ (ResNet-50), $0.65$ (Swin-B), $0.65$ (ViT-B), $0.65$ (CIFAR-10), and $0.65$ (CIFAR-100). For SCALE, the selected values are $p = 0.85$ (ResNet-50), $0.95$ (Swin-B), $0.65$ (ViT-B), $0.65$ (CIFAR-10), and $0.65$ (CIFAR-100).

\textbf{KNN.} KNN~\cite{sun2022out} scores a test sample by its cosine distance to the $k$-th nearest neighbor in a subsampled training index:
\begin{equation}
    S(\mathbf{x}) = -(1 - \hat{\phi}(\mathbf{x})^\top \hat{\mathbf{z}}_k),
\end{equation}
where $\hat{\mathbf{z}}_k \in \mathbb{R}^d$ is the $k$-th nearest L2-normalized training feature. The index is built from a random 1\% subsample of the training set across all settings. $k$ is selected from $\{1, 3, 5, 10, 20, 50\}$ via validation AUROC; the selected value is $k = 3$ across all ImageNet backbones. For CIFAR-10, $k = 3$, and for CIFAR-100, $k = 1$.

\textbf{NCI and NCI+filter.} NCI~\cite{liu2025detecting} scores a test sample by the alignment between its centered feature and the predicted class weight vector:
\begin{equation}
    S(\mathbf{x}) = \frac{(\phi(\mathbf{x}) - \hat{\bm{\mu}}_G)^\top}{\|\phi(\mathbf{x}) - \hat{\bm{\mu}}_G\|_2} \mathbf{w}_{\hat{c}} + \alpha \|\phi(\mathbf{x})\|_1,
\end{equation}
where $\hat{c} = \arg\max_c (\mathbf{W}\phi(\mathbf{x}) + \mathbf{b})_c$ is the predicted class, $\hat{\bm{\mu}}_G = \frac{1}{N}\sum_i \phi(\mathbf{x}_i)$ is the global training mean, and $\mathbf{w}_{\hat{c}} \in \mathbb{R}^d$ is the weight vector of the predicted class. The weight $\alpha$ on the L1 norm filter term is selected from the same grid as MahaVar via validation AUROC; the selected values are $\alpha = 5 \times 10^{-4}$ (ResNet-50), $1 \times 10^{-4}$ (Swin-B), $3 \times 10^{-4}$ (ViT-B), and $0.0$ for both CIFAR settings. NCI+filter additionally applies a feature norm threshold selected on the validation set.

\textbf{fDBD.} fDBD~\cite{liu2024fast} scores a test sample by its normalized distance to the decision boundaries between the predicted class and all other classes:
\begin{equation}
    S(\mathbf{x}) = \frac{1}{\|\phi(\mathbf{x}) - \hat{\bm{\mu}}_G\|_2} \cdot \frac{1}{C-1}\sum_{c \neq \hat{c}} \frac{|l_{\hat{c}}(\mathbf{x}) - l_c(\mathbf{x})|}{\|\mathbf{w}_{\hat{c}} - \mathbf{w}_c\|_2}
\end{equation}
where $l_c(\mathbf{x}) = \mathbf{w}_c^\top\phi(\mathbf{x}) + b_c$ denotes the logit for class $c$. The pairwise weight distances $\|\mathbf{w}_{\hat{c}} - \mathbf{w}_c\|_2$ are precomputed as a $C \times C$ matrix. fDBD requires no hyperparameter tuning.

\textbf{ViM.} ViM~\cite{wang2022vim} constructs a virtual OOD logit from the residual of the feature against the top-$D$ principal subspace of the training set, computed via randomized SVD on a subsample of 200,000 training features. The virtual logit is scaled to be commensurate with the class logits and appended to the logit vector prior to softmax scoring. The principal subspace dimension $D$ is selected from $\{64, 128, 256\}$ via validation AUROC; the selected value is $D = 256$ across all ImageNet backbones, $D = 128$ for CIFAR-10, and $D = 256$ for CIFAR-100.

\textbf{NNGuide.} NNGuide~\cite{park2023nearest} guides a base confidence score by the average confidence-weighted cosine similarity to the $k$-nearest neighbors in a bank set:
\begin{equation}
    S(\mathbf{x}) = S_{\mathrm{base}}(\mathbf{x}) \cdot G(\mathbf{x}), \quad G(\mathbf{x}) = \frac{1}{k}\sum_{i=1}^{k} s_{(i)}\, \mathrm{sim}(\mathbf{z}_{(i)}, \mathbf{z}),
\end{equation}
where $\mathbf{z} = \phi(\mathbf{x}) / \|\phi(\mathbf{x})\|_2$ is the L2-normalized test feature, 
$S_{\mathrm{base}}(\mathbf{x})$ is the energy score, 
$\mathbf{z}_{(i)} = \phi(\mathbf{x}_{(i)}) / \|\phi(\mathbf{x}_{(i)})\|_2$ is the L2-normalized 
$i$-th nearest neighbor feature from the bank set, 
$s_{(i)} = S_{\mathrm{base}}(\mathbf{x}_{(i)})$ is its energy score, 
and $\mathrm{sim}(\cdot, \cdot)$ denotes cosine similarity. The neighbors are reordered in descending order of confidence-scaled similarity $s_{(i)}\,\mathrm{sim}(\mathbf{z}_{(i)}, \mathbf{z})$. The bank set consists of $\alpha\%$ of features randomly sampled from the training set. Both $\alpha$ and $k$ are selected via validation AUROC, with $\alpha$ chosen from $\{1, 3, 5\}$ and $k$ from $\{1, 3, 5, 10\}$. The selected values are $k = 10$ / $\alpha = 1\%$ (ResNet-50), $k = 10$ / $\alpha = 5\%$ (Swin-B), $k = 10$ / $\alpha = 3\%$ (ViT-B), $k = 1$ / $\alpha = 5\%$ (CIFAR-10), and $k = 5$ / $\alpha = 5\%$ (CIFAR-100).

\textbf{Computing Infrastructure.} All experiments are conducted on a machine equipped with an AMD Ryzen 9 7950X CPU and an NVIDIA RTX 4090Ti GPU.
\section{Additional Experiments}
\label{appendix:additional_experiments}

\subsection{Full Experimental Results by Dataset and Backbone}
\label{appendix:full_experimental_results}

This section provides detailed per-dataset and per-backbone OOD detection results to complement the aggregated results presented in Section~\ref{sec:experiment}. Specifically, Tables~\ref{tab:imagenet_resnet}--\ref{tab:imagenet_vit} report per-dataset results on ImageNet for ResNet-50, Swin-B, and ViT-B, respectively, and Tables~\ref{tab:cifar10_resnet}--\ref{tab:cifar100_resnet} report per-dataset results on CIFAR-10 and CIFAR-100. MahaVar achieves the best average performance among all Mahalanobis-based methods across all backbone architectures on ImageNet, and outperforms all evaluated baselines on average on Swin-B and ViT-B. On CIFAR-100, MahaVar achieves the best average performance across all evaluated methods, while on CIFAR-10 it remains competitive with the best-performing baselines. Notably, the performance gains on ViT-B are more modest compared to ResNet-50 and Swin-B, consistent with the insufficient penultimate layer dimension ($d = 768 < C - 1 = 999$) that prevents complete ETF formation as discussed in Section~\ref{sec:neural_collapse_explanation}.

\begin{table}[h!]
\centering
\caption{OOD detection performance on ImageNet with ResNet-50. Results are reported in terms of AUROC and FPR@95 for each OOD dataset. The best and second best results are highlighted in \textbf{bold} and \underline{underline}, respectively.}
\label{tab:imagenet_resnet}
\resizebox{\textwidth}{!}{
\begin{tabular}{lcccccccccccc}
\toprule
\multirow{3}{*}{Method} & \multicolumn{4}{c}{Near-OOD} & \multicolumn{6}{c}{Far-OOD} & \multicolumn{2}{c}{\multirow{2}{*}{Avg}} \\
\cmidrule(lr){2-5} \cmidrule(lr){6-11}
& \multicolumn{2}{c}{SSB-Hard} & \multicolumn{2}{c}{NINCO} & \multicolumn{2}{c}{iNaturalist} & \multicolumn{2}{c}{Texture} & \multicolumn{2}{c}{OpenImage-O} & & \\
\cmidrule(lr){2-3} \cmidrule(lr){4-5} \cmidrule(lr){6-7} \cmidrule(lr){8-9} \cmidrule(lr){10-11} \cmidrule(lr){12-13}
& AUROC & FPR@95 & AUROC & FPR@95 & AUROC & FPR@95 & AUROC & FPR@95 & AUROC & FPR@95 & AUROC & FPR@95 \\
\midrule
MSP         & 72.16 & 84.49 & 79.98 & 75.94 & 88.43 & 52.73 & 80.47 & 66.15 & 84.99 & 63.61 & 81.21 & 68.59 \\
MaxLogit    & 72.75 & 83.70 & 80.41 & 76.61 & 91.15 & 50.82 & 86.41 & 54.20 & 89.26 & 57.13 & 84.00 & 64.49 \\
KL-matching & 68.61 & 82.72 & 79.90 & 73.32 & 89.78 & 44.56 & 82.40 & 66.08 & 86.08 & 60.30 & 81.36 & 65.40 \\
Energy      & 72.35 & 83.82 & 79.70 & 77.56 & 90.61 & 53.83 & 86.74 & 52.23 & 89.15 & 57.09 & 83.71 & 64.91 \\
GEN         & 72.19 & 85.75 & 81.80 & 76.52 & 92.45 & 45.79 & 85.51 & 59.86 & 89.35 & 60.29 & 84.26 & 65.64 \\
ReAct       & 73.08 & \underline{80.47} & 81.71 & 70.43 & 96.32 & 19.96 & 91.62 & 41.44 & 91.88 & 41.33 & 86.92 & 50.72 \\
ASH-S       & \underline{74.59} & 80.89 & 84.42 & \underline{63.51} & \underline{97.70} & \underline{12.33} & 97.11 & 13.60 & \underline{93.83} & \underline{30.66} & \underline{89.53} & \underline{40.20} \\
SCALE       & \textbf{77.23} & \textbf{78.00} & \textbf{85.27} & \textbf{61.79} & \textbf{98.00} & \textbf{10.52} & 96.73 & 14.65 & \textbf{93.97} & \textbf{29.92} & \textbf{90.24} & \textbf{38.97} \\
KNN         & 60.41 & 91.33 & 77.79 & 77.17 & 83.63 & 63.34 & 95.97 & 16.44 & 83.97 & 60.13 & 80.35 & 61.68 \\
NCI         & 66.41 & 88.99 & 80.14 & 77.83 & 92.65 & 46.80 & 91.86 & 40.46 & 90.48 & 54.99 & 84.31 & 61.82 \\
NCI+filter  & 71.43 & 83.55 & 83.26 & 68.97 & 96.43 & 20.51 & 95.26 & 21.72 & 93.39 & 36.36 & 87.96 & 46.22 \\
fDBD        & 70.25 & 86.67 & 82.58 & 74.65 & 93.70 & 40.34 & 92.10 & 37.73 & 91.15 & 52.61 & 85.96 & 58.40 \\
ViM         & 65.11 & 90.53 & 78.55 & 77.82 & 89.55 & 59.21 & 97.53 & 11.08 & 90.40 & 51.33 & 84.23 & 57.99 \\
NNGuide     & 73.67 & 80.61 & 81.09 & 70.30 & 94.50 & 29.16 & 92.25 & 29.15 & 91.57 & 41.36 & 86.62 & 50.11 \\
\midrule
Mahalanobis     & 47.34 & 97.52 & 62.46 & 94.81 & 64.16 & 93.81 & 89.91 & 43.94 & 69.01 & 88.10 & 66.58 & 83.64 \\
Mahalanobis++   & 66.34 & 86.21 & 83.11 & 67.39 & 95.13 & 24.87 & \underline{97.88} & \textbf{9.54} & 92.24 & 39.30 & 86.94 & 45.46 \\
\textbf{MahaVar (Ours)} & 68.72 & 83.59 & \underline{84.44} & 63.90 & 96.90 & 14.57 & \textbf{97.91} & \underline{9.68} & 92.77 & 36.25 & 88.15 & 41.60 \\
\bottomrule
\end{tabular}
}
\end{table}

\begin{table}[h!]
\centering
\caption{OOD detection performance on ImageNet with Swin-B. Results are reported in terms of AUROC and FPR@95 for each OOD dataset. The best and second best results are highlighted in \textbf{bold} and \underline{underline}, respectively.}
\label{tab:imagenet_swin}
\resizebox{\textwidth}{!}{
\begin{tabular}{lcccccccccccc}
\toprule
\multirow{3}{*}{Method} & \multicolumn{4}{c}{Near-OOD} & \multicolumn{6}{c}{Far-OOD} & \multicolumn{2}{c}{\multirow{2}{*}{Avg}} \\
\cmidrule(lr){2-5} \cmidrule(lr){6-11}
& \multicolumn{2}{c}{SSB-Hard} & \multicolumn{2}{c}{NINCO} & \multicolumn{2}{c}{iNaturalist} & \multicolumn{2}{c}{Texture} & \multicolumn{2}{c}{OpenImage-O} & & \\
\cmidrule(lr){2-3} \cmidrule(lr){4-5} \cmidrule(lr){6-7} \cmidrule(lr){8-9} \cmidrule(lr){10-11} \cmidrule(lr){12-13}
& AUROC & FPR@95 & AUROC & FPR@95 & AUROC & FPR@95 & AUROC & FPR@95 & AUROC & FPR@95 & AUROC & FPR@95 \\
\midrule
MSP         & 71.30 & 80.20 & 78.42 & 71.66 & 86.46 & 50.79 & 78.72 & 65.37 & 81.76 & 62.34 & 79.33 & 66.07 \\
MaxLogit    & 65.22 & 81.38 & 71.87 & 72.58 & 78.89 & 55.39 & 73.87 & 62.27 & 70.92 & 66.17 & 72.15 & 67.56 \\
KL-matching & 71.07 & 80.05 & 80.76 & 69.11 & 90.05 & 45.72 & 82.87 & 62.07 & 87.57 & 55.93 & 82.46 & 62.58 \\
Energy      & 59.27 & 86.00 & 64.28 & 79.82 & 67.79 & 73.19 & 69.28 & 66.29 & 60.40 & 76.75 & 64.20 & 76.41 \\
GEN         & 74.20 & 78.55 & 82.96 & 64.66 & 92.01 & 34.05 & 84.63 & 53.23 & 87.65 & 48.81 & 84.29 & 55.86 \\
ReAct       & 68.61 & 84.90 & 80.24 & 75.16 & 88.60 & 58.96 & 84.05 & 58.87 & 85.83 & 62.41 & 81.47 & 68.06 \\
ASH-S       & 35.54 & 98.21 & 26.43 & 99.06 & 15.22 & 99.69 & 24.15 & 98.33 & 16.83 & 99.40 & 23.63 & 98.94 \\
SCALE       & 55.97 & 92.86 & 62.51 & 89.69 & 63.10 & 91.18 & 69.60 & 79.27 & 66.71 & 84.48 & 63.58 & 87.50 \\
KNN         & 71.92 & 85.84 & 83.08 & 72.47 & 92.32 & 48.32 & 88.51 & \underline{48.78} & 91.62 & 49.66 & 85.49 & 61.01 \\
NCI         & 70.40 & 84.68 & 81.47 & 72.01 & 91.76 & 47.20 & 85.59 & 57.70 & 90.78 & 49.23 & 84.00 & 62.16 \\
NCI+filter  & 70.45 & 84.54 & 81.53 & 71.93 & 91.81 & 46.80 & 85.66 & 57.66 & 90.79 & 49.22 & 84.05 & 62.03 \\
fDBD        & 69.96 & 85.04 & 81.22 & 72.51 & 91.88 & 47.64 & 86.21 & 57.48 & 91.03 & 48.96 & 84.06 & 62.33 \\
ViM         & 74.46 & 82.03 & 84.54 & 69.26 & 94.74 & 33.74 & 86.75 & 54.73 & 92.83 & 41.52 & 86.67 & 56.26 \\
NNGuide     & 73.13 & 84.10 & 84.93 & 68.22 & 93.58 & 39.85 & \textbf{89.93} & \textbf{44.63} & 92.31 & 47.32 & 86.77 & 56.82 \\
\midrule
Mahalanobis     & 74.62 & 81.88 & 85.08 & 68.73 & 94.55 & 31.21 & 86.69 & 60.87 & 92.76 & 40.91 & 86.74 & 56.72 \\
Mahalanobis++   & \underline{75.66} & \underline{74.45} & \underline{86.40} & \underline{60.16} & \underline{95.71} & \underline{22.43} & \underline{88.58} & 50.46 & \underline{93.48} & \underline{36.45} & \underline{87.97} & \underline{48.79} \\
\textbf{MahaVar (Ours)} & \textbf{77.28} & \textbf{70.26} & \textbf{88.77} & \textbf{55.22} & \textbf{97.06} & \textbf{15.33} & 88.53 & 50.71 & \textbf{94.12} & \textbf{33.58} & \textbf{89.15} & \textbf{45.02} \\
\bottomrule
\end{tabular}
}
\end{table}

\begin{table}[h!]
\centering
\caption{OOD detection performance on ImageNet with ViT-B. Results are reported in terms of AUROC and FPR@95 for each OOD dataset. The best and second best results are highlighted in \textbf{bold} and \underline{underline}, respectively.}
\label{tab:imagenet_vit}
\resizebox{\textwidth}{!}{
\begin{tabular}{lcccccccccccc}
\toprule
\multirow{3}{*}{Method} & \multicolumn{4}{c}{Near-OOD} & \multicolumn{6}{c}{Far-OOD} & \multicolumn{2}{c}{\multirow{2}{*}{Avg}} \\
\cmidrule(lr){2-5} \cmidrule(lr){6-11}
& \multicolumn{2}{c}{SSB-Hard} & \multicolumn{2}{c}{NINCO} & \multicolumn{2}{c}{iNaturalist} & \multicolumn{2}{c}{Texture} & \multicolumn{2}{c}{OpenImage-O} & & \\
\cmidrule(lr){2-3} \cmidrule(lr){4-5} \cmidrule(lr){6-7} \cmidrule(lr){8-9} \cmidrule(lr){10-11} \cmidrule(lr){12-13}
& AUROC & FPR@95 & AUROC & FPR@95 & AUROC & FPR@95 & AUROC & FPR@95 & AUROC & FPR@95 & AUROC & FPR@95 \\
\midrule
MSP         & 69.03 & 84.77 & 78.08 & 73.32 & 88.16 & 51.61 & 83.00 & 60.27 & 84.85 & 59.96 & 80.62 & 65.99 \\
MaxLogit    & 64.45 & 85.72 & 72.39 & 74.21 & 85.25 & 52.38 & 81.70 & 56.86 & 81.61 & 58.82 & 77.08 & 65.60 \\
KL-matching & 68.47 & 83.72 & 81.10 & 72.03 & 90.18 & 49.87 & 85.22 & 62.68 & 87.51 & 59.37 & 82.50 & 65.53 \\
Energy      & 59.16 & 88.16 & 66.05 & 78.45 & 79.26 & 64.06 & 79.32 & 58.46 & 76.49 & 65.05 & 72.05 & 70.83 \\
GEN         & 69.68 & 86.14 & 81.36 & 72.01 & 92.92 & 39.89 & 87.85 & 51.40 & 89.50 & 51.93 & 84.26 & 60.28 \\
ReAct       & 63.19 & 88.30 & 75.47 & 78.60 & 86.00 & 65.22 & 84.67 & 57.04 & 84.23 & 64.88 & 78.71 & 70.81 \\
ASH-S       & 34.41 & 97.91 & 22.87 & 99.42 &  9.75 & 99.96 & 29.21 & 98.48 & 18.19 & 99.60 & 22.89 & 99.07 \\
SCALE       & 56.78 & 88.75 & 61.34 & 79.94 & 73.62 & 69.13 & 77.31 & 59.36 & 72.51 & 67.33 & 68.31 & 72.90 \\
KNN         & 66.54 & 88.96 & 82.22 & 77.75 & 90.53 & 58.95 & 89.11 & 52.73 & 89.10 & 61.45 & 83.50 & 67.97 \\
NCI         & 66.43 & 87.06 & 80.75 & 74.65 & 90.09 & 54.78 & 85.71 & 56.88 & 88.48 & 58.93 & 82.29 & 66.46 \\
NCI+filter  & 66.80 & 86.79 & 80.88 & 74.31 & 90.31 & 53.37 & 85.94 & 56.19 & 88.50 & 58.28 & 82.48 & 65.79 \\
fDBD        & 65.71 & 87.25 & 80.74 & 74.31 & 90.49 & 53.07 & 86.56 & 55.85 & 89.03 & 57.61 & 82.51 & 65.62 \\
ViM         & 69.33 & 79.21 & 84.58 & 61.94 & 95.54 & 25.29 & 88.88 & \textbf{45.46} & 92.03 & 43.28 & 86.07 & 51.04 \\
NNGuide     & 68.20 & 87.62 & 83.63 & 73.85 & 92.35 & 49.16 & \textbf{89.92} & \underline{49.17} & 90.45 & 58.20 & 84.91 & 63.60 \\
\midrule
Mahalanobis     & 71.42 & 75.69 & 86.44 & \textbf{57.77} & 95.99 & \underline{18.56} & 87.65 & 52.73 & 92.33 & \textbf{39.80} & 86.77 & \textbf{48.91} \\
Mahalanobis++   & \underline{72.51} & \underline{74.69} & \underline{87.19} & 58.32 & \underline{96.23} & 19.35 & \underline{89.22} & 52.85 & \underline{92.77} & 41.96 & \underline{87.58} & 49.43 \\
\textbf{MahaVar (Ours)} & \textbf{73.06} & \textbf{73.75} & \textbf{87.54} & \underline{57.98} & \textbf{96.49} & \textbf{18.47} & 89.17 & 53.88 & \textbf{93.09} & \underline{40.65} & \textbf{87.87} & \underline{48.95} \\
\bottomrule
\end{tabular}
}
\end{table}

\begin{table}[h!]
\centering
\caption{OOD detection performance on CIFAR-10 with ResNet-18. Results are reported in terms of AUROC and FPR@95 for each OOD dataset. The best and second best results are highlighted in \textbf{bold} and \underline{underline}, respectively.}
\label{tab:cifar10_resnet}
\resizebox{\textwidth}{!}{
\begin{tabular}{lcccccccccccccc}
\toprule
\multirow{3}{*}{Method} & \multicolumn{4}{c}{Near-OOD} & \multicolumn{8}{c}{Far-OOD} & \multicolumn{2}{c}{\multirow{2}{*}{Avg}} \\
\cmidrule(lr){2-5} \cmidrule(lr){6-13}
& \multicolumn{2}{c}{TIN} & \multicolumn{2}{c}{CIFAR-100} & \multicolumn{2}{c}{MNIST} & \multicolumn{2}{c}{SVHN} & \multicolumn{2}{c}{Texture} & \multicolumn{2}{c}{Places365} & & \\
\cmidrule(lr){2-3} \cmidrule(lr){4-5} \cmidrule(lr){6-7} \cmidrule(lr){8-9} \cmidrule(lr){10-11} \cmidrule(lr){12-13} \cmidrule(lr){14-15}
& AUROC & FPR@95 & AUROC & FPR@95 & AUROC & FPR@95 & AUROC & FPR@95 & AUROC & FPR@95 & AUROC & FPR@95 & AUROC & FPR@95 \\
\midrule
MSP         & 86.73 & 58.84 & 86.13 & 61.86 & 97.51 & 16.08 & 90.50 & 58.41 & 88.30 & 58.76 & 87.89 & 56.38 & 89.51 & 51.72 \\
MaxLogit    & 86.36 & \underline{48.52} & 84.95 & \textbf{53.75} & \underline{99.30} & \underline{3.52} & 90.92 & \underline{45.21} & 87.67 & 48.85 & 88.11 & 44.60 & 89.55 & \underline{40.74} \\
KL-matching & 83.30 & 58.87 & 82.88 & 61.82 & 96.82 & 18.10 & 86.53 & 59.37 & 84.63 & 58.87 & 84.59 & 55.73 & 86.46 & 52.12 \\
Energy      & 86.44 & \textbf{48.25} & 85.00 & \underline{53.82} & \textbf{99.43} & \textbf{3.06} & 91.04 & \textbf{44.37} & 87.74 & 48.78 & 88.22 & 44.03 & 89.64 & \textbf{40.38} \\
GEN         & 86.98 & 50.72 & 85.98 & 55.52 & 99.04 & 5.83 & 91.12 & 48.90 & 88.51 & 50.66 & 88.44 & 47.45 & 90.01 & 43.18 \\
ReAct       & 86.25 & 50.24 & 84.87 & 56.02 & 99.26 & 4.40 & 88.74 & 53.55 & 86.62 & 53.60 & \underline{90.20} & 42.83 & 89.32 & 43.44 \\
ASH-S       & 81.52 & 51.39 & 79.26 & 57.06 & 98.50 & 9.25 & 83.20 & 56.87 & 81.87 & 51.58 & 87.05 & 44.10 & 85.23 & 45.04 \\
SCALE       & 82.58 & 50.77 & 80.41 & 56.73 & 98.87 & 6.28 & 84.89 & 55.86 & 82.71 & 52.93 & 87.79 & \underline{42.23} & 86.21 & 44.13 \\
KNN         & \textbf{89.08} & 52.07 & \textbf{88.70} & 54.99 & 96.34 & 23.52 & \underline{92.09} & 50.99 & 91.81 & 47.68 & 90.03 & 49.44 & \textbf{91.34} & 46.45 \\
NCI         & 87.80 & 54.25 & 87.22 & 56.93 & 94.99 & 33.73 & 89.60 & 57.71 & 91.30 & 48.16 & 89.11 & 52.17 & 90.00 & 50.49 \\
NCI+filter  & 87.80 & 54.25 & 87.22 & 56.93 & 94.99 & 33.73 & 89.60 & 57.71 & 91.30 & 48.16 & 89.11 & 52.17 & 90.00 & 50.49 \\
fDBD        & 86.96 & 55.21 & 86.39 & 57.56 & 90.70 & 47.32 & 86.19 & 61.54 & 89.51 & 50.99 & 89.42 & 51.56 & 88.19 & 54.03 \\
ViM         & 87.19 & 59.16 & 87.17 & 56.96 & 88.33 & 97.36 & \textbf{92.76} & 45.64 & \textbf{94.15} & \textbf{33.55} & 87.87 & 57.58 & 89.58 & 58.37 \\
NNGuide     & 86.65 & 48.21 & 85.32 & 53.42 & 99.00 & 5.51 & 89.79 & 48.64 & 88.84 & 46.97 & 88.75 & 43.85 & 89.72 & 41.10 \\
\midrule
Mahalanobis     & 85.48 & 66.08 & 85.70 & 64.39 & 86.57 & 98.39 & 89.24 & 64.62 & 92.22 & \underline{44.61} & 86.71 & 64.18 & 87.65 & 67.04 \\
Mahalanobis++   & 88.02 & 54.79 & 87.76 & 57.09 & 95.48 & 28.68 & 90.24 & 60.15 & 91.27 & 50.14 & 89.40 & 50.42 & 90.36 & 50.21 \\
\textbf{MahaVar (Ours)} & \underline{88.79} & 49.41 & \underline{88.10} & 54.50 & 98.02 & 11.66 & 87.10 & 76.64 & \underline{92.31} & 44.73 & \textbf{91.16} & \textbf{39.59} & \underline{90.91} & 46.09 \\
\bottomrule
\end{tabular}
}
\end{table}

\begin{table}[h!]
\centering
\caption{OOD detection performance on CIFAR-100 with ResNet-18. Results are reported in terms of AUROC and FPR@95 for each OOD dataset. The best and second best results are highlighted in \textbf{bold} and \underline{underline}, respectively.}
\label{tab:cifar100_resnet}
\resizebox{\textwidth}{!}{
\begin{tabular}{lcccccccccccccc}
\toprule
\multirow{3}{*}{Method} & \multicolumn{4}{c}{Near-OOD} & \multicolumn{8}{c}{Far-OOD} & \multicolumn{2}{c}{\multirow{2}{*}{Avg}} \\
\cmidrule(lr){2-5} \cmidrule(lr){6-13}
& \multicolumn{2}{c}{TIN} & \multicolumn{2}{c}{CIFAR-10} & \multicolumn{2}{c}{MNIST} & \multicolumn{2}{c}{SVHN} & \multicolumn{2}{c}{Texture} & \multicolumn{2}{c}{Places365} & & \\
\cmidrule(lr){2-3} \cmidrule(lr){4-5} \cmidrule(lr){6-7} \cmidrule(lr){8-9} \cmidrule(lr){10-11} \cmidrule(lr){12-13} \cmidrule(lr){14-15}
& AUROC & FPR@95 & AUROC & FPR@95 & AUROC & FPR@95 & AUROC & FPR@95 & AUROC & FPR@95 & AUROC & FPR@95 & AUROC & FPR@95 \\
\midrule
MSP         & 80.11 & 75.54 & 78.54 & 79.97 & 74.45 & 88.53 & 79.37 & 79.29 & 78.07 & 79.61 & 77.17 & 80.24 & 77.95 & 80.53 \\
MaxLogit    & 80.69 & 74.71 & \underline{79.20} & 79.40 & 77.82 & 86.21 & 82.33 & 77.22 & 79.36 & 78.72 & 77.36 & 80.20 & 79.46 & 79.41 \\
KL-matching & 80.00 & 74.96 & 77.91 & 79.66 & 75.76 & 83.82 & 80.21 & 75.37 & 78.61 & 75.64 & 77.15 & 78.78 & 78.27 & 78.04 \\
Energy      & 80.51 & 74.91 & 79.01 & 79.66 & 78.22 & 84.45 & 82.67 & 76.03 & 79.33 & 77.91 & 77.12 & 80.51 & 79.48 & 78.91 \\
GEN         & 80.99 & 74.65 & \textbf{79.33} & 79.50 & 77.11 & 86.25 & 81.87 & 76.91 & 79.51 & 78.46 & 77.72 & 80.13 & 79.42 & 79.32 \\
ReAct       & 80.63 & 75.02 & 78.71 & 79.76 & 77.47 & 84.48 & 82.48 & 76.10 & 80.55 & 77.57 & 77.51 & 80.66 & 79.56 & 78.93 \\
ASH-S       & 80.23 & 75.29 & 79.17 & \underline{79.34} & 80.08 & \underline{80.06} & 85.84 & 66.58 & 81.79 & 72.80 & 78.12 & 78.68 & 80.87 & 75.46 \\
SCALE       & 80.47 & 74.66 & 79.24 & \textbf{79.16} & 79.62 & 81.63 & 85.14 & 68.94 & 81.28 & 74.29 & 78.09 & 78.35 & 80.64 & 76.17 \\
KNN         & 79.44 & 77.07 & 75.02 & 82.78 & 76.44 & 84.07 & 78.10 & 78.30 & 81.66 & 70.43 & 75.09 & 81.59 & 77.63 & 79.04 \\
NCI         & 81.61 & 74.06 & 78.25 & 81.16 & 78.04 & 87.75 & 82.99 & 76.46 & 84.18 & 71.86 & 78.58 & 79.35 & 80.61 & 78.44 \\
NCI+filter  & 81.61 & 74.06 & 78.25 & 81.16 & 78.04 & 87.75 & 82.99 & 76.46 & 84.18 & 71.86 & 78.58 & 79.35 & 80.61 & 78.44 \\
fDBD        & \underline{81.63} & \underline{73.55} & 78.24 & 80.72 & 77.51 & 87.34 & 81.56 & 77.05 & 82.96 & 72.71 & 78.06 & 79.19 & 79.99 & 78.43 \\
ViM         & 76.61 & 83.90 & 71.70 & 87.94 & \textbf{81.67} & 83.27 & 84.12 & 74.39 & 86.31 & 59.31 & 73.91 & 84.45 & 79.05 & 78.88 \\
NNGuide     & 80.71 & 75.47 & 78.59 & 82.03 & 80.23 & 82.73 & 80.83 & 78.32 & 81.88 & 73.10 & 77.03 & 80.88 & 79.88 & 78.76 \\
\midrule
Mahalanobis     & 62.68 & 93.87 & 56.45 & 95.72 & 67.81 & 96.46 & 72.81 & 91.93 & 76.96 & 73.16 & 63.31 & 92.94 & 66.67 & 90.68 \\
Mahalanobis++   & 80.96 & 75.05 & 76.97 & 82.07 & \underline{81.09} & \textbf{78.24} & \underline{86.92} & \underline{58.43} & \underline{86.81} & \underline{57.11} & \underline{79.28} & \underline{73.45} & \underline{82.01} & \underline{70.72} \\
\textbf{MahaVar (Ours)} & \textbf{81.73} & \textbf{71.83} & 76.42 & 82.25 & 80.88 & 82.20 & \textbf{88.39} & \textbf{57.00} & \textbf{88.73} & \textbf{51.79} & \textbf{82.23} & \textbf{65.91} & \textbf{83.06} & \textbf{68.50} \\
\bottomrule
\end{tabular}
}
\end{table}

\clearpage

\subsection{Effect of Variance Term and L2-Normalization}
\label{appendix:effect_of_var}
Table~\ref{tab:ablation_component} presents an ablation study on the contribution of the variance term and L2 normalization, where the hyperparameter $\alpha$ for each method is selected using the OpenImage-O validation set provided by OpenOOD v1.5. Adding the variance term to the standard Mahalanobis distance without L2 normalization (Mahalanobis + Var) already yields substantial improvements over the baseline, demonstrating that class-wise distance variance carries meaningful discriminative information for OOD detection. However, Mahalanobis + Var falls short of Mahalanobis++, suggesting that L2 normalization plays an important complementary role in stabilizing the feature geometry. MahaVar, which combines both L2 normalization and the variance term, achieves the best average performance across all datasets, confirming that the two components are mutually beneficial and that the variance term provides consistent gains on top of the strong L2-normalized Mahalanobis baseline.

\begin{table}[H]
\centering
\caption{Ablation study on the effect of the variance term and L2 normalization on ImageNet benchmark. Results are averaged over three backbone architectures: ResNet-50, Swin-B, and ViT-B. The best results are highlighted in \textbf{bold}.}
\label{tab:ablation_component}
\resizebox{\textwidth}{!}{
\begin{tabular}{lcccc|cccccc|cc}
\toprule
\multirow{3}{*}{Method} & \multicolumn{4}{c|}{Near-OOD} & \multicolumn{6}{c|}{Far-OOD} & \multicolumn{2}{c}{Avg} \\
\cmidrule(lr){2-5} \cmidrule(lr){6-11}
& \multicolumn{2}{c}{SSB-Hard} & \multicolumn{2}{c|}{NINCO} & \multicolumn{2}{c}{iNaturalist} & \multicolumn{2}{c}{Texture} & \multicolumn{2}{c|}{OpenImage-O} & & \\
\cmidrule(lr){2-3} \cmidrule(lr){4-5} \cmidrule(lr){6-7} \cmidrule(lr){8-9} \cmidrule(lr){10-11} \cmidrule(lr){12-13}
& AUROC & FPR@95 & AUROC & FPR@95 & AUROC & FPR@95 & AUROC & FPR@95 & AUROC & FPR@95 & AUROC & FPR@95 \\
\midrule
Mahalanobis        & 64.46 & 85.03 & 77.99 & 73.77 & 84.90 & 47.86 & 88.09 & 52.51 & 84.70 & 56.27 & 80.03 & 63.09 \\
Mahalanobis + Var  & 72.98 & 78.38 & 84.18 & 66.77 & 96.00 & 22.72 & 89.08 & 50.21 & 90.39 & 45.13 & 86.53 & 52.64 \\
Mahalanobis++      & 71.50 & 78.45 & 85.56 & 61.95 & 95.69 & 22.22 & \textbf{91.90} & \textbf{37.62} & 92.83 & 39.24 & 87.50 & 47.90 \\
MahaVar (Ours)     & \textbf{73.02} & \textbf{75.87} & \textbf{86.92} & \textbf{59.03} & \textbf{96.82} & \textbf{16.12} & 91.87 & 38.09 & \textbf{93.33} & \textbf{36.83} & \textbf{88.39} & \textbf{45.19} \\
\bottomrule
\end{tabular}
}
\end{table}

\subsection{Effect of Higher-Order Moment Terms}
\label{appendix:effect_of_higher_order}

To further investigate the discriminative structure of class-wise Mahalanobis distances, we explore incorporating the skewness of the distance distribution as an additional signal. Specifically, we extend the MahaVar score as follows:
\begin{equation}
S(x) = -\min_{c} d_{\text{Maha},c}(\hat{\phi}(x)) + \alpha \cdot \text{Var}_c[d_{\text{Maha},c}(\hat{\phi}(x))] + \beta \cdot \text{Skew}_c[d_{\text{Maha},c}(\hat{\phi}(x))]
\end{equation}
where $\beta \in \mathbb{R}$ controls the contribution and sign of the skewness term, selected via validation AUROC. As shown in Table~\ref{tab:ablation_skew}, incorporating skewness yields marginal but consistent improvements over MahaVar across most datasets, suggesting that the asymmetric structure of the distance distribution contains additional discriminative information beyond variance alone. Notably, the optimal sign of $\beta$ varies across backbone architectures: a positive $\beta$ is selected for ResNet-50, while a negative $\beta$ is selected for Swin-B and ViT-B, reflecting differences in the skewness structure of class-wise distances across feature geometries. However, since the gains are modest and the optimal sign of $\beta$ is architecture-dependent, we adopt the variance-only formulation as our main method.

\begin{table}[h!]
\centering
\caption{Ablation study on the effect of the skewness term on ImageNet benchmark. Results are averaged over three backbone architectures: ResNet-50, Swin-B, and ViT-B. The selected $\beta$ values are $10$ (ResNet-50), $-10$ (Swin-B), and $-10$ (ViT-B). The best results are highlighted in \textbf{bold}.}
\label{tab:ablation_skew}
\resizebox{\textwidth}{!}{
\begin{tabular}{lcccccccccccc}
\toprule
\multirow{2}{*}{Method} & \multicolumn{2}{c}{SSB-Hard} & \multicolumn{2}{c}{NINCO} & \multicolumn{2}{c}{iNaturalist} & \multicolumn{2}{c}{Texture} & \multicolumn{2}{c}{OpenImage-O} & \multicolumn{2}{c}{Avg} \\
\cmidrule(lr){2-3} \cmidrule(lr){4-5} \cmidrule(lr){6-7} \cmidrule(lr){8-9} \cmidrule(lr){10-11} \cmidrule(lr){12-13}
& AUROC & FPR@95 & AUROC & FPR@95 & AUROC & FPR@95 & AUROC & FPR@95 & AUROC & FPR@95 & AUROC & FPR@95 \\
\midrule
Mahalanobis++           & 71.50 & 78.45 & 85.56 & 61.95 & 95.69 & 22.22 & 91.90 & 37.62 & 92.83 & 39.24 & 87.50 & 47.90 \\
MahaVar (Ours)          & 73.02 & 75.87 & 86.92 & 59.03 & 96.82 & 16.12 & 91.87 & 38.09 & 93.33 & 36.83 & 88.39 & 45.19 \\
Mahalanobis++ + Skew    & 71.58 & 78.46 & 85.70 & 61.89 & 95.71 & 22.10 & \textbf{91.97} & \textbf{37.42} & 92.85 & 39.12 & 87.56 & 47.80 \\
MahaVar + Skew (Ours)   & \textbf{73.13} & \textbf{75.85} & \textbf{87.07} & \textbf{58.84} & \textbf{96.85} & \textbf{15.89} & 91.96 & 37.80 & \textbf{93.36} & \textbf{36.70} & \textbf{88.47} & \textbf{45.01} \\
\bottomrule
\end{tabular}
}
\end{table}

\subsection{Hyperparameter Sensitivity Analysis}
\label{appendix:alpha_sensitivity}

We analyze the sensitivity of MahaVar to the hyperparameter $\alpha$ controlling the contribution of the variance term. The analysis is conducted on the ImageNet validation set provided by OpenOOD v1.5, using in-distribution ImageNet validation samples and OOD OpenImage-O validation samples. The results are reported in Table~\ref{tab:ablation_alpha}. As $\alpha$ increases from zero, performance improves steadily across all backbone architectures, but eventually degrades as the variance term begins to dominate the nearest-class distance term. The optimal $\alpha$ lies in the range of 0.05 to 0.1 across all backbones, and we recommend practitioners to search within this range for reliable performance. 

\begin{table}[h!]
\centering
\caption{Sensitivity analysis of the hyperparameter $\alpha$ on the ImageNet 
validation set (OpenImage-O), reported in terms of AUROC. 
The best performance for each backbone is highlighted in \textbf{bold}.}
\label{tab:ablation_alpha}
\resizebox{\textwidth}{!}{
\begin{tabular}{lcccccccccccccccccccc}
\toprule
& \multicolumn{19}{c}{$\alpha$} \\
\cmidrule(lr){2-20}
Backbone & 0 & 0.0001 & 0.0003 & 0.0005 & 0.001 & 0.002 & 0.003 & 0.005 & 0.007 & 0.01 & 0.012 & 0.015 & 0.02 & 0.03 & 0.05 & 0.07 & 0.1 & 0.15 & 0.2 \\
\midrule
ResNet-50 & 92.25 & 92.26 & 92.26 & 92.26 & 92.28 & 92.30 & 92.32 & 92.36 & 92.40 & 92.45 & 92.49 & 92.53 & 92.60 & 92.69 & \textbf{92.75} & 92.64 & 92.20 & 90.95 & 89.30 \\
Swin-B    & 93.35 & 93.35 & 93.36 & 93.36 & 93.37 & 93.38 & 93.39 & 93.42 & 93.45 & 93.49 & 93.51 & 93.55 & 93.61 & 93.72 & 93.90 & 94.02 & \textbf{94.10} & 93.97 & 93.54 \\
ViT-B     & 92.28 & 92.28 & 92.29 & 92.29 & 92.29 & 92.30 & 92.31 & 92.32 & 92.33 & 92.35 & 92.37 & 92.38 & 92.41 & 92.46 & 92.52 & \textbf{92.55} & 92.50 & 92.24 & 91.73 \\
\bottomrule
\end{tabular}
}
\end{table}

\subsection{Effect of Distance Metric}
\label{appendix:distance_metric}
Table~\ref{tab:ablation_metric} reports the performance of MahaVar under different distance metrics. Across all metrics, incorporating the variance term consistently improves performance, confirming that class-wise distance variance provides meaningful discriminative information regardless of the choice of distance function. Among all combinations, MahaVar with the Mahalanobis distance achieves the best overall performance. We attribute this to the fact that the Mahalanobis distance accounts for the covariance structure of the feature space, providing a more geometrically faithful measure of class proximity than simple L1 or L2 distances, which in turn leads to the most discriminative OOD detection performance.

\begin{table}[h!]
\centering
\caption{Ablation study on the effect of distance metric on ImageNet benchmark. Results are averaged over three backbone architectures: ResNet-50, Swin-B, and ViT-B. The best results are highlighted in \textbf{bold}.}
\label{tab:ablation_metric}
\resizebox{\textwidth}{!}{
\begin{tabular}{lcccc|cccccc|cc}
\toprule
\multirow{3}{*}{Method} & \multicolumn{4}{c|}{Near-OOD} & \multicolumn{6}{c|}{Far-OOD} & \multicolumn{2}{c}{Avg} \\
\cmidrule(lr){2-5} \cmidrule(lr){6-11}
& \multicolumn{2}{c}{SSB-Hard} & \multicolumn{2}{c|}{NINCO} & \multicolumn{2}{c}{iNaturalist} & \multicolumn{2}{c}{Texture} & \multicolumn{2}{c|}{OpenImage-O} & & \\
\cmidrule(lr){2-3} \cmidrule(lr){4-5} \cmidrule(lr){6-7} \cmidrule(lr){8-9} \cmidrule(lr){10-11} \cmidrule(lr){12-13}
& AUROC & FPR@95 & AUROC & FPR@95 & AUROC & FPR@95 & AUROC & FPR@95 & AUROC & FPR@95 & AUROC & FPR@95 \\
\midrule
L1          & 70.61 & 81.10 & 84.26 & 66.67 & 94.23 & 31.27 & 91.23 & 40.54 & 91.70 & 44.88 & 86.41 & 52.89 \\
L1 + Var    & 72.12 & 77.56 & 85.36 & 63.49 & 95.18 & 26.03 & 91.61 & 40.77 & 92.10 & 43.48 & 87.27 & 50.27 \\
L2          & 71.40 & 79.83 & 84.79 & 64.69 & 95.35 & 24.88 & 91.67 & 37.93 & 92.42 & 41.58 & 87.12 & 49.78 \\
L2 + Var    & 72.37 & 77.68 & 85.55 & 62.18 & 96.06 & 20.62 & \textbf{92.08} & \textbf{36.83} & 92.83 & 39.41 & 87.78 & 47.34 \\
Mahalanobis++ & 71.50 & 78.45 & 85.56 & 61.95 & 95.69 & 22.22 & 91.90 & 37.62 & 92.83 & 39.24 & 87.50 & 47.90 \\
MahaVar (Ours) & \textbf{73.02} & \textbf{75.87} & \textbf{86.92} & \textbf{59.03} & \textbf{96.82} & \textbf{16.12} & 91.87 & \textbf{36.83} & \textbf{93.33} & \textbf{36.83} & \textbf{88.39} & \textbf{45.19} \\
\bottomrule
\end{tabular}
}
\end{table}

\subsection{Effect of Top-$K$ Nearest Classes for Variance Computation}
\label{appendix:top_k_effect}
Table~\ref{tab:ablation_topk} reports the AUROC of MahaVar as a function of the number of nearest classes $K$ used for variance computation, evaluated on the ImageNet benchmark across three backbone architectures. While the sharp minimum structure observed in Section~\ref{observation_sharp} suggests that the distance structure near the closest class already carries rich discriminative information, our experiments reveal that incorporating distances to a larger number of classes consistently improves performance. When $K$ is small, the variance term provides little additional discriminative information beyond the nearest-class distance, as reflected by $\alpha = 0$ being selected in many settings. This effect is particularly pronounced for ResNet-50, where $\alpha = 0$ is selected for all $K \leq 100$, whereas using all classes ($K = 1000$) leads to substantial AUROC gains. Across all three backbones, AUROC generally improves as $K$ increases, suggesting that incorporating the distance structure of a larger number of classes enriches the variance signal. The exception is ViT-B, where $K = 500$ achieves the best average AUROC rather than $K = 1000$, which we attribute to the insufficient penultimate layer dimension ($d = 768 < C - 1 = 999$) that prevents complete ETF formation as discussed in Section~\ref{sec:neural_collapse_explanation}, causing the distances to farther classes to lose their geometric regularity and degrade the quality of the variance signal. Nevertheless, we report results with $K = 1000$ across all backbones including ViT-B in the main experiments, following a consistent evaluation protocol.

\begin{table}[h!]
\centering
\caption{Ablation study on the number of nearest classes $K$ used for variance 
computation on ImageNet, reported in terms of AUROC, evaluated 
across ResNet-50, Swin-B, and ViT-B. The best $\alpha$ for each 
setting is selected on the validation set, and the best results for each backbone are highlighted in \textbf{bold}.}
\label{tab:ablation_topk}
\resizebox{0.8\textwidth}{!}{
\begin{tabular}{llccccccc}
\toprule
Backbone & $K$ & SSB-Hard & NINCO & iNaturalist & Texture & OpenImage-O & Avg & Best $\alpha$ \\
\midrule
\multirow{10}{*}{ResNet-50}
& 5          & 66.34 & 83.11 & 95.13 & 97.88 & 92.24 & 86.94 & 0.0000 \\
& 10         & 66.34 & 83.11 & 95.13 & 97.88 & 92.24 & 86.94 & 0.0000 \\
& 20         & 66.34 & 83.11 & 95.13 & 97.88 & 92.24 & 86.94 & 0.0000 \\
& 30         & 66.34 & 83.11 & 95.13 & 97.88 & 92.24 & 86.94 & 0.0000 \\
& 50         & 66.34 & 83.11 & 95.13 & 97.88 & 92.24 & 86.94 & 0.0000 \\
& 100        & 66.34 & 83.11 & 95.13 & 97.88 & 92.24 & 86.94 & 0.0000 \\
& 200        & 66.25 & 83.02 & 95.16 & 97.85 & 92.25 & 86.91 & 0.0030 \\
& 400        & 66.33 & 82.79 & 95.61 & 97.72 & 92.36 & 86.96 & 0.0300 \\
& 500        & 66.69 & 83.05 & 95.73 & 97.77 & 92.40 & 87.13 & 0.0300 \\
& all (1000) & \textbf{68.72} & \textbf{84.44} & \textbf{96.90} & \textbf{97.91} & \textbf{92.77} & \textbf{88.15} & 0.0500 \\
\midrule
\multirow{10}{*}{Swin-B}
& 5          & 75.66 & 86.40 & 95.71 & 88.58 & 93.48 & 87.97 & 0.0000 \\
& 10         & 75.25 & 86.21 & 95.68 & 88.58 & 93.49 & 87.84 & 0.0010 \\
& 20         & 72.87 & 85.26 & 95.76 & 88.74 & 93.68 & 87.26 & 0.0200 \\
& 30         & 72.05 & 84.99 & 95.96 & 88.96 & 93.85 & 87.16 & 0.0500 \\
& 50         & 71.67 & 84.89 & 96.29 & 89.19 & 94.05 & 87.22 & 0.1000 \\
& 100        & 71.53 & 84.90 & 96.79 & 89.38 & 94.28 & 87.37 & 0.2000 \\
& 200        & 72.58 & 85.77 & 97.15 & 89.20 & 94.27 & 87.80 & 0.2000 \\
& 400        & 74.81 & 87.30 & 97.30 & 88.81 & 94.22 & 88.49 & 0.1500 \\
& 500        & 75.56 & 87.96 & 97.50 & 88.70 & 94.31 & 88.81 & 0.1500 \\
& all (1000) & \textbf{77.28} & \textbf{88.77} & 97.06 & \textbf{88.53} & \textbf{94.12} & \textbf{89.15} & 0.1000 \\
\midrule
\multirow{10}{*}{ViT-B}
& 5          & 72.51 & 87.19 & 96.23 & 89.22 & 92.77 & 87.58 & 0.0000 \\
& 10         & 72.37 & 87.12 & 96.21 & 89.23 & 92.77 & 87.54 & 0.0005 \\
& 20         & 71.12 & 86.53 & 96.14 & 89.32 & 92.82 & 87.18 & 0.0120 \\
& 30         & 70.56 & 86.28 & 96.20 & 89.37 & 92.87 & 87.06 & 0.0300 \\
& 50         & 70.56 & 86.33 & 96.39 & 89.40 & 92.96 & 87.13 & 0.0500 \\
& 100        & 70.66 & 86.46 & 96.73 & 89.34 & 93.07 & 87.25 & 0.1000 \\
& 200        & 71.23 & 86.87 & 97.04 & 89.12 & 93.09 & 87.47 & 0.1500 \\
& 400        & 72.39 & 87.62 & 97.02 & 89.09 & 93.10 & 87.84 & 0.1000 \\
& 500        & 72.68 & \textbf{88.00} & \textbf{97.34} & 89.01 & \textbf{93.21} & \textbf{88.05} & 0.1500 \\
& all (1000) & \textbf{73.06} & 87.54 & 96.49 & \textbf{89.17} & 93.09 & 87.87 & 0.0700 \\
\bottomrule
\end{tabular}
}
\end{table}

\subsection{Effect of Feature Centering in L2-Normalized Mahalanobis Detection}
\label{appendix:effect_centering}

In Theorem~\ref{theorem:variance_separation} and Corollary~\ref{corollary:variance_separation_always}, the theoretical analysis is conducted with $\hat{\phi}(\mathbf{x}) = (\phi(\mathbf{x}) - \bm{\mu}_G) / \|\phi(\mathbf{x}) - \bm{\mu}_G\|_2$, i.e., features are centered by the global mean $\bm{\mu}_G$ prior to L2 normalization, so that the simplex ETF structure applies directly to the class means $\bm{\mu}_c$ in the normalized space. To assess whether this centering step is beneficial in practice, we compare centered and non-centered variants of both Mahalanobis and MahaVar across all three backbone architectures on ImageNet, with results reported in Table~\ref{tab:centering_ablation}. For Swin-B and ViT-B, centering yields nearly identical performance to the non-centered variant ($\|\bm{\mu}_G\|_2 = 2.03$ and $1.67$, respectively), suggesting that the practical impact of centering is limited in these architectures. For ResNet-50, however, centering leads to a substantial performance degradation, which we attribute to the non-negative feature geometry induced by the terminal ReLU activation, reflected in its substantially larger global mean norm ($\|\bm{\mu}_G\|_2 = 21.5$). Despite the theoretical motivation for centering, we adopt the non-centered L2 normalization in the practical implementation of MahaVar, and leave a full theoretical treatment of the $\bm{\mu}_G \neq \mathbf{0}$ setting as well as a deeper understanding of the performance degradation observed for ResNet as future work.

\begin{table}[t]
\caption{Ablation study on the effect of feature centering on ImageNet benchmark.
Mahalanobis++ (ctr) and MahaVar L2 (ctr) denote variants where the
global mean $\bm{\mu}_G$ is subtracted prior to L2 normalization,
i.e., $\hat{\phi}(\mathbf{x}) = (\phi(\mathbf{x}) - \bm{\mu}_G) / \|\phi(\mathbf{x}) - \bm{\mu}_G\|_2$.
Results are reported in terms of AUROC and FPR@95 across three backbone
architectures. The best results for each backbone are highlighted in bold.}
\label{tab:centering_ablation}
\centering
\resizebox{\textwidth}{!}{
\begin{tabular}{lcccccccccccc}
\toprule
\multirow{2}{*}{Method} & \multicolumn{2}{c}{SSB-Hard} & \multicolumn{2}{c}{NINCO} & \multicolumn{2}{c}{iNaturalist} & \multicolumn{2}{c}{Texture} & \multicolumn{2}{c}{OpenImage-O} & \multicolumn{2}{c}{Avg} \\
\cmidrule(lr){2-3} \cmidrule(lr){4-5} \cmidrule(lr){6-7} \cmidrule(lr){8-9} \cmidrule(lr){10-11} \cmidrule(lr){12-13}
& AUROC & FPR@95 & AUROC & FPR@95 & AUROC & FPR@95 & AUROC & FPR@95 & AUROC & FPR@95 & AUROC & FPR@95 \\
\midrule
\multicolumn{13}{l}{\textit{ResNet-50}} \\
Mahalanobis++          & 66.34 & 86.21 & 83.11 & 67.39 & 95.13 & 24.87 & 97.88 & \textbf{9.54} & 92.24 & 39.30 & 86.94 & 45.46 \\
MahaVar       & 68.72 & 83.59 & \textbf{84.44} & \textbf{63.90} & \textbf{96.90} & \textbf{14.57} & \textbf{97.91} & 9.68 & \textbf{92.77} & \textbf{36.25} & \textbf{88.15} & \textbf{41.60} \\
Mahalanobis++ (ctr)    & 63.50 & 90.86 & 77.76 & 80.08 & 79.79 & 86.29 & 88.72 & 49.02 & 83.20 & 70.65 & 78.59 & 75.38 \\
MahaVar (ctr) & \textbf{70.02} & \textbf{79.28} & 81.13 & 70.23 & 94.99 & 29.82 & 91.79 & 41.01 & 88.14 & 56.44 & 85.21 & 55.36 \\
\midrule
\multicolumn{13}{l}{\textit{Swin-B}} \\
Mahalanobis++          & 75.66 & 74.45 & 86.40 & 60.16 & 95.71 & 22.43 & 88.58 & 50.46 & 93.48 & 36.45 & 87.97 & 48.79 \\
MahaVar      & \textbf{77.28} & 70.26 & \textbf{88.77} & 55.22 & \textbf{97.06} & 15.33 & 88.53 & 50.71 & \textbf{94.12} & 33.58 & 89.15 & 45.02 \\
Mahalanobis++ (ctr)    & 75.94 & 73.33 & 86.69 & 59.22 & 95.98 & 20.95 & 88.37 & 52.00 & 93.56 & 36.17 & 88.11 & 48.33 \\
MahaVar (ctr) & 77.19 & \textbf{70.11} & 88.73 & \textbf{54.92} & 97.05 & \textbf{15.18} & \textbf{88.82} & \textbf{49.47} & \textbf{94.12} & \textbf{33.21} & \textbf{89.18} & \textbf{44.58} \\
\midrule
\multicolumn{13}{l}{\textit{ViT-B}} \\
Mahalanobis++          & 72.51 & 74.69 & 87.19 & 58.32 & 96.23 & 19.35 & 89.22 & \textbf{52.85} & 92.77 & 41.96 & 87.58 & 49.43 \\
MahaVar       & \textbf{73.06} & 73.75 & \textbf{87.54} & \textbf{57.98} & \textbf{96.49} & \textbf{18.47} & 89.17 & 53.88 & \textbf{93.09} & \textbf{40.65} & \textbf{87.87} & \textbf{48.95} \\
Mahalanobis++ (ctr)    & 72.64 & 74.42 & 87.30 & 58.03 & 96.24 & 19.26 & \textbf{89.23} & 52.87 & 92.76 & 42.03 & 87.63 & 49.32 \\
MahaVar (ctr) & 73.00 & \textbf{73.66} & 87.48 & 58.12 & 96.44 & 18.64 & 89.11 & 54.33 & 93.05 & 40.89 & 87.82 & 49.13 \\
\bottomrule
\end{tabular}}
\end{table}

\section{Limitation}
\label{appendix:limitation}
MahaVar does not achieve the best overall performance on CIFAR-10, where 
KNN attains a higher AUROC, suggesting that non-parametric approaches may 
be more effective in settings with few classes. Furthermore, the theoretical 
analysis in this work is grounded in Neural Collapse geometry, which assumes 
well-trained networks operating near the terminal phase of training. Relaxing 
these assumptions to accommodate more general feature geometries would yield 
a broader theoretical framework applicable to a wider range of practical 
settings, and we leave this as a direction for future work.

\section{Broader Impacts}
\label{appendix:broder_impact}
This work proposes a post-hoc OOD detection method aimed at improving the reliability of deep neural networks in safety-critical applications. By enabling models to better identify OOD inputs, the proposed method can contribute to safer deployment of AI systems in domains such as autonomous driving, medical imaging, and industrial inspection. We foresee no significant negative societal impacts from this work.


\end{document}